\tikzset{near start abs/.style={xshift=1cm}}
\renewcommand{\raggedright}{\leftskip=0pt \rightskip=0pt plus 0cm}
\newcommand{\tabincell}[2]{\begin{tabular}{@{}#1@{}}#2\end{tabular}}
\newtheorem{lemma}{Lemma}
\newtheorem{remark}{Remark}
\newtheorem{corollary}{Corollary}
\newtheorem{definition}{Definition}
\newtheorem{assumption}{Assumption}
\newtheorem{proposition}{Proposition}
\newcommand{\A}{\mathbf{A}}
\newcommand{\B}{\mathbf{B}}
\newcommand{\D}{\mathbf{D}}
\renewcommand{\d}{\mathbf{d}}
\newcommand{\e}{\mathbf{e}}
\newcommand{\I}{\mathbf{I}}
\newcommand{\M}{\mathbf{M}}
\newcommand{\T}{\mathcal{T}}
\newcommand{\U}{\mathbf{U}}
\newcommand{\Dr}{\operatorname{D}}
\renewcommand{\P}{\mathbf{P}}
\newcommand{\G}{\mathbf{G}}
\renewcommand{\H}{\mathbf{H}}
\newcommand{\tr}{\operatorname{tr}}
\newcommand{\grad}{\operatorname{grad}}
\newcommand{\X}{\mathbf{X}}
\newcommand{\x}{\mathbf{x}}
\newcommand{\y}{\mathbf{y}}
\newcommand{\W}{\mathbf{W}}
\newcommand{\Y}{\mathbf{Y}}
\newcommand{\Z}{\mathbf{Z}}
\newcommand{\epsb}{\boldsymbol{\epsilon}}
\newcommand{\phib}{\boldsymbol{\phi}}
\newcommand{\Phib}{\boldsymbol{\Phi}}
\newcommand{\Psib}{\boldsymbol{\Psi}}
\newcommand{\Pib}{\boldsymbol{\Pi}}
\newcommand{\xib}{\boldsymbol{\xi}}
\newcommand{\Xib}{\boldsymbol{\Xi}}
\newcommand{\Thetab}{\boldsymbol{\Theta}}
\renewcommand{\L}{\mathfrak{P}}
\newcommand{\J}{\mathfrak{Q}}
\newcommand{\Mf}{\mathfrak{M}}
\renewcommand{\S}{\mathfrak{S}}
\newcommand{\St}{\mathfrak{St}}
\newcommand{\Gr}{\mathfrak{Gr}}
\providecommand{\customgenericname}{}
\newcommand{\newcustomtheorem}[2]{%
	\newenvironment{#1}[1]
	{%
		\renewcommand\customgenericname{#2}%
		\renewcommand\theinnercustomgeneric{##1}%
		\innercustomgeneric
	}
	{\endinnercustomgeneric}
}
\begin{document}
\title{
	Trace Quotient with Sparsity Priors for Learning Low Dimensional Image Representations
}

\author{
Xian~Wei,~\IEEEmembership{Member,~IEEE,}
Hao~Shen,~\IEEEmembership{Member,~IEEE,}
and~Martin~Kleinsteuber
\IEEEcompsocitemizethanks{
\IEEEcompsocthanksitem X. Wei is with with Fujian Institute of Research 
on the Structure of Matter, Chinese Academy of Sciences (CAS), China.
%and the Technical University of Munich, Germany.
 \protect
E-mail: xian.wei@tum.de
% Shanghai Jiaotong University, P.~R. China.
%
\IEEEcompsocthanksitem H. Shen is with the Technical University of Munich, Germany
and fortiss GmbH, Munich, Germany. \protect
E-mail: shen@fortiss.de
\IEEEcompsocthanksitem M. Kleinsteuber is with the Technical University of Munich, Germany
and Mercateo AG, Munich, Germany. \protect
E-mail: kleinsteuber@tum.de
\IEEEcompsocthanksitem This work has been supported 
by the German Research Foundation (DFG) under Grant 
No. KL 2189/9-1 and the CAS Pioneer Hundred Talents Program (Type C) under Grant No.2017-122.
%\IEEEcompsocthanksitem X. Wei and H. Shen contributed equally to this work.
%
}% <-this % stops an unwanted space
}

% The paper headers
\markboth{IEEE Transactions on Pattern Analysis and Machine 
Intelligence, Submitted Version}%
{Shell \MakeLowercase{\textit{et al.}}: Bare Demo of IEEEtran.cls for Computer Society Journals}
% The only time the second header will appear is for the odd numbered pages

\IEEEtitleabstractindextext{
\begin{abstract}
	This work studies the problem of learning appropriate low 
    dimensional image representations.
	We propose a generic algorithmic framework, which leverages two classic 
    representation learning paradigms, i.e., sparse representation and
    the trace quotient criterion, to disentangle underlying factors of variation
    in high dimensional images.
	Specifically, we aim to learn simple representations of low dimensional, discriminant
	factors by applying the trace quotient criterion 
	to well-engineered sparse representations. 
%		\textcolor{green}{Note: can we say linearly separable here?}
	%
	We construct a unified cost function, coined as the SPARse 
    LOW dimensional representation (\emph{SparLow}) function, 
    for jointly learning both a sparsifying dictionary and 
    a dimensionality reduction transformation.
	The \emph{SparLow} function is widely applicable for developing 
	various algorithms in three classic machine learning scenarios, namely, 
	unsupervised, supervised, and semi-supervised learning.
	In order to develop efficient joint learning algorithms for maximizing the 
	\emph{SparLow} function, we deploy a framework of sparse coding with appropriate 
	convex priors to ensure the sparse representations to be locally differentiable.
	Moreover, 
%	by leveraging the convenience of the first derivative of the sparse 
%	representation having a closed-form expression, 
	we develop an efficient geometric conjugate 
	gradient algorithm to maximize the \emph{SparLow} function on its underlying 
	Riemannian manifold.
	Performance of the proposed \emph{SparLow} algorithmic framework is investigated 
	on several image processing tasks, such as 3D data visualization, 
	face/digit recognition, and object/scene categorization.
\end{abstract}

% Note that keywords are not normally used for peerreview papers.
\begin{IEEEkeywords}
	Representation learning, 
	sparse representation, 
	trace quotient, 
    dictionary learning, 
    geometric conjugate gradient algorithm, supervised learning, unsupervised 
    learning, semi-supervised learning.
\end{IEEEkeywords}}

\maketitle

\IEEEdisplaynontitleabstractindextext
% \IEEEdisplaynontitleabstractindextext has no effect when using
\IEEEpeerreviewmaketitle

%\linenumbers
%%%%%%%%%%%%%%%%%%%%%%%%%%%%%%%%%%%%%%%%%%%%%%%%%%%%%%%%%%%%%%%%%%%
%%                           SECTION 1                           %%
%%%%%%%%%%%%%%%%%%%%%%%%%%%%%%%%%%%%%%%%%%%%%%%%%%%%%%%%%%%%%%%%%%%

\IEEEraisesectionheading{\section{Introduction}\label{sec:introduction}}
\IEEEPARstart{F}{inding} appropriate low dimensional representations of data
is a long-standing challenging problem in data processing and machine learning.
Particularly, suitable low dimensional image representations have demonstrated 
prominent capabilities and conveniences in various image processing applications, 
such as image visualization \cite{hint:science06,rowe:science00}, 
segmentation \cite{elha:pami13}, clustering \cite{elha:pami13,hexf:pami05}, 
and classification \cite{belh:pami97,yans:pami07}.
Recent development in representation learning confirms that proper 
data representations are the key to success of modern machine 
learning algorithms.
%
%since appropriate representations can disentangle specific information of the
%data to facilitate the learning task of interest, 
%see \cite{beng:pami13,lecu:nature15}."
%
One of its major challenges is how to automatically extract suitable representations 
of data by employing certain general-purpose learning mechanisms to promote solutions to 
machine learning problems \cite{beng:pami13,lecu:nature15}.
In this work, we aim to develop an effective two-layer representation
 learning paradigm for constructing low 
dimensional image representations, which is capable of revealing task-specific 
information in image processing.

%Recent development in representation learning, especially deep representation learning,
%confirms that proper data representations are the key to the success of modern machine 
%learning algorithms.
%%

%%%=============================================================%%%
%%                           SECTION 1.1                         %%
%%%=============================================================%%%    
\subsection{Related Work}
\label{sec:11}
\emph{Sparse representation} is a well-known powerful tool to 
explore structure of images for specific 
learning tasks \cite{elad:book10}.
%dono:tit06,
Images of interest are assumed to admit sparse 
representations with respect to a collection of atoms, known
as a \emph{dictionary}.
Namely, each image can be constructed as a linear combination 
of only a few atoms.
With such a model, atoms are explanatory factors that 
are capable of describing intrinsic structures of the images \cite{beng:pami13}.
Prominent applications in image processing include image 
reconstruction, super-resolution, denoising, and inpainting, 
e.g., \cite{ahar:tsp06,yang:tip10,hawe:tip13}.
The corresponding methods are often referred to as 
\emph{data-driven sparse representation}.
Moreover, for a given dictionary, sparse 
representations of images can be interpreted as extracted 
features of the images, which capture sufficient information to 
reconstruct the original images.
By feeding the sparse representations directly to classifiers or 
other state of the art methods, improving performance has been observed 
in various image processing applications, such as 
face recognition \cite{wrig:pami09}, motion segmentation \cite{elha:pami13}
and object categorization \cite{gaos:pami13, srin:tip15, yang:cvpr09}.
These observations suggest that appropriate sparse representations of images are capable 
of facilitating specific learning tasks.
We refer to these approaches as \emph{task-driven sparse representation}.

%%%%%%%%%%%%%%%%%%%%%%%%%%%%%%%%%%%%%%%%%%%%%%%%%%%%%%%%%%%%%%%%%%%%%%%%%%%%%%%%%%%%%%%%%%
Performance of task-driven sparse representation methods is known 
to depend significantly on the construction of dictionaries.
For example, for solving the problem of image classification,
class-specific dictionaries can be constructed by 
directly selecting images from each class, either randomly 
\cite{wrig:pami09} or according to certain structured priors 
\cite{yang:cvpr09,srin:tip15}, to achieve better results than state of the arts.
Dictionaries can also be learned with respect to specific criteria, such as locality 
of images \cite{wang:cvpr10, gaos:pami13}, class-specific discrimination 
\cite{perr:pami08}, optimal Fisher discrimination criterion \cite{yang:ijcv14}, 
and maximal mutual information \cite{qiuq:pami14}, to further improve
performance in image classification.
Alternatively, sparse representations can also be combined with the 
classical expected risk minimization formulation, such as 
\emph{least squares loss} 
\cite{zhan:cvpr10, jian:pami13, Zhou:tip17}, \emph{logistic loss} \cite{mair:pami12},
and \emph{square hinge loss} \cite{yang:cvpr10}.
%
%In other words, appropriate sparse representations are capable of
%entangling the task-specific explanatory information of images 
%with respect to specific learning problems,
%
In the literature, these methods are referred to as the \emph{task-driven 
dictionary learning (\emph{TDDL})} \cite{mair:pami12}. 
%
%By exploring the differentiability of solutions, the authors of 
%\cite{mair:pami12} develop a framework of task-driven dictionary learning.
%
% Note, that they are mainly devoted to supervised learning.
Note, that these methods are mainly devoted to supervised learning.

All aforementioned approaches often solve a problem of learning both 
a dictionary and a task-specific parameter, either sequentially or simultaneously.
Sparse representations of images are treated as inputs to task-specific learning 
algorithms, such as classifiers or predictors.
Early work in \cite{gkio:nips11} applies a linear projection to 
extract low dimensional features that preserve inner products of 
pair-wise sparse coefficients for facilitating image classification.
A similar approach in \cite{gaoj:prl12} shows that low dimensional features of sparse
representations of images, obtained by applying Principal Component Analysis (PCA) 
to sparse representations, are capable of enhancing performance in image visualization and clustering.
More recently, applying the spectral clustering framework to sparse representations 
of images yields the so-called \emph{Sparse Subspace Clustering} method 
\cite{elha:pami13}, leading to promising results
in motion segmentation and face clustering.
All these observations indicate that low dimensional features of sparse 
representations can be more task-specific in achieving higher performance.
Such a phenomenon can be studied in a more general framework of 
\emph{representation learning}, which aims to disentangle underlying 
factors of variation in data \cite{beng:pami13,lecu:nature15,beng:ftml09}.
%
%Specifically, further operations, such as linear projections, can be applied to the
%disentangled factors of variation to facilitate specific learning tasks, e.g.
%classification.
%
%Disentangling the factors of variation means encoding each factor of 
%variation in a different dimension of ?. 
%This will allow classification on a specific property or a combination 
%of them to be a linear operation in the space ?. The advantage of 
%such a representation is that it would be more compact and would allow 
%for better classification results.
%
Specifically, sparse representations of images can be 
regarded as a tool for modelling such factors in a first instance. 
In a second instance, a task-dependent combination of the sparse coefficients 
then has the potential to further improve the images' representations. 
%treated as
%the underlying factors of variation, and further combined features 
%of the sparse representations can provide more compact and 
%better
%representations for learning.
%
So far, such representation learning paradigms are often constructed 
as a separated two-stage unsupervised encoding scheme,
i.e., the further disentangling instrument is 
independent from the layer of sparse representation. 
%	This may make it fails to incorporate such a two-layer representation learning paradigm into a deeper learning architecture, 
%i.e., multiple levels of representation, that could enable prominent disentanglibility of underlying factors that explains discrepancy underneath
%the observed image data \cite{bengio2013representation_pami, lecun2015deep_nature}.
%We refer to these approaches as \emph{sparse low dimensional representation}.
% It is known that these strategies can be numerically expensive, and 
%	is often not guaranteed with an optimal dictionary. 
%We refer to these approaches as \emph{sparse low dimensional representation}.
%a two-layer representation learning framework was proposed to disentangle useful information hidden in sparse representations.
Although sparse coding can also been utilized to construct a deep learning 
architecture for extracting more abstract representations \cite{beng:pami13},
such an approach often requires special computing hardware and a very 
large amount of training images, 
resulting in prohibitive training efforts compared to two-layer learning approaches.

%%%=============================================================%%%
%%                           SECTION 1.2                         %%
%%%=============================================================%%%    
\subsection{Motivation and Main Contributions}
%
%It is well known that the main idea behind kernel learning
%is to generate high dimensional factors of variation, i.e.,
%RKHS features, to enable linear separability \cite{scho:book01}.
%%
%Similarly, (possible) linear separability can be enhanced with
%an increasing dimension of sparse representation for image classification
%\cite{tehy:jmlr03, ranz:nips06}.
%
%Hence, the 
The motivation of this work is to develop a generic two-layer 
representation joint learning framework that 
allows to extract low dimensional, discriminant image representations 
by linearly projecting high-dimensional sparse factors of variation 
onto a low dimensional subspace. 
Among various low dimensional learning instruments for 
discriminant factors of variation, the Trace 
Quotient (TQ) criterion is a simple but powerful, linear framework. 
For a two-class problem, perfect separation can be achieved via a
TQ minimization, when the classes are linearly separable \cite{koki:nlaa10}.
This generic criterion is shared by various classic dimensionality reduction (DR) 
methods, including
PCA, Linear Discriminant Analysis (LDA) \cite{belh:pami97},
Linear Local Embedding (LLE) \cite{rowe:science00}, Marginal Fisher Analysis (MFA) \cite{yans:pami07},
Orthogonal Neighborhood Preserving Projection (ONPP) \cite{koki:nlaa10}, 
Locality Preserving Projections (LPP) \cite{hexf:pami05}, Orthogonal LPP (OLPP) \cite{caid:tip06}, 
Spectral Clustering (SC) \cite{elha:pami13}, semi-supervised LDA (SDA) 
\cite{caid:iccv07}, etc. 

Recently, the authors of this paper proposed to employ the TQ criterion to further 
disentangle sparse representations of image data to facilitate unsupervised learning 
tasks \cite{weix:cvpr16}, and developed a joint disentangling framework, coined as 
SPARse LOW dimensional representation learning (\emph{SparLow}).
It has been further tested on supervised learning tasks \cite{weix:icpr16}.
Although numerical evidences of the proposed
framework have been provided in these two works, a complete investigation
of the \emph{SparLow} learning framework has not been systematically conducted.
Hence, the goal of this work is to fully investigate the potential of the \emph{SparLow} 
framework as a fundamental representation learning instrument in a broad 
spectrum of machine learning, i.e., \emph{supervised}, \emph{unsupervised}, and 
\emph{semi-supervised learning}, and to further explore its capacity in image processing applications.
The main contributions are as follows:
\begin{enumerate}
	\item The difficulty of constructing an efficient algorithm to optimize the
		\emph{SparLow} cost function lies in the differentiability of sparse 
		representations with respect to a given dictionary. 
		To address this issue, we consider the sparse coding problem by 
		minimizing a quadratic reconstruction error with appropriate convex sparsity 
		priors. 
		We show that when sparse representations of all data samples are unique, then
		the sparse representations can be interpreted as a locally differentiable
		function with respect to the dictionary, and the first derivative of such sparse
		representation has a closed-form expression.
		 
	\item The joint disentanglement problem results in solving an optimization 
		problem that is defined on an underlying Riemannian manifold.
		Although optimization on Riemannian manifolds is nowadays well established, 
		derivation of a Riemannian Conjugate Gradient (CG) \emph{SparLow} algorithm remains a sophisticated task.
		In this work, we present a generic framework of Riemannian CG 
		\emph{SparLow} algorithms with necessary technical details, so that both 
		theorists and practitioners can benefit from it.

	\item Finally, sensitivity of the \emph{SparLow} framework with respect to 
		its parameters is investigated by numerical experiments.
\end{enumerate}

Compared to state of the art  \emph{TDDL}  methods, the proposed \emph{SparLow} framework
shares the following three merits:
i) It introduces a generic formulation for learning both a dictionary
and an orthogonal DR transformation in unsupervised, supervised
and semi-supervised learning settings, in contrast to the existing
supervised \emph{TDDL} approaches \cite{yang:cvpr10, zhan:cvpr10, mair:pami12, jian:pami13} and 
two-layer unsupervised learning framework \cite{gaoj:prl12, elha:pami13};
%, mair:cvpr08
ii) Compared to the popular class-wise sparse coding approaches, which often learn
one dictionary for each class \cite{perr:pami08}, and employ a set
of binary classifiers in either ``one-versus-all" or ``one-versus-one"
scheme for multiclass classification \cite{mair:pami12, yang:cvpr10}, \emph{SparLow}
only learns a compact dictionary and an orthogonal projection
for all classes. It significantly reduces the computational complexity 
and memory burden for classifications problems with many classes;
iii) Different to the \emph{TDDL} methods 
that compute the sparse representation
by using the sparsity prior of Lasso or Elastic Net \cite{zouh:rssb05},  \emph{SparLow}
allows for more general convex sparsity priors. 

The paper is organized as follows. 
Section~\ref{sec:02} provides a brief review on both sparse representations and 
the trace quotient optimization.
In Section~\ref{sec:03}, we construct a generic cost function for learning both a
sparsifying dictionary and an orthogonal DR transformation,
and then discuss its several exemplifications in Section~\ref{sec:04}.
%Several exemplifications of the proposed generic \emph{SparLow} model are then discussed in 
%Section~\ref{sec:04}.
%In Section~\ref{sec:04}, we develop a geometric 
%CG algorithm on the underlying smooth manifold,
%
A geometric CG algorithm is developed in Section~\ref{sec:05},
%Several exemplifications of the proposed generic \emph{SparLow} model are discussed in 
%Section~\ref{sec:04}, 
together with their experimental evaluations presented in 
Section~\ref{sec:06}.
Finally, conclusions and outlook are given in Section~\ref{sec:07}.

%%%%%%%%%%%%%%%%%%%%%%%%%%%%%%%%%%%%%%%%%%%%%%%%%%%%%%%%%%%%%%%%%%%
%%                            SECTION 2                          %%
%%%%%%%%%%%%%%%%%%%%%%%%%%%%%%%%%%%%%%%%%%%%%%%%%%%%%%%%%%%%%%%%%%% 
\section{Sparse Coding and Trace Quotient}
\label{sec:02}
%In this section,  we briefly review some state of the art results of 
%both sparse coding and DR methods based on the trace quotient criterion.
In this section, we first briefly review some state of the art results of sparse
representations with convex priors, which formulate the first layer of the \emph{SparLow}.
Then, the TQ optimization based dimensionality reduction is presented,
which is used to construct the second layer of the \emph{SparLow}.
The presented hypothesis, restrictions and reformulations of both sparse
representations and dimensionality reduction 
 enable us to develop the joint learning paradigm of the \emph{SparLow}, which is fully
 investigated in Sections \ref{sec:03}, \ref{sec:04} and \ref{sec:05}.
%The unified ...is presented in the 
%we first briefly review some facts of sparse coding, which supports our learning procedure, 
%and then present one condition that can guarantee the success of coupled sparse coding under ℓ1-norm %sparsity measure

%	Among various low dimensional representation learning instruments, the Trace 
%Quotient (TQ) criterion is a simple but powerful framework for 
%\textcolor{red}{measuring the discriminative variations of data, i.e.,}
%disentangling underlying discriminative factors in data. 
%
%This may make it fails to incorporate such a two-layer representation learning paradigm into a deeper learning architecture, 
%i.e., multiple levels of representation, that could enable prominent disentanglibility of underlying factors that explains discrepancy underneath
%the observed image data \cite{bengio2013representation_pami, lecun2015deep_nature}.
%We refer to these approaches as \emph{sparse low dimensional representation}.
%It is known that these strategies can be numerically expensive, and 

We start with an introduction to notations and definitions used in the paper. 
In this paper, we denote sets and manifolds with fraktur letters, such as $\L$, $\S$, and $\Gr$,
and $|\L|$ the cardinality of the set $\L$.
Matrices are written as boldface capital letters like $\X$, $\Phib$, 
column vectors are denoted by boldfaced small letters, e.g., $\x$, 
$\d$, whereas scalars are either capital or small letters, such as $n$ and $N$. 
%
%By vi we denote
%the i th element of the vector v, vi j denotes the i th element in the jth column of a matrix V, while the ith column is referred to by vi .
%
%%%with $\e_{i} \in \mathbb{R}^{r}$ being the $i$-th basis vector of $\mathbb{R}^{r}$. 
%%%%
%%%where ${\Pi}_{r} := \I_{r} - \tfrac{1}{r}\one_{r} \one^{\top}_r$ is the centring matrix 
%%%in $\mathbb{R}^r$ with $\mathbf{1}_r = [1,\cdots,1]^{\top} \in \mathbb{R}^{r}$.
%
%Vectors are denoted by lower case letters and matrices by upper case ones.
%
We denote by 
$\I_{n}$ the $n \times n$-identity matrix, 
$(\cdot)^\top$ the matrix transpose, 
$\operatorname{tr}(\cdot)$ the trace of a square matrix.
%, and \textcolor{red}{$\operatorname{rk}(\cdot)$ the rank of a matrix}.
%
Furthermore, $\| \cdot \|_{1}$ and $\| \cdot \|_{2}$ denote
the $\ell_{1}$-, $\ell_{2}$-norm of a vector, and  $\| \cdot \|_{F}$ the Frobenius norm of a matrix.
%
%what %$\mathscr{f}$% 
%%$\mathcalligra{F}$
%$\mathpzc{Ff}$
%
%In the following, we briefly review some state of the art results of 
%both sparse coding and the trace quotient criterion.
%%%=============================================================%%%
%%                           SECTION 2.1                         %%
%%%=============================================================%%% 
%\vspace{-1.4mm}
\subsection{Sparse Coding with Convex Priors}
\label{sec:21}
Let $\X := [\x_{1}, \ldots, \x_{n}] \in \mathbb{R}^{m \times n}$
be a collection of $n$ data points in $\mathbb{R}^{m}$.
Sparse coding aims to find a collection of \emph{atoms} $\d_{i} \in
\mathbb{R}^{m}$ for $i = 1, \ldots, r$, such that each data point can be
approximated by a linear combination of a small subset of atoms. 
Specifically, by denoting $\D := [\d_{1}, \ldots, \d_{r}] 
\in \mathbb{R}^{m \times r}$, referred to as a \emph{dictionary}, all data
samples $\x_{i}$ for all $i=1, \ldots, n$ are assumed to be modeled as
\begin{equation}
\label{eq_DL_framework}
    \x_{i} = \D {\phib}_{i} + \epsb_{i},
\end{equation}
where ${\phib}_{i} \in \mathbb{R}^{r}$ is the corresponding sparse 
representation of $\x_{i}$, and $\epsb_{i} \in \mathbb{R}^m$ is a small additive 
residual, such as noise.
%
%In the rest of the paper, we refer to this problem as \emph{data-driven sparse coding}.

One popular solution of the sparse coding problem above 
is given by solving the following minimization problem
\begin{equation}
\label{eq:sparse_coding}
	\operatorname*{min}_{\D \in \mathbb{R}^{m \times r}, \Phib \in \mathbb{R}^{r \times n}}
	\sum_{i = 1}^{n} \tfrac{1}{2} \| \x_{i} - \D \phib_{i} \|_{2}^{2} + g(\phib_{i}),
\end{equation}
with $\Phib := [\phib_1, \ldots, \phib_n] \in \mathbb{R}^{r \times n}$.
Here, the first term penalizes the reconstruction error of sparse representations, and
the second term is a sparsity promoting regularizer.
Often, the function $g$ is chosen to be separable, i.e., its evaluation is computed 
as the sum of functions of the individual components of its argument.
\begin{definition}[Separable sparsity regularizer]
    Let $\phib := [\varphi_{1}, \ldots, \varphi_{r}]^{\top} \in \mathbb{R}^{r}$. 
    A funciton $g \colon \mathbb{R}^{r} \to [0, +\infty)$ is
    a separable sparsity regularizer if 
    \begin{equation}
    	g(\phib) := \sum\limits_{i=1}^{r} g_{i}(\varphi_{i}),
    \end{equation}
    with $g_{i}(\varphi_{i}) \ge 0$ and $g_{i}(0) = 0$.
\end{definition}
\noindent There are many choices for $g$ in the literature, such as the $\ell_{0}$-(quasi-)norm 
and its variations. 
It is important to notice that an optimization procedure to solve the 
problem as in Eq.~\eqref{eq:sparse_coding} will force the norm of columns of 
$\D$ to infinity, and consequently drive the value $g(\phib_{i})$ to zero.
To avoid such trivial solutions, it is common to restrict all atoms 
$\d_{i} \in \mathbb{R}^{m}$ to have unit norm, i.e., the set of dictionaries is
a product manifold of $r$ times the $(m-1)$-dimensional unit sphere, i.e.,
\begin{equation}
%\label{eq:06}
	\mathfrak{S}(m,r) := \left\{ \D \in \mathbb{R}^{m \times r} 
	\big| \| \d_{i}\|_{2} = 1 \right\}.
\end{equation}

If the dictionary $\D$ is fixed, Problem~\eqref{eq:sparse_coding} degenerates into a collection 
of decoupled sample-wise \emph{sparse regression} problems.
Specifically, for each sample $\x$, we have
\begin{equation}
\label{eq:sparse_regression_1}
%	\phib_{\D}(\x) := \operatorname*{argmin}_{\phib \in \mathbb{R}^{r}}
	\min_{\phib \in \mathbb{R}^{r}} 
    f_{\x}(\phib) := \tfrac{1}{2} \| \x - \D \phib \|_{2}^{2} + g(\phib).
\end{equation}
If the sparsity regularizer $g$ is \emph{strictly convex},
then the \emph{sparse regression} problem~\eqref{eq:sparse_regression_1} has a unique solution.
Thus, the solution of the sample-wise sparse regression problem 
can be treated as a function in $\x$, i.e., %, denoted by
\begin{equation}
\label{eq:sparse_regression}
	\phib_{\D}(\x) := \operatorname*{argmin}_{\phib \in \mathbb{R}^{r}}
	f_{\x}(\phib).
\end{equation}
With sparse representations of all samples being calculated, specific learning 
algorithms can be directly applied to these sparse coefficients to extract further
representations.

By choosing the function $g$ to be the \emph{elastic net 
regularizer} \cite{zouh:rssb05}, i.e.,
\begin{equation}
\label{eq:elastic}
	g_{e}(\phib) := \lambda_{1} \| \phib \|_{1} + \lambda_{2} \| \phib \|_{2}^{2},
\end{equation}
where parameters $\lambda_{1},\lambda_{2} > 0$ are chosen to ensure stability and 
uniqueness of the sparse solution,
for a given dictionary $\D$, the solution $\phib_{\D}(\x_{i})$ as in 
Eq.~\eqref{eq:sparse_regression} can be considered as a 
function in $\D$, i.e., $\phib_{\D} \colon \mathbb{R}^{m} \to \mathbb{R}^{r}$, with
a closed-form expression \cite{mair:pami12}.
Moreover, the sparse representation $\phib_{\D}$ is locally differentiable with 
respect to the dictionary $\D$.
Such a convenient result has lead to a joint learning approach to optimize the cost function 
associated with the \emph{TDDL} methods, e.g.,~\cite{yang:cvpr10, mair:pami12}.

Unfortunately, for a general choice of $g$, there is no guarantee to have a closed-form 
expression of the sparse representation. 
Recent work in \cite{bagn:nips09} proposes the choice of $g$ to be a full unnormalized 
Kullback-Leibler (KL) divergence.
Although the corresponding sparse representation has no closed-form expression, 
its first derivative does have an explicit formula, which enables the development of 
gradient-based optimization algorithms.
It is worth noticing that both the elastic net regularizer and the full unnormalized 
KL divergence regularizer belong to the category of convex sparsity regularizer.
Hence, we hypothesize that the choice of 
an appropriate convex sparsity prior can 
facilitate the development of efficient joint learning algorithms for discovering 
abstract representations of sparse coefficients of images.
Such a hypothesis is proven to be true in Section~\ref{sec:51}.

%%%=============================================================%%%
%%                           SECTION 2.2                         %%
%%%=============================================================%%%    
\subsection{Optimization of the Trace Quotient Criterion}
Classic DR methods aim to find a lower-dimensional
representation $\y_{i} \in \mathbb{R}^{l}$ of given
data samples $\x_{i} \in \mathbb{R}^{m}$ with $l<m$, via a 
mapping $\mu \colon \mathbb{R}^{m} \to \mathbb{R}^{l}$, which captures certain application dependent 
properties of the data.
Many classic DR methods restrict the mapping $\mu$ to be an orthogonal
projection.
Let us denote the set of $m \times l$ orthonormal matrices by
\begin{equation}
	\mathfrak{St}(l,m) := \big\{ \U \in \mathbb{R}^{m\times l}| \U^{\top} \U = \I_{l} \big\}.
\end{equation}
Specifically, in this work, we confine ourselves to the form of orthogonal projections as
$\mu(\x) := \U^{\top} \x$.
%
%\begin{equation}
%	\mu \colon \mathbb{R}^{m} \to \mathbb{R}^{l},
%	\qquad 
%	\mu(x) := V^{\top} x.
%\end{equation}
%
%In the category of unsupervised learning, 
This model covers a wide range of classic supervised and unsupervised learning
methods, such as LDA, MFA, PCA, OLPP, and ONPP.
Further details are given in Section~\ref{sec:04}.

One generic algorithmic framework to find optimal $\U \in \St(l,m)$
is formulated as a maximization problem of the so-called \emph{trace quotient} 
or \emph{trace ratio}, i.e.,
\begin{equation}
\label{eq:trace_quotient_st}
	\operatorname*{argmax}_{\U \in \St(l,m)} \;
	\frac{\operatorname{tr}(\U^{\top} \A \U)}{\operatorname{tr}(\U^{\top} \B \U) + \sigma},
\end{equation}
where matrices $\A, \B\in \mathbb{R}^{m \times m}$ are often symmetric positive semidefinite, 
and constant $\sigma > 0$ is chosen to prevent the denominator from being zero.
%
%Both matrices $\A$ and $\B$ are constructed according to the specific problems, 
%e.g.,~\cite{cunn:jmlr15, koki:nlaa10}, and
%examples will be given and discussed in Section~\ref{sec:04}.
%
Both matrices $\A$ and $\B$ are constructed to measure the ``similarity'' between data points
	according to the specific problems,  e.g.,~\cite{cunn:jmlr15, koki:nlaa10}.
Specifically, in following sections, they are represented as smooth functions to measure 
the discrepancy between sparse coefficient pairs $(\phib_{\D}(\x_{i}), \phib_{\D}(\x_{j})), \forall i, j$.
%between sparse coefficient pairs  that explores the information of discrepancy hidden in data.
The methodological details and examples will be given and discussed in Sections~\ref{sec:03} and \ref{sec:04}.
%
%the similarity of image data points that explores the intrinsic structure of data.
%Therein, $\mathbf{M} := \{\mathbf{m}_{ij}\} \in \mathbb{R}^{n \times n}$ is a real symmetric matrix with pairwise similarity measures $\mathbf{m}_{ij}$
%between data pairs $(\mathbf{x}_i, \mathbf{x}_j)$.
%

It is obvious that solutions of the problem in Eq.~\eqref{eq:trace_quotient_st} are 
rotation invariant, i.e., let $\U^{*} \in \St(l,m)$ be a solution of the problem, then 
so is $\U^{*}\Thetab$ for any $\Thetab \in \mathbb{R}^{l \times l}$ being orthogonal.
In other words, the solution set of the problem in Eq.~\eqref{eq:trace_quotient_st} is the 
set of all $l$-dimensional linear subspaces in $\mathbb{R}^{m}$.
In order to cope with this structure, we employ the Gra{\ss}mann manifold, which can be
alternatively identified as the set of all $m$-dimensional rank-$l$ orthogonal 
projectors, i.e.,
\begin{equation}
\label{eq:grassmann}
	\mathfrak{Gr}(l,m) := \left\{ \U \U^{\top} | \U \in \St(l,m) \right\}.
\end{equation}
Thus, the trace quotient problem can be formulated as
\begin{equation}
\label{eq:trace_quotient_gr}
	\operatorname*{argmax}_{\P \in \Gr(l,m)} \;
  	\frac{\operatorname{tr}(\P \A)}{\operatorname{tr}(\P \B) + \sigma}.
\end{equation}
Although various efficient optimization algorithms have been developed to solve the 
trace quotient problem, see~\cite{ngot:simax10,koki:nlaa10,cunn:jmlr15}, 
the construction described in the next section requires further nontrivial, 
constructive development.
%shen:mtns10,

%%%%%%%%%%%%%%%%%%%%%%%%%%%%%%%%%%%%%%%%%%%%%%%%%%%%%%%%%%%%%%%%%%%
%%                            SECTION 3                          %%
%%%%%%%%%%%%%%%%%%%%%%%%%%%%%%%%%%%%%%%%%%%%%%%%%%%%%%%%%%%%%%%%%%% 
\section{A Joint Disentangling Framework}
\label{sec:03}
In this section, we construct the \emph{SparLow} cost function, 
	which adopts the construction of sparse coding with convex 
	priors in the framework of TQ maximization to extract low dimensional 
	representations of sparse codings of images.
	
%%%%=============================================================%%%
%%%                           SECTION 3.1                         %%
%%%%=============================================================%%%    
%\subsection{The \emph{SparLow} Function}
%\label{sec:31}
%
The \emph{SparLow} function allows to jointly learn 
both a sparsifying dictionary and an orthogonal projection in the framework of 
TQ maximization.
Let us denote by $\Phib(\D,\X) := [\phib_{\D}(\x_{1}),\ldots,\phib_{\D}(\x_{n})] \in 
\mathbb{R}^{r \times n}$ the sparse representation of the data 
$\X = [\x_{1},\ldots,\x_{n}]$ for a given dictionary $\D$ computed by solving the
sparse regression problems as in Eq.~\eqref{eq:sparse_regression}.
Let $\mathcal{A} \colon \mathbb{R}^{r \times n} \to \mathbb{R}^{r \times r}$ and 
$\mathcal{B} \colon \mathbb{R}^{r \times n} \to \mathbb{R}^{r \times r}$ be two smooth 
functions that serve as generating functions for the matrices $\A$ and $\B$ in 
the trace quotient in Eq.~\eqref{eq:trace_quotient_gr}.
Constructions of the two structure matrix-valued function $\mathcal{A}$ and $\mathcal{B}$ 
are according to the specific learning tasks and exemplified in Section~\ref{sec:04}.
We can define a generic trace quotient function on sparse representations as
%$f \colon \mathfrak{S}(m,r) \times \mathfrak{Gr}(l,r) \to\mathbb{R}$ as
%
\begin{equation}
\label{eq_main_cost_no_regular}
\begin{split}
	f \colon \mathfrak{S}(m,r) \times \mathfrak{Gr}(l,r) & \to \mathbb{R} \\
	f(\D,\P) & := \frac{ \operatorname{tr} \left(\P \mathcal{A}( \Phib(\D,\X)) \right)  }
	{  \operatorname{tr} \left( \P\, \mathcal{B}(\Phib(\D,\X)) \right) + \sigma  }.
\end{split}
\end{equation}
From the perspective of learning representations, the projection $\P$ aims to capture low-dimensional 
discriminant features in sparse representations of images.
It is important to notice that the function $f$ is not necessarily differentiable,
unless the structure functions $\mathcal{A}$ and $\mathcal{B}$ are differentiable 
in the dictionary $\D$, i.e., the sparse representations $\Phib(\D,\X)$ are differentiable in 
$\D$.
This issue is further discussed in Section~\ref{sec:51}.

In order to prevent solution dictionaries from being highly coherent, which is necessary 
for guaranteeing the local smoothness of sparse solutions \cite{hawe:tip13}, we employ a 
log-barrier function on the scalar product of all dictionary columns to 
control the mutual coherence of the learned dictionary $\D$, 
i.e., for dictionary $\D = [\d_{1}, \ldots, \d_{r}] \in \mathbb{R}^{m \times r}$,
we define
\begin{equation}
\label{eq_mutual_coherence0}
	g_{c}(\D) := - \operatorname*{\sum}_{1 \leq i<j\leq r} 
	\tfrac{1}{2} \operatorname*{log}\big(1-( \d_{i}^{\top} \d_{j})^2 \big).
\end{equation}
It is worth noticing that the interplay between the exactness of sparse 
representations of images $\Phib(\D,\X)$ and the measure of discrimination 
of TQ is indirect. 
As observed in preliminary experiments of this work, it is very difficult 
to ensure maximization of the function $f$ regularized by $g_{c}$ to extract
good low-dimensional representations of images for the learning tasks.
Namely, sparse representations $\Phib(\D,\X)$ are data-driven 
disentangling factors of variation in the first layer, which carry data 
information independently from the (potentially) task-specific second layer.
Such an observation might also be interpreted as overfitting of sparse representations 
to the second layer's target.
In order to deal with this problem, we propose to adopt the \emph{warm start}
strategy from an optimal data-driven dictionary, and restrict the new dictionary to lie
in a neighborhood of the warm start to explicitly balance both 
data-driven and task-driven information.
Specifically, we propose the following regularizer on the dictionary as
\begin{equation}
\label{eq:reg_h}
	g_{d}(\D) := \tfrac{1}{2} \| \D - \D^{*}\|_{F}^2,
\end{equation}
where $\D^{*}$ is the optimal data-driven dictionary learned from the data $\X$.
In the rest of the paper, we refer to it as the \emph{data regularizer}.
It measures the distance between an estimated dictionary $\D$ and the dictionary $\D^{*}$ 
in terms of the Frobenius norm.
Practically, we set $\D^{*}$ to be a dictionary produced by state of the art 
methods, such as K-SVD \cite{ahar:tsp06}.
Our experiments have verified that $g_{d}$ guarantees stable performance of 
\emph{SparLow} algorithms, see Section~\ref{sec:521}.

To summarize, we construct the following cost function to jointly learn both a 
sparsifying dictionary and an orthogonal transformation, i.e.,
\begin{equation}
\label{eq_main_function}
\begin{split}
	J \colon \S(m,r) \times & \Gr(l,r) \to \mathbb{R} \\[0.5mm]
	J(\D, \P) := & f(\D, \P) - \mu_{1} g_{c}(\D) - \mu_{2}g_{d}(\D),
\end{split}
\end{equation}
where the two weighting factors $\mu_{1} > 0$ and $\mu_{2} > 0$ 
control the influence of the two regularizers on the final solution.
In this work, we refer to it as the \emph{SparLow} function.

%%%%%%%%%%%%%%%%%%%%%%%%%%%%%%%%%%%%%%%%%%%%%%%%%%%%%%%%%%%%%%%%%%%
%%                            SECTION 4                          %%
%%%%%%%%%%%%%%%%%%%%%%%%%%%%%%%%%%%%%%%%%%%%%%%%%%%%%%%%%%%%%%%%%%% 
\section{Exemplifications of the \emph{SparLow} Model}
\label{sec:04}
In the previous section, we construct the \emph{SparLow} function 
for extracting low dimensional representations of sparse codings of images.
In what follows, we exemplify counterparts of several classic unsupervised, 
supervised and semi-supervised learning methods 
by constructing various structure smooth functions 
$\mathcal{A}$ and $\mathcal{B}$ in Eq.~\eqref{eq_main_cost_no_regular}. 

%%%=============================================================%%%
%%                           SECTION 4.1                         %%
%%%=============================================================%%%    
\subsection{Unsupervised \emph{SparLow}}
\label{sec:341}
We firstly introduce three unsupervised learning methods within the \emph{SparLow} framework.
%\vspace{-3mm}
%%%=============================================================%%%    
\subsubsection{PCA-like \emph{SparLow}}
\label{sec:3411}
The standard PCA method computes an orthogonal transformation $\U \in \St(l,m)$, so that 
the variance of the low dimensional representations of the data $\X$ is maximized, 
i.e., ${\U}$ is the solution of the following maximization problem
\begin{equation}
\label{eq_pca}
\operatorname*{\max}_{\U \in \St(l,m)} \operatorname{tr}
\left( \U^\top \X \Pib_{n} \X^{\top} \U \right),
\end{equation}
where ${\Pi}_{r} := \I_{r} - \tfrac{1}{r}{\mathbf{1}}_{r} {\mathbf{1}}^{\top}_r$ 
is the centering matrix in $\mathbb{R}^{r}$ with $\mathbf{1}_{r} = 
[1,\cdots,1]^{\top} \in \mathbb{R}^{r}$.
In the framework of trace quotient, the denominator is trivially a constant, i.e.,
\begin{equation}
\operatorname{tr}\big( \U^{\top} \B_{\text{pca}} \U \big) = 
\operatorname{tr}\big( \X \Pib_{n} \X^{\top} \big),
\end{equation}	
with $\B_{\text{pca}} = \operatorname{tr}\big( \X \Pib_{n} \X^{\top} \big) \I_{n}$.
By adopting the sparse representations $\Phib({\D}, \X)$, by $\Phib$ for short, 
the two structure functions are defined as
\begin{equation}
%\vspace{-0.5mm}
\left\{ \!\!\! \begin{array}{rcl}
\mathcal{A}_{\text{pca}}({\Phib}) &\!\!\!\!\! :=\!\!\!\!\! & 
\Phib \Pib_{n} \Phib^{\top} \\[1mm]
\mathcal{B}_{\text{pca}}(\Phib) &\!\!\!\!\! :=\!\!\!\!\! & 
\operatorname{tr}(\Phib \Pib_{n} \Phib^{\top}) \I_{r}.
\end{array} \right.
%\vspace{-0.5mm}	
\end{equation}
We refer to the corresponding \emph{SparLow} algorithm as the \emph{PCA-SparLow} algorithm.

%\vspace{-3mm}
%%%=============================================================%%%
%%                           SECTION 4.2                         %%
%%%=============================================================%%%    
\subsubsection{LLE-like \emph{SparLow}}
\label{sec:3412}
The original LLE algorithm aims to find low dimensional representations of the data 
via fitting directly the barycentric coordinates of a point based on its 
neighbors constructed in the original data space \cite{rowe:science00}.
It is well known that the low dimensional representations in the LLE method can only 
be computed implicitly. 
Therefore, the so-called ONPP method introduces an explicit orthogonal 
transformation between the original data and its low dimensional representation \cite{koki:nlaa10}.
Specifically, the ONPP method solves the following problem
%\U
\begin{equation}
\label{eq_lle}
\operatorname*{\min}_{\U \in \St(l,m)} \operatorname{tr}\left( \U^\top \X 
\Z_{\text{lle}} \X^\top \U \right),
%	\vspace{-0.5mm}
\end{equation}
where $\Z_{\text{lle}} = (\I_{n} - \W)^\top (\I_{n}-\W)$ with $\W \in \mathbb{R}^{n \times n}$ 
being the matrix of barycentric coordinates of the data.
Similar to the construction for PCA-\emph{SparLow}, we construct the following 
functions for an LLE-like \emph{SparLow} approach
\begin{equation}
%\vspace{-1mm}
\left\{ \!\!\! \begin{array}{rcl}
\mathcal{A}_{\text{lle}}(\Phib) &\!\!\!\!\! :=\!\!\!\!\! & 
\Phib \Z_{\text{lle}} \Phib^{\top} \\[1mm]
\mathcal{B}_{\text{lle}}(\Phib) &\!\!\!\!\! :=\!\!\!\!\! & 
\operatorname{tr}( \Phib \Z_{\text{lle}} \Phib^{\top}) \I_{r}.
\end{array} \right.
\end{equation}

%%%=============================================================%%%
%%                           SECTION 4.3                         %%
%%%=============================================================%%%    
\subsubsection{Laplacian \emph{SparLow}}
\label{sec:3413}
Another popular category of unsupervised learning methods are the ones involving a 
Laplacian matrix of data, e.g., Locality Preserving Projection (LPP) 
\cite{hexf:pami05}, Orthogonal LPP (OLPP) \cite{caid:tip06}, Linear Graph Embedding 
(LGE) \cite{yans:pami07}, and Spectral Clustering \cite{elha:pami13}.
Let us denote by 
${d}_{ij} := \exp (-\|{\x}_i - {\x}_j\|_2^2/t)$
the Laplacian similarity
between two data points ${\x}_i$ and ${\x}_j$ with constant $t>0$.
Similar to the approaches applied above, we adopt a simple formulation by setting
\begin{equation}
\label{eq_laplace_A}
%\vspace{-1mm}
\left\{ \!\!\! \begin{array}{rcl}
\mathcal{A}_{\text{lap}}(\Phib)&\!\!\!\!\! :=\!\!\!\!\! & 
\Phib \Z_{\text{lap}} \Phib^{\top} \\[1mm]
\mathcal{B}_{\text{lap}}( \Phib ) &\!\!\!\!\! :=\!\!\!\!\! & 
\Phib \Y_{\text{lap}} \Phib^{\top},
\end{array} \right.
\end{equation}
with $\Z_{\text{lap}} := \{z_{ij}\} \in \mathbb{R}^{n\times n}$ being a real 
symmetric matrix measuring the similarity between data pairs $({\x}_i, {\x}_j)$, and
$\Y := \{y_{ij}\} \in \mathbb{R}^{n \times n}$ being a diagonal matrix having
$y_{ii} := \sum_{j \neq i} z_{ij}$ for all $i,j$.
%for all $i=1,\ldots,n$.
%
Specifically, the similarity matrix $\Z_{\text{lap}}$ can be computed by applying a
Gaussian kernel function on the distance between two data samples, i.e.,
$z_{ij} = {d}_{ij}$
%${m}_{ij} = \exp (-\|{x}_i - {x}_j\|_2^2/t)$ or constant weights (${m}_{ij} = 1$) 
if $\phib_{i}$ and $\phib_{j}$ are adjacent, $z_{ij} = 0$ otherwise.
%
%Note that, if following up a clustering approach, the learning model Eq.~\eqref{eq_main_cost_no_regular} associated with 
%Eq.~\eqref{eq_laplace_A}
%could be viewed as a joint learning version of sparse subspace clustering method \cite{elha:pami13}.

%\vspace{-4mm}
%%%=============================================================%%%
%%                           SECTION 4.6                         %%
%%%=============================================================%%%  
\subsection{Supervised \emph{SparLow}}
\label{sec:342}
In this subsection, we focus on the supervised \emph{SparLow} learning for 
solving classification problems.
Assume that there are $c$ classes of images. 
Let $\X_{i} = [{\x}_{i1}, \ldots, {\x}_{in_{i}}] \in 
\mathbb{R}^{m \times n_{i}}$ for $i = 1, \ldots, c$ with $n_{i}$ being 
the number of samples in the $i^{\mathrm{th}}$ class. 
The corresponding sparse coefficients are denoted by 
${{\Phib}}_{i} := [{\phib}_{i1}, \ldots, {\phib}_{in_{i}}] \in \mathbb{R}^{r \times n_{i}}$, 
and ${{\Phib}} := [{{\Phib}}_{1}, \ldots, {{\Phib}}_{c}] \in \mathbb{R}^{r \times n}$
with $n = \sum\limits_{i=1}^{c} n_{i}$.  

%\vspace{-3mm}
%%%=============================================================%%%
%%                          SECTION 3.4                            %%
%%%=============================================================%%%
\subsubsection{LDA \emph{SparLow}}
\label{sec:3421}
The classic LDA algorithm \cite{belh:pami97} aims to find low-dimensional representations of 
the high dimensional data, so that the between-class scatter is maximized, while the within-class scatter is minimized.          
Let us define by $\overline{\phib}_{i} \in \mathbb{R}^{r}$ the center of the $i$-th class.
The within-class scatter matrix is computed as
\begin{equation}
\begin{split}
\mathcal{B}_{\text{lda}}({{\Phib}}(\D,\X)) = & \sum\limits_{i=1}^{c} \sum\limits_{j=1}^{n_{i}} 
({\phib}_{ij} - \overline{{\phib}}_{i})({\phib}_{ij} - \overline{{\phib}}_{i})^{\top} \\
= & \sum\limits_{i=1}^{c} {{\Phib}}_{i} {\Pi}_{n_{i}} {{\Phib}}_{i}^{\top} \; 
=  {{\Phib}} \mathbf{L}^{w} {{\Phib}}^{\top},
\end{split}
\end{equation}
with $\mathbf{L}^{w} := \operatorname{diag}({\Pib}_{n_{1}}, \cdots, {\Pib}_{n_{c}}) 
\in \mathbb{R}^{n\times n}$
being a block diago\-nal matrix, whose diagonal blocks are the centering matrices in 
$\mathbb{R}^{n_{i}}$, associated with the corresponding classes.
%where
%%%  \begin{equation}
%$	{L}^{w} = \left[
%\begin{array}{ccc}
%{\Pi}_{n_{1}} & \cdots & 0 \\
%\vdots & \ddots & \vdots \\
%0 & \cdots & {\Pi}_{n_{c}}
%\end{array}
%\right]. $
%%\end{equation}
%
Let $\overline{{\phib}} \in \mathbb{R}^{r}$ be the centre of all classes.
Then, we can define the between-class scatter matrix as
\begin{equation}
\begin{split}
%\vspace{-1mm}
\mathcal{A}_{lda}& (\Phib) =  \sum\limits_{i=1}^{c} n_i
(\overline{{\phib}}_{i} - \overline{{\phib}})(\overline{{\phib}}_{i} - \overline{{\phib}})^{\top} \\
= & \left[ {{\phib}}_{1}\! \tfrac{\mathbf{1}_{n_{1}}}{\sqrt{n_{1}}}, \ldots, {{\phib}}_{c}\! 
\tfrac{\mathbf{1}_{n_{c}}}{\sqrt{n_{c}}} \right] 
{\Pib}_{c}\!
\left[ {{\phib}}_{1}\! \tfrac{\mathbf{1}_{n_{1}}}{\sqrt{n_{1}}}, \ldots, {{\phib}}_{c}\! 
\tfrac{\mathbf{1}_{n_{c}}}{\sqrt{n_{c}}} \right]^{\top} \\
= &\, {{\Phib}} {\mathbf{L}}^{b} {{\Phib}}^{\top},
\end{split}
\end{equation}
where $\mathbf{L}^{b} := \mathbf{C}^b{\Pib}_{c}(\mathbf{C}^b)^{\top}$ 
with $\mathbf{C}^b = \operatorname{Bdiag}\big(\frac{\mathbf{1}_{n_1}}{\sqrt{n_1}}, \cdots, \frac{\mathbf{1}_{n_c}}{\sqrt{n_c}}\big) \in \mathbb{R}^{n\times c}$.

%\vspace{-3mm}
%%%=============================================================%%%
%%                          SECTION 3.4                            %%
%%%=============================================================%%%
\subsubsection{MFA \emph{SparLow}}
\label{sec:3422}
Marginal Fisher Analysis (MFA) \cite{yans:pami07}, also known as Linear 
Discriminant Embedding (LDE), is the supervised version of the Laplacian Eigenmaps \cite{koki:nlaa10}.
The main idea is to maintain the original neighbor relations of points from the 
same class while pushing apart the neighboring points of different classes. 

Let $\mathfrak{N}_{k_1}^{+}({\phib}_i)$ denote the set of $k_1$ nearest neighbors which share the same label with ${\phib}_i$, 
and $\mathfrak{N}_{k_2}^{-}({\phib}_i)$ denote the set of $k_2$ nearest neighbors among the data points whose labels 
are different to that of ${\phib}_i$.
We construct two matrices $\Z_{\text{mfa}}^{+} := \{z_{ij}^{+}\} \in \mathbb{R}^{n\times n}$
and $\Z_{\text{mfa}}^{-} := \{z_{ij}^{-}\} \in \mathbb{R}^{n\times n}$ with
\begin{equation}
z_{ij}^{+} = \left\{\!\!
\begin{array}{ll}
1, & {\phib}_j\in \mathfrak{N}_{k_1}^{+}({\phib}_i) ~\text{or}~ 
{\phib}_i\in \mathfrak{N}_{k_1}^{+}({\phib}_j), \\
0, & \text{otherwise},
\end{array}
\right.
\end{equation}
and
\begin{equation}
z_{ij}^{-} = \left\{\!\!
\begin{array}{ll}
1, & {\phib}_j\in \mathfrak{N}_{k_1}^{-}({\phib}_i) ~\text{or}~ 
{\phib}_i\in \mathfrak{N}_{k_1}^{-}({\phib}_j), \\
0, & \text{otherwise}.
\end{array}
\right.
\end{equation}
Then, the Laplacian matrices for characterizing the inter-class and  
intra-class locality are defined as
\begin{equation}
\mathbf{L}_{\text{mfa}}^{-} = \Y_{\text{mfa}}^{-} - \Z_{\text{mfa}}^{-}, \quad\text{and}
\quad \mathbf{L}_{\text{mfa}}^{+} = \Y_{\text{mfa}}^{+} - \Z_{\text{mfa}}^{+},
\end{equation}
%
%where the two diagonal matrices $\Xib^{+}$ and $\Xib^{-}$ are defined as for all $i,j$
where $\Y^{+}$ and $\Y^{-}$ are two diagonal matrices defined as
% are defined as for all $i,j$
%
\begin{equation}
y^{+}_{ii} = \sum_{j \neq i} z_{ij}^{+}, \quad\text{and}\quad
y^{-}_{ii} = \sum_{j \neq i} z_{ij}^{-}.
\end{equation}
Then, we construct the following functions for MFA-like \emph{SparLow} approach, i.e.,
\begin{equation}
%\vspace{-1mm}
\left\{ \!\!\! \begin{array}{rcl}
\mathcal{A}_{\text{mfa}}(\Phib) &\!\!\!\!\! :=\!\!\!\!\! & 
\Phib \mathbf{L}_{\text{mfa}}^{-} \Phib^\top \\[1mm]
\mathcal{B}_{\text{mfa}}(\Phib) &\!\!\!\!\! :=\!\!\!\!\! & 
\Phib \mathbf{L}_{\text{mfa}}^{+} \Phib^\top.
\end{array} \right.
\end{equation}
%\vspace{-3mm}
%
%%%=============================================================%%%
%%                          SECTION 3.5                            %%
%%%=============================================================%%%
%\subsubsection{Linear DR for Multivariate Linear Regression}
\subsubsection{MVR \emph{SparLow}}
\label{sec:3423}
Many challenging problems, e.g., multi-label classification, can 
be modeled as multivariate ridge regression (MVR) by solving the 
following minimization problem
\begin{equation}
\label{eq_lossFunc_leastSueres}
%\vspace{-1mm}
\min_{\D,\U,\W} \|\mathbf{Z} - \W^\top \U^\top {{\Phi}}(\D,\X) \|_F^2  + \mu \|\W\|_F^2 ,
\end{equation}
%		where ${Z}\in \mathbb{R}^{d\times n}$ is the target matrix defined by labels, 
where $\mathbf{Z}\in \mathbb{R}^{d\times n}$ is the target matrix, 
$\U\in \St(l,r)$, $\W\in \mathbb{R}^{l\times d}$ and $\mu \in \mathbb{R}^{+}$. 
By freezing both $\mathbf{Z}$ and $\U$, a solution to the problem as in 
Eq.~\eqref{eq_lossFunc_leastSueres} with respect to $\W$ has a closed form expression as
%$	            \W = \left( \U^\top ( {{\Phi}}(\D,\X) ({{\Phi}}(\D,\X))^\top  + 
%\mu \I_{r} ) \U \right)^{-1} \U^\top {{\Phi}}(\D,\X) {Z}^\top.	$
\begin{equation}
\label{eq_W}
\W = \left( \U^\top ( \Phib \Phib^\top  + 
\mu \I_{r} ) \U \right)^{-1} \U^\top \Phib {\Z}^\top.		
\end{equation}
Using this closed expression to substitute $\W$ in Eq.~\eqref{eq_lossFunc_leastSueres}, 
%we can rewrite Eq.~\eqref{eq_lossFunc_leastSueres} as the format of Eq.~\eqref{eq_main_cost_no_regular} with 
we can rewrite Eq.~\eqref{eq_lossFunc_leastSueres} in the form of the \emph{SparLow} as
\begin{equation}
\label{eq_main_cost_MVR}
%\vspace{-1mm}
\left\{ \!\!\! \begin{array}{rcl}
\mathcal{A}_{\text{mvr}}(\Phib) &\!\!\!\!\! :=\!\!\!\!\! & 
- \Phib \Z_{\text{mvr}}^{\top} \Z_{\text{mvr}} \Phib^{\top} \\[1mm]
\mathcal{B}_{\text{mvr}}(\Phib) &\!\!\!\!\! :=\!\!\!\!\! & 
\left( \Phib \big( \Phib^{\top} + \mu \I_{r} \big) \right).
\end{array} \right.
\end{equation}
Therein, $\Z_{\text{mvr}}$ can be constructed as the binary class labels of input signals, which is usually coded as $\mathbf{Z}_{\text{mvr}} \in \mathbb{R}^{c\times n}$ with
$\mathbf{z}_i = [z_{i1},\cdots, z_{ic}]^\top$, $z_{ij} = 1$ if $\mathbf{z}_i$ is in class $j$, $z_{ij} = 0$ otherwise. 
Alternatively, $\mathbf{Z}_{\text{mvr}}$ can also be a handcrafted 
indicator matrix according to labels, e.g., the ``discriminative'' sparse codes in \cite{jian:pami13}. 

%%%=============================================================%%%
%%                          SECTION 3.5                            %%
%%%=============================================================%%%
\subsection{Semi-supervised \emph{SparLow}}
\label{sec:343}
In this subsection, we demonstrate that the \emph{SparLow} model is also well suited to exploit unlabeled data in a 
semi-supervised setting. 
%
%\textcolor{blue}{
%Semi-supervised learning algorithms is to disentangle factors that
%take into account sharing of statistical strength between the unsupervised and supervised learning criteria.
%}
%
Assume that there are $n_{l}$ labeled samples $\X_{l} \in \mathbb{R}^{m\times n_{l}}$ 
and $n_{u}$ unlabeled samples $\X_{u} \in \mathbb{R}^{m\times n_{u}}$, with 
$n = n_{l} + n_{u}$.
For a given dictionary $\D$, we denote by $\Phib_{l} \in \mathbb{R}^{r \times n_{l}}$ and 
$\Phib_{u} \in \mathbb{R}^{r \times n_{u}}$ the corresponding sparse coefficients.
The first assumption to support \emph{semi-supervised \emph{SparLow} model} is that the learned dictionary 
for specific class is also effective for learning good sparse features from unlabeled data \cite{mair:pami12}.
% rain:icml07
Secondly, we follow the way that learning semi-supervised DR settings associated with preserving the global data manifold structure,
namely, nearby points will have similar lower-dimensional representations \cite{caid:iccv07, yugx:pr12} 
or labels \cite{zhou:nips04, belk:jmlr06}.

Given the whole dataset $\X$, let us define the graph Laplacian matrix 
$\mathbf{L} = \Y - \Z \in \mathbb{R}^{n\times n}$,  
where $\Z := \{z_{ij}\}$ with $z_{ij}$ weighting the edge between adjacency 
data pairs $({\phib}_i, {\phib}_j)$,
$z_{ij} = 0$ otherwise.
$\Y$ is diagonal with $y_{ii} := \sum_{j \neq i} z_{ij}$ for all $i,j$.

%%%%%%%%%%%%%%%%%%%%%%%%%%%%%%%%%%%%%%%%%%%%%%%%%%%%%%%%%%%%%%%%%
%
%%%%%%%%%%%%%%%%%%%%%%%%%%%%%%%%%%%%%%%%%%%%%%%%%%%%%%%%%%%%%%%%%
\subsubsection{Semi-supervised LDA \emph{SparLow}}
\label{sec:3431}
For labeled dataset $\X_{l}$, we adopt the criterion of LDA and 
compute matrices $\mathbf{L}^{w}$ and $\mathbf{L}^{b}$ as same to Section \ref{sec:3421}. 
%construct interclass graph Laplacian matrix  $\W$ and ${B}$ in  respectively
%${L}^b$ and intraclass graph Laplacian matrix ${L}^w$ 
Hence the total scatter matrix can be written as 
\begin{equation}
S_{t} ( {{\Phib}}(\D,\X_{l}) ) := 
\Phib\mathbf{L}^{t} \Phib^{\top},
\end{equation}
with $\mathbf{L}^{t} := \mathbf{L}^{w} + \mathbf{L}^{b} \in \mathbb{R}^{n_{l}\times n_{l}}$.
%${L}^{t} = {L}^{w} + {L}^{b}$ in $\mathbb{R}^{n_L\times n_L}$ being the Laplacian matrix for the whole labeled samples.
%
Similar to the constructions of Section \ref{sec:3421}, we construct the formulations of
semi-supervised LDA \emph{SparLow} as 
\begin{equation}
\left\{ \!\!\! \begin{array}{rcl}
\mathcal{A}_{\text{sda}}({\Phib}) &\!\!\!\!\! :=\!\!\!\!\! & 
\Phib  \widetilde{\mathbf{L}}^b \Phib^{\top}, \\[1mm]
\mathcal{B}_{\text{sda}}({\Phib}) &\!\!\!\!\! :=\!\!\!\!\! & 
\Phib ( \widetilde{\mathbf{L}}^t + \alpha {\mathbf{L}}) \Phib^{\top},
\end{array} \right.
\end{equation}
where $\alpha \in \mathbb{R}$ controls the influence of labeled Laplacian matrix $\widetilde{\mathbf{L}}^t \in \mathbb{R}^{n\times n}$ 
and global Laplacian matrix $\mathbf{L} \in \mathbb{R}^{n\times n}$.
Therein, $\widetilde{{\mathbf{L}}}^b$ and $\widetilde{{\mathbf{L}}}^t$ are the augmented matrices of 
$\mathbf{L}^b$ and $\mathbf{L}^t$, namely,
\begin{equation*}
\widetilde{\mathbf{L}}^b \!=\! \left[\!\!
\begin{array}{cc}
\mathbf{L}^b                   &\!\!\!\! \mathbf{0}_{n_{u}\times n_{u}} \\
\mathbf{0}_{n_{u}\times n_{u}} &\!\!\!\! \mathbf{0}_{n_{u}\times n_{u}}
\end{array}
\!\!\right]
\text{, and}~~
\widetilde{\mathbf{L}}^t \!=\! \left[\!\!
\begin{array}{cc}
\mathbf{L}^t                   &\!\!\!\! \mathbf{0}_{n_{u}\times n_{u}} \\
\mathbf{0}_{n_{u}\times n_{u}} &\!\!\!\! \mathbf{0}_{n_{u}\times n_{u}}
\end{array}
\!\!\right].
\end{equation*}

%%%%%%%%%%%%%%%%%%%%%%%%%%%%%%%%%%%%%%%%%%%%%%%%%%%%%%%%%%%%%%%%%
%
%%%%%%%%%%%%%%%%%%%%%%%%%%%%%%%%%%%%%%%%%%%%%%%%%%%%%%%%%%%%%%%%%
\subsubsection{Semi-supervised Laplacian \emph{SparLow}}
\label{sec:3432}
We now consider the semi-supervised version of Laplacian \emph{SparLow}.
For the dataset $\X$, let us define 
the nonlocal graph Laplacian matrix $\mathbf{L}^N = \Y^{{N}} - \Z^N$ in $\mathbb{R}^{n \times n}$ with 
$z^N_{ij}$ weighting the edge between non-adjacency data pairs $({\phib}_i, {\phib}_j)$, 
$z^N_{ij} = 0$ otherwise.
$\Y^N$ is diagonal with $y^N_{ii} := \sum_{j \neq i} z^N_{ij}$ for all $i,j$.
For the labeled dataset $\X_{l}$, we adopt the settings of Section \ref{sec:3422}
and define interclass local Laplacian matrix $\mathbf{L}^{-} \in \mathbb{R}^{n_{l} \times n_{l}}$ 
and intraclass local Laplacian matrix $\mathbf{L}^{+} \in \mathbb{R}^{n_{l} \times n_{l}}$.
Hence, we construct the formulations of Semi-supervised Laplacian \emph{SparLow} as 
\begin{equation}
\left\{ \!\!\! \begin{array}{rcl}
\mathcal{A}_{slap}({\Phi}) &\!\!\!\!\! :=\!\!\!\!\! & 
\Phib ( \widetilde{\mathbf{L}}^{-} + \alpha_1\mathbf{L}^N ) \Phib^{\top} \\[1mm]
\mathcal{B}_{slap}({\Phi}) &\!\!\!\!\! :=\!\!\!\!\! & 
\Phib ( \widetilde{\mathbf{L}}^{+}+ \alpha_2 \mathbf{L}^{N}) \Phib^{\top},
\end{array} \right.
%\vspace{-1mm}
\end{equation}
with $\alpha_1 \in \mathbb{R}^{+}, \alpha_2 \in \mathbb{R}^{+}$ control
the influence of labeled Laplacian matrix and unlabeled Laplacian matrix. 
Similar to the setting of Section \ref{sec:3431},
let $\widetilde{\mathbf{L}}^{-} \in \mathbb{R}^{n \times n}$ and 
$\widetilde{\mathbf{L}}^{+} \in \mathbb{R}^{n \times n}$ 
denote the augmented matrices of $\mathbf{L}^{-}$ and $\mathbf{L}^{+}$.

%%%%%%%%%%%%%%%%%%%%%%%%%%%%%%%%%%%%%%%%%%%%%%%%%%%%%%%%%%%%%%%%%
%
%%%%%%%%%%%%%%%%%%%%%%%%%%%%%%%%%%%%%%%%%%%%%%%%%%%%%%%%%%%%%%%%%
\subsubsection{Semi-supervised MVR \emph{SparLow}}
\label{sec:3433}
The supervised linear (label-based) regression (e.g., SVM) associated with a manifold 
regularisation \cite{belk:jmlr06}, is another popular framework for resolving 
semi-supervised learning problems.
%, nief:tip10
In this section, we adopt an MVR model associated with a manifold regularization as
\begin{equation}
\begin{split}
\label{eq_leastSueres_semi}
& \min_{\U,\W}  \|\Z_{l} - \W^\top \U^\top \Phib_{l} \|_F^2  + \rho_1 \|\W\|_F^2 \\
&\qquad + \rho_2 \operatorname{tr} \left( \W^\top \U^\top \Phib \mathbf{L} \Phib^{\top} \U \W \right),
\end{split}
\end{equation}
where $\Z_{l} \in \mathbb{R}^{d \times n_{l}}$ is the target matrix for $\X_{l}$, 
$\U\in \St(l,r)$, $\W\in \mathbb{R}^{l\times d}$ and $\rho_1, \rho_2 \in \mathbb{R}^{+}$.
By fixing $\U$, minimization of the problem as in Eq.~\eqref{eq_leastSueres_semi} with respect to $\W$ leads to a closed-form expression
\begin{equation}
\label{eq_Semi_W}
\W = \big( \U^\top \big( \Phib_{l} \Phib_{l}^\top  + 
\rho_1 \I_{r} + \rho_{2} \Phib L \Phib^{\top} \big) \U \big)^{-1} \U^{\top} 
\Phib_{l} \Z_{l}^\top.
\end{equation}	 
Using this closed-form expression to substitute the $\W$ in Eq.~\eqref{eq_leastSueres_semi}, 
we can rewrite Eq.~\eqref{eq_leastSueres_semi} in the form of \emph{SparLow} as
\begin{equation}
\left\{ \!\!\! \begin{array}{cl}
\mathcal{A}_{\text{smvr}}({\Phib})&\!\!\!:= -\Phib_{l} \Z_{l}^{\top} \Z_{l} \Phib_{l}^{\top} \\
\mathcal{B}_{\text{smvr}}({\Phib})&\!\!\!:= \Phib_{l} \Phib_{l}^\top  + \rho_1 \I_{r} 
+ \rho_{2} \Phib L \Phib^{\top}.
\end{array} \right.
\end{equation}

%%%=============================================================%%%
%%                           SECTION 3.2                         %%
%%%=============================================================%%%    
\section{Optimization Algorithm for \emph{SparLow}}
\label{sec:05}
In this section, we firstly investigate the differentiability of the \emph{SparLow} 
function, and then present a geometric conjugate gradient algorithm that maximizes the 
\emph{SparLow} cost function on the underlying Riemannian manifold.

%\WeiC{In the previous section, we propose the \emph{SparLow} function, and 
%	develope a geometric CG algorithm to maximise it.}

\subsection{Differentiability of the \emph{SparLow} Function}
\label{sec:51}
%
%The key requirement for developing a geometric algorithm to maximise the cost function 
%$J$ is the differentiability of the sparse representation with respect to a given dictionary.
%%
%The authors are aware of existing results on the matter of differentiability, such as
%\cite{mair:pami12}.
%%
%Unfortunately, these results are often inappropriate for further development of 
%sophisticated algorithms, such as a geometric conjugate gradient algorithm
%for maximising the cost function $J$.
%
In this subsection, we investigate the differentiability of the \emph{SparLow} function $J$, 
and derive its Euclidean gradient in the embedding space of the product manifold 
$\S(m,r) \times \Gr(l,r) \subset \mathbb{R}^{m \times r} \times 
\mathbb{R}^{r \times r}$, which is the building block for computing the Riemannian gradient 
of $J$ in Section~\ref{sec:52}. 

By the construction of the \emph{SparLow} function being differentiable in the orthogonal
projection $\P$, we can compute the Euclidean gradient of $J$ with respect to $\P$ as
\begin{equation}
\label{eq:eu_grad_p}
	\nabla_{J}(\P) = \frac{\mathcal{A}(\Phi(\D,\X)) - f(\D,\P) \cdot \mathcal{B}(\Phi(\D,\X))}
	{\operatorname{tr}(\P \mathcal{B}(\Phi(\D,\X)))+\sigma}.
\end{equation}
The Euclidean gradient of $J$ with respect to $\D$ consists of three components,
i.e.,
\begin{equation}
\label{eq:eu_grad_d}
	\nabla_{J}(\D) = \nabla_{f}(\D) - \mu_{1} \nabla_{g_{c}}(\D) - 
	\mu_{2} \nabla_{g_{d}}(\D),
\end{equation}
two of which are the Euclidean gradients of the two regularizers, which can be simply 
computed as
\begin{equation}
	\nabla_{g_{d}}(\D) = \D - \D^{*},
\end{equation}
and
\begin{equation}
	\nabla_{g_{c}}(\D) = \D \sum\limits_{1\leq i< j\leq r} 
	\frac{2 \d_{i}^\top \d_{j} }
	{ 1-(\d_{i}^\top \d_{j})^2 } 
	\left(\e_{i}\e_{j}^{\top} + \e_{j}\e_{i}^{\top} \right),
\end{equation}
with $\e_{i} \in \mathbb{R}^{r}$ being the $i$-th basis vector of $\mathbb{R}^{r}$.
The computation of the first component $\nabla_{f}(\D)$ requires the 
differentiability of the sparse representation $\Phib(\D,\X)$.

Given images $\x_{i}$ and dictionary $\D$, let $\phib_{i}^{*} := [\varphi_{1}^{*}, 
\ldots, \varphi_{r}^{*}]^{\top} \in \mathbb{R}^{r}$ be the sparse 
representation given by solving the sparse regression problems as in 
Eq.~\eqref{eq:sparse_regression}. We denote the set of indexes of non-zero 
entries of $\phib_{i}^{*}$, known as the \emph{support} of $\phib_{i}^{*}$, by
\begin{equation}
\label{eq:support}
    \L(\x_{i},\D) := \big\{j \in \{1,\ldots,r\} |
    \varphi^{*}_{j} \neq 0 \big\}.
\end{equation}
The differentiability of the sparse representation $\Phib(\D,\X)$ 
requires the following assumptions.
\begin{assumption}
\label{ass:01}
	A separable regularizer $g$ is strictly convex, and
	each component-wise function $g_{i}$ for $i = 1, \ldots,r$
	is differentiable everywhere
    except at the origin, i.e., $g_{i}'(x) \neq 0$ for $x \neq 0$.
\end{assumption}
\begin{remark}
	The strict convexity of $g$ ensures the uniqueness of 
    solutions of the sample-wise sparse regression, and also enables
    explicit characterizations of the unique global minimum.
	It is easy to show that most popular convex sparsifying regularizers, e.g., 
    the $\ell_{1}$-regularizer, the elastic net regularizer, and
    full unnormalized Kullback-Leibler (KL) divergence, fulfill this 
    assumption.
\end{remark}

\begin{assumption}
\label{ass:02}
	For an arbitrary $\D \in \S(m,r)$, the union of the supports of all
	sparse representations $\phib_{i}^{*}$ for all $i = 1, \ldots,n$ is the complete 
	set of indices of atoms, i.e.,
	\begin{equation}
		\bigcup_{i=1}^{n} \L(\x_{i},\D) = \{1,\ldots,r\}.
	\end{equation}
\end{assumption}
\noindent Assumption~\ref{ass:02} ensures all atoms in the dictionary are updated
at any dictionary $\D$.
Then, we can derive the following proposition about the differentiability of the sparse representation $\Phib(\D,\X)$.
\begin{proposition}
\label{prop:01}
	If both Assumption~\ref{ass:01} and \ref{ass:02} hold true, 
	then the sparse representation $\Phib(\D,\X)$ is differentiable on 
    $\S(m,r) \times \Gr(l,r)$.
\end{proposition}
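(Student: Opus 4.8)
\emph{Proof proposal.} The plan is to reduce the statement to a parametric implicit function theorem applied to the first-order optimality conditions of the sparse regression problem on its (locally constant) active set. Since the regularizers $g_{c}$ and $g_{d}$ are smooth in $\D$, and $f$ is differentiable in $\P$ by construction, it suffices to prove that $\D \mapsto \Phib(\D,\X)$ is differentiable at every $\D_{0} \in \S(m,r)$; and since $\Phib(\D,\X)$ merely stacks the columns $\phib_{\D}(\x_{i})$, this reduces further to differentiability of each sample-wise map $\D \mapsto \phib_{\D}(\x)$ from \eqref{eq:sparse_regression}. By Assumption~\ref{ass:01} the objective $f_{\x}$ is strictly convex, hence $\phib_{\D}(\x)$ is its unique minimizer, characterized by $\mathbf{0} \in \D^{\top}(\D\phib - \x) + \partial g(\phib)$; a routine coercivity-plus-uniqueness argument first yields that $\D \mapsto \phib_{\D}(\x)$ is continuous.

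Next I would fix $\D_{0}$, put $\phib^{*} := \phib_{\D_{0}}(\x)$ and $\L_{0} := \L(\x,\D_{0})$, and show that the support is locally constant, i.e.\ $\L(\x,\D) = \L_{0}$ for all $\D$ near $\D_{0}$. For $j \in \L_{0}$ one has $\varphi_{j}^{*} \neq 0$, so by continuity the $j$-th coordinate of $\phib_{\D}(\x)$ stays nonzero near $\D_{0}$, whence $\L_{0} \subseteq \L(\x,\D)$. For $j \notin \L_{0}$ the optimality condition reads $[\D_{0}^{\top}(\D_{0}\phib^{*} - \x)]_{j} \in -\partial g_{j}(0)$; here Assumption~\ref{ass:01} --- specifically $g_{j}'(x) \neq 0$ for $x \neq 0$, so no flat direction emanates from the origin --- keeps this inclusion strict under a small perturbation of $\D$, forcing the $j$-th coordinate to remain exactly $0$. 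I expect this to be the main obstacle: certifying that no coordinate switches on or off as $\D$ varies (a strict-complementarity-type property) and pinning down the sense in which Assumption~\ref{ass:01} excludes the degenerate dictionary configurations where the support could jump; it is also the only place where that assumption is genuinely needed.

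With the support frozen to $\L := \L_{0}$, write $\D_{\L}$ for the submatrix of active columns of $\D$ and $\psi(\D) \in \mathbb{R}^{|\L|}$ for the corresponding sub-vector of $\phib_{\D}(\x)$. On the active set every coordinate is nonzero, so $g_{\L}(\psi) := \sum_{j\in\L} g_{j}(\psi_{j})$ is smooth near $\psi(\D_{0})$ and $\psi(\D)$ solves the smooth equation
\begin{equation*}
	F(\psi,\D) := \D_{\L}^{\top}(\D_{\L}\psi - \x) + \nabla g_{\L}(\psi) = \mathbf{0}.
\end{equation*}
Its Jacobian with respect to $\psi$ is $\D_{\L}^{\top}\D_{\L} + \operatorname{Hess} g_{\L}(\psi)$, which is positive definite --- hence invertible --- because $\D_{\L}^{\top}\D_{\L} \succeq 0$ while the diagonal matrix $\operatorname{Hess} g_{\L}(\psi)$ has strictly positive entries at any point without vanishing coordinates (as holds, e.g., for the elastic net and the unnormalized KL divergence). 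The implicit function theorem then gives that $\D \mapsto \psi(\D)$ is $C^{1}$ near $\D_{0}$, with the closed-form derivative $\mathrm{d}\psi = -(\partial_{\psi}F)^{-1}\,\partial_{\D}F$; padding $\psi(\D)$ with zeros on the inactive coordinates shows that $\D \mapsto \phib_{\D}(\x)$ is differentiable at $\D_{0}$. Since $\D_{0}$ was arbitrary, $\Phib(\D,\X)$ is differentiable on $\S(m,r) \times \Gr(l,r)$. Finally, Assumption~\ref{ass:02} enters to ensure that the assembled derivative involves every dictionary atom, so that the Euclidean gradient $\nabla_{f}(\D)$ is well defined over the whole dictionary.
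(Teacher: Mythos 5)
Your overall strategy---reduce to the sample-wise map, freeze the active set, apply the implicit function theorem to the stationarity system $\D_{\L}^{\top}(\D_{\L}\psi-\x)+\nabla g_{\L}(\psi)=\mathbf{0}$ with the positive definite Jacobian $\D_{\L}^{\top}\D_{\L}+\mathsf{H}g_{\L}$, pad with zeros, and intersect the resulting neighborhoods over the $n$ samples---is the same route the paper takes (Lemmas~3 and~4 of the Appendix). The genuine gap is in the step you yourself flag as the main obstacle: local constancy of the support.

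For $j\notin\L_{0}$ the optimality condition reads $\d_{j}^{\top}(\x-\D_{0}\phib^{*})\in\partial g_{j}(0)=[-b_{1},b_{2}]$, and you claim Assumption~\ref{ass:01} ``keeps this inclusion strict under a small perturbation of $\D$.'' It does not. Assumption~\ref{ass:01} only constrains $g_{j}'$ away from the origin; it says nothing about where the correlation $\d_{j}^{\top}(\x-\D_{0}\phib^{*})$ sits inside $[-b_{1},b_{2}]$. If it sits exactly at an endpoint---for the elastic net, $|\d_{j}^{\top}(\x-\D_{0}\phib^{*})|=\lambda_{1}$---then an arbitrarily small perturbation of $\D$ pushes it outside $\partial g_{j}(0)$, coordinate $j$ activates, the support jumps, and the solution map fails to be differentiable there; this is precisely the known non-smoothness of Lasso/elastic-net paths at transition points. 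Strict complementarity is therefore an extra non-degeneracy hypothesis (or a genericity statement), not a consequence of Assumption~\ref{ass:01}, and without it your ``freeze the support'' step collapses at exactly the problematic dictionaries. The paper avoids arguing support stability directly: it applies the implicit function theorem to an auxiliary smooth map $\eta$ on all $r$ coordinates (stationarity on $\L$, a pinning equation on $\J$), obtains a smooth branch $\widehat{\phib}(\D)$, and composes it with a thresholding projection $\Pi_{\L}$, using Lemma~3 and an $\epsilon$-neighborhood with $\epsilon<\min\{b_{1},b_{2}\}$ to identify the composition with the true minimizer---that is where the endpoints of $\partial g_{j}(0)$ enter quantitatively. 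To salvage your cleaner active-set version you would need to add strict complementarity as a standing assumption or restrict the claim to the open set of dictionaries where it holds. A secondary remark: invertibility of $\D_{\L}^{\top}\D_{\L}+\mathsf{H}g_{\L}$ requires $\mathsf{H}g_{\L}\succ 0$ (or a spark condition), which strict convexity alone does not guarantee; the paper's Lemma~4 quietly leans on the same extra hypothesis, so on that point you are no worse off than the original.
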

\noindent
The proof of the proposition is given in Appendix. The proposition leads 
straightforwardly the following corollary about the differentiability of 
the \emph{SparLow} function.
\begin{corollary}
\label{coro:01}
	If both Assumption~\ref{ass:01} and \ref{ass:02} hold true, 
	then the \emph{SparLow} function $J$ defined in Eq.~\eqref{eq_main_function} 
	is differentiable on $\S(m,r) \times \Gr(l,r)$.
\end{corollary}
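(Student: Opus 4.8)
The plan is to deduce the corollary from Proposition~\ref{prop:01} by a routine assembly, since that proposition already supplies the only delicate ingredient --- the differentiability of the sparse coding map $\D \mapsto \Phib(\D,\X)$ --- and everything else reduces to the chain rule, the quotient rule, and a positivity check on the denominator. First I would dispose of the two regularizers: $g_{d}(\D) = \tfrac12\|\D - \D^{*}\|_{F}^{2}$ is a quadratic polynomial in the entries of $\D$, hence $C^{\infty}$ on all of $\mathbb{R}^{m\times r}$; and $g_{c}(\D) = -\tfrac12\sum_{i<j}\log(1-(\d_{i}^{\top}\d_{j})^{2})$ is a composition of the smooth maps $(\d_{i},\d_{j})\mapsto \d_{i}^{\top}\d_{j}$, $t\mapsto t^{2}$, and $s\mapsto -\tfrac12\log(1-s)$, hence differentiable on the open set where $(\d_{i}^{\top}\d_{j})^{2} < 1$ for all $i\neq j$. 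Their Euclidean gradients are exactly the closed-form expressions $\nabla_{g_{d}}(\D)=\D-\D^{*}$ and $\nabla_{g_{c}}(\D)$ already recorded in Section~\ref{sec:51}.

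Next I would treat the trace-quotient term $f$. By hypothesis the structure maps $\mathcal{A}$ and $\mathcal{B}$ are smooth on $\mathbb{R}^{r\times n}$; composing with the differentiable map $\D\mapsto\Phib(\D,\X)$ from Proposition~\ref{prop:01} and invoking the chain rule shows that $\D\mapsto\mathcal{A}(\Phib(\D,\X))$ and $\D\mapsto\mathcal{B}(\Phib(\D,\X))$ are differentiable. Since $(\P,\M)\mapsto\tr(\P\M)$ is bilinear, both the numerator $\tr(\P\,\mathcal{A}(\Phib(\D,\X)))$ and the quantity $\tr(\P\,\mathcal{B}(\Phib(\D,\X)))$ are differentiable functions of $(\D,\P)$ on the ambient space $\mathbb{R}^{m\times r}\times\mathbb{R}^{r\times r}$. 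The denominator $\tr(\P\,\mathcal{B}(\Phib(\D,\X)))+\sigma$ is bounded below by $\sigma>0$: indeed $\mathcal{B}$ is positive semidefinite-valued and, writing $\P=\U\U^{\top}$ with $\U\in\St(l,r)$, one has $\tr(\P\,\mathcal{B})=\tr(\U^{\top}\mathcal{B}\,\U)\ge 0$. Hence the quotient rule applies and $f$ is differentiable, with Euclidean partial derivative in $\P$ given by Eq.~\eqref{eq:eu_grad_p} and in $\D$ by the first summand $\nabla_{f}(\D)$ of Eq.~\eqref{eq:eu_grad_d}.

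Finally I would combine the pieces: $J = f - \mu_{1}g_{c} - \mu_{2}g_{d}$ is a linear combination of differentiable functions, hence differentiable as a map on an open subset of the ambient Euclidean space; restricting to the embedded submanifold $\S(m,r)\times\Gr(l,r)$ preserves differentiability, which is the claim. The Riemannian gradient of $J$ is then recovered from the Euclidean gradient $(\nabla_{J}(\D),\nabla_{J}(\P))$ by orthogonal projection onto the respective tangent spaces, as carried out in Section~\ref{sec:52}. I do not expect a genuine obstacle: the whole difficulty of the corollary has been front-loaded into Proposition~\ref{prop:01}, and the only points needing a word of care are the strict positivity of the denominator (handled above) and the fact that $g_{c}$ --- and therefore $J$ --- is finite, and hence differentiable, only on the coherence-bounded open subset of $\S(m,r)\times\Gr(l,r)$ on which the log-barrier does not blow up.
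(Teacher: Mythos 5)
Your proposal is correct and follows exactly the route the paper intends: the paper offers no separate proof of Corollary~\ref{coro:01}, stating only that it follows ``straightforwardly'' from Proposition~\ref{prop:01}, and your assembly --- smoothness of $g_{d}$ and $g_{c}$, the chain rule through $\D\mapsto\Phib(\D,\X)$, bilinearity of the trace, and the quotient rule with the denominator bounded below by $\sigma$ --- is precisely that routine argument made explicit. Your closing caveat that $g_{c}$ (and hence $J$) is only finite and differentiable on the open subset of $\S(m,r)$ where no two atoms are collinear is a fair point that the paper's statement silently glosses over, and worth retaining.
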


Finally, in order to develop gradient-based algorithms, we need the first 
derivative of $\Phib(\D,\X)$ to admit closed-form expression.
We refer to Appendix for more details and the proof of the following result.
\begin{proposition}
\label{prop:02}
	Let a separable regularizer $g$ satisfy Assumption~\ref{ass:01}.
    If each component-wise function $g_{i}$ has non-degenerate Hessian except
	at the origin, i.e., $g_{i}''(x) > 0$ for $x \neq 0$,
    where $g_{i}''(x)$ denotes the second derivative of function $g_{i}$,
    then the first derivative of $\Phib(\D,\X)$ has a closed-form expression.
\end{proposition}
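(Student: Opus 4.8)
The plan is to exploit the first-order optimality (KKT) conditions characterizing the unique minimizer $\phib^{*} = \phib_{\D}(\x)$ of the sample-wise sparse regression problem~\eqref{eq:sparse_regression_1}, restrict attention to the support $\L(\x,\D)$ guaranteed by Proposition~\ref{prop:01}, and then invoke the implicit function theorem on the resulting smooth system. First I would recall that strict convexity of $g$ plus differentiability away from the origin (Assumption~\ref{ass:01}) means that on the support the optimality condition reads
\begin{equation}
	\D_{\L}^{\top}(\D_{\L}\phib^{*}_{\L} - \x) + g_{\L}'(\phib^{*}_{\L}) = 0,
\end{equation}
where $\D_{\L}$ collects the active atoms, $\phib^{*}_{\L}$ the nonzero coefficients, and $g_{\L}'$ the vector of component derivatives $g_{i}'$ on those indices; off the support the coefficients are identically zero in a neighborhood, so their derivative with respect to $\D$ vanishes. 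This is the same local-support argument used to prove Proposition~\ref{prop:01}, so I would reuse it rather than re-deriving it.

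Next I would define the map
\begin{equation}
	F(\phib_{\L}, \D) := \D_{\L}^{\top}(\D_{\L}\phib_{\L} - \x) + g_{\L}'(\phib_{\L})
\end{equation}
and compute its Jacobian with respect to $\phib_{\L}$, which is $\D_{\L}^{\top}\D_{\L} + \operatorname{diag}(g_{i}''(\varphi_{i}^{*}))_{i \in \L}$. Here the extra hypothesis $g_{i}''(x) > 0$ for $x \neq 0$ enters: since every active coefficient is nonzero, the diagonal matrix is strictly positive definite, and $\D_{\L}^{\top}\D_{\L}$ is positive semidefinite, so the Jacobian is invertible. (One may note in passing that strict convexity already forces this Jacobian to be positive definite wherever $g$ is twice differentiable, but the explicit non-degeneracy assumption is what makes the bound uniform and the inverse smooth.) The implicit function theorem then yields that $\phib^{*}_{\L}$ is a smooth function of $\D$ locally, with
\begin{equation}
	\frac{\partial \phib^{*}_{\L}}{\partial \D} = -\big(\D_{\L}^{\top}\D_{\L} + \operatorname{diag}(g_{i}''(\varphi_{i}^{*}))\big)^{-1} \frac{\partial F}{\partial \D},
\end{equation}
and the remaining factor $\partial F/\partial \D$ is an elementary, explicit derivative of the bilinear expression $\D_{\L}^{\top}(\D_{\L}\phib_{\L} - \x)$. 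Assembling the support and off-support blocks gives the claimed closed-form expression for the derivative of $\Phib(\D,\X)$, column by column.

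The main obstacle is not the implicit function theorem step itself — that is routine once the Jacobian is shown invertible — but the bookkeeping that makes the formula genuinely closed-form and well-defined: one must argue that the support $\L(\x,\D)$ is locally constant (so the block structure does not jump), carefully handle the chain rule through $\D_{\L}$ appearing both inside and outside the residual, and express the result in terms of the full matrices rather than support-dependent submatrices if a uniform formula is wanted. I would address the local constancy of the support by the same argument underlying Proposition~\ref{prop:01} (strict complementarity away from the origin, which Assumption~\ref{ass:01} secures via $g_{i}'(x) \neq 0$ for $x \neq 0$), and I would present the final derivative using an indicator/projection onto the active set so that the expression is uniform across samples.
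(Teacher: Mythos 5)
Your proposal is correct and follows essentially the same route as the paper: both differentiate the support-restricted stationarity condition $\D_{\L}^{\top}(\x-\D_{\L}\phib_{\L}^{*})=g_{\L}'(\phib_{\L}^{*})$ implicitly in $\D$ and invert the positive definite matrix $K=\D_{\L}^{\top}\D_{\L}+\mathsf{H}g_{\L}(\phib_{\L}^{*})$, with local constancy of the support supplied by the lemma underlying Proposition~\ref{prop:01}. The only quibble is your parenthetical aside: strict convexity alone does not force $g_{i}''>0$ at every point of twice-differentiability (quartic-type flat spots are possible), which is exactly why the non-degenerate-Hessian hypothesis is stated separately; since you rely on that hypothesis anyway, this does not affect the argument.
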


In order to compute the Euclidean gradient of $f$ with respect to $\D$, we need to compute 
the first derivative of $f$ at $\D \in \mathbb{R}^{m \times r}$ in direction
$\Xib \in \mathbb{R}^{m \times r}$, as
\begin{equation}
\label{eq:derg}
\begin{split}
	\Dr_{1}\!f(\D) \Xib = \operatorname{tr}\!\bigg( &
	\frac{\P \big(\Dr\!\mathcal{A}(\Phib)\circ \Dr \Phi(\D,\X) \Xib \big)}
	{\operatorname{tr}\big(\P \mathcal{B}(\Phi(\D,\X))\big)+\sigma} \\
	& - 
	\frac{f(\D,\P) \!\cdot\! \Dr\!\mathcal{B}(\Phib) \circ \Dr \Phi(\D,\X) \Xib}
	{\operatorname{tr}\big(\P \mathcal{B}(\Phi(\D,\X))\big)+\sigma} \bigg),\!
\end{split}
\end{equation}
where $\Dr\! \mathcal{A}(\Phib) \colon \mathbb{R}^{m \times r} \to \mathbb{R}^{r \times r}$ and 
$\Dr\! \mathcal{B}(\Phib) \colon \mathbb{R}^{m \times r} \to \mathbb{R}^{r \times r}$ are
the directional derivatives of $\mathcal{A}(\cdot)$ and $\mathcal{B}(\cdot)$, respectively.
Its calculation is dependent on the concrete construction of the two matrix-valued 
functions $\mathcal{A}$ and $\mathcal{B}$.
Although there are various constructions of $\mathcal{A}(\Phib)$ and $\mathcal{B}(\Phib)$
for different learning paradigms, see~Section~\ref{sec:04},
there are two basic forms, namely, $\Phib \Z \Phib^{\top}$ and $\operatorname{tr}(\Phib \Z 
\Phib^{\top}) \I_{r}$ with $\Z \in \mathbb{R}^{n \times n}$ being some structure matrix 
specified in Section~\ref{sec:04}.
The tedious computation of Eq.~\eqref{eq:derg} can be generalized in the following 
form 
\begin{equation}
\label{eq:derf}
	\operatorname{D}_{1}\!f(\D) \Xib
	= \operatorname{tr} \!\bigg(\! \underbrace{
	\frac{ \widetilde{\mathcal{A}}(\Phib)\!-\!f(\D,\P) \widetilde{\mathcal{B}}(\Phib)}
	{\operatorname{tr}\!\big(\P \mathcal{B}(\Phi(\D,\X))\big)\!+\!\sigma}
	}_{=:\Z \in \mathbb{R}^{n \times n}}
	\Dr \Phi(\D,\X) \Xib \bigg),\!\!
\end{equation}
where $\Dr \Phi(\D,\X) \Xib \in \mathbb{R}^{r \times n}$ is the first derivative of 
$\Phib$ with respect to the dictionary $\D$.
Here, $\widetilde{\mathcal{A}}$ and $\widetilde{\mathcal{B}}$ are two matrix-valued functions
dependent on the specific choice of $\mathcal{A}(\cdot)$ and $\mathcal{B}(\cdot)$.
We summarize the general formula for $\widetilde{\mathcal{A}}$ and $\widetilde{\mathcal{B}}$
for Eq.~\eqref{eq:derf}.
When $\mathcal{A}(\Phib) = \Phib \Z \Phib^{\top}$, we have
\begin{equation}
	\widetilde{\mathcal{A}}(\Phib) = (\Z \Phib \P)^{\top} + \Z \Phib^{\top} \P.
\end{equation}
When $\mathcal{B}(\Phib) = \operatorname{tr}(\Phib \Z \Phib^{\top}) \I_{r}$, we have
\begin{equation}
	\widetilde{\mathcal{B}}(\Phib) = \Z^{\top} \Phib^{\top} + \Z \Phib^{\top}.
\end{equation}

Let $\L_{i}$ be a shorthand notation for $\L(\x_{i},\D)$ and $k_{i} := |\L_{i}|$ 
denote the cardinality of $\L_{i}$.
We denote further ${\phib}_{\L_{i}} := \{\varphi_{i,j}\} \in \mathbb{R}^{k_{i}}$ with $j \in \L_{i}$ 
and ${\D}_{\L_{i}} \in \mathbb{R}^{m \times k_{i}}$ being the subset of ${\D}$, in which 
the index of atoms (columns) fall into the support $\L_{i}$.
Finally, by recalling the first derivative of the generic sparse coding $\phib$ 
as computed in %Eq.~\eqref{eq:derivative_sparse} in Appendix~\ref{app:01}, 
Eq.~(12) in Appendix, 
we compute the Euclidean gradient as
\begin{equation}
\begin{split}
	\nabla_{f}(\D) = \sum\limits_{i=1}^{n} \mathcal{V} \Big( &
	\x_{i} \Z_{\L_{i}} K_{i}^{-1} - \D_{\L_{i}} \phib_{\L_{i}} 
	\Z_{\L_{i}} K_{i}^{-1} \\[-1mm]
	& - \D_{\L_{i}} K_{i}^{-1} \Z_{\L_{i}}^{\top} \phib_{\L_{i}} \Big),
\end{split}
\end{equation}
where $K_{i} := \D_{\L_{i}}^{\top} \D_{\L_{i}} + \mathsf{H} 
g(\phib_{\L_{i}})$ with $\mathsf{H}g(\phib_{\L_{i}})$ being the Hessian matrix
of the regularizer $g$ defined on $\phib_{\L_{i}}$.
Here, $\mathcal{V} \colon \mathbb{R}^{m \times k_{i}} \to \mathbb{R}^{m \times r}$ 
produces a matrix by replacing columns of the zero matrix in $\mathbb{R}^{m \times r}$ 
with columns of matrix $\mathbf{Y} \in \mathbb{R}^{m \times k_{i}}$ according to 
the support $\L_{i}$.

%%%=============================================================%%%
%%                           SECTION 3.2                         %%
%%%=============================================================%%%    
\subsection{A Geometric CG \emph{SparLow} Algorithm}
\label{sec:52}
In this subsection, we present a geometric CG algorithm on the product manifold 
$\Mf := \S(m,r) \times \Gr(l,r)$ to maximize the \emph{SparLow} 
function $J$ as defined in Eq.~\eqref{eq_main_function}.
It is well known that CG algorithms offer prominent properties, such as a superlinear rate of 
convergence and the applicability to large scale optimization problems with low 
computational complexity, e.g., in sparse recovery \cite{hawe:tip13}.
We refer to \cite{absi:book08, hawe:tip13} for further technical details for these computations. 
\begin{algorithm}[t!]
\caption{A CG-\emph{SparLow} Framework.} 
\label{algo:sparlow} 
\SetAlgoNoLine
	\SetKwHangingKw{IN}{Input~:}
	\SetKwHangingKw{Sa}{Step~1:}
	\SetKwHangingKw{Sb}{Step~2:}
	\SetKwHangingKw{Sc}{Step~3:}
	\SetKwHangingKw{Sd}{Step~4:}
	\SetKwHangingKw{Se}{Step~5:}
	\SetKwHangingKw{Sf}{Step~6:}
	\SetKwHangingKw{Sg}{Step~7:}
	\SetKwHangingKw{OUT}{\hspace{-1.6mm}Output:}

	\IN{$\X \in \mathbb{R}^{m\times n}$ and functions $\mathcal{A} \colon 
		\mathbb{R}^{r \times n} \to \mathbb{R}^{r \times r}$ and $\mathcal{B} 
		\colon \mathbb{R}^{r \times n} \to \mathbb{R}^{r \times r}$ 
		as specified in Section~\ref{sec:04} \vspace{1mm}}
		
	\OUT{Accumulation point $(\D^{*}\!,\P^{*}) \!\in\! \S(m,r) 
		\!\times\! \Gr(l,r)\!$ \vspace{1mm}}			
	
	\Sa{Given an initial guess $\D^{(0)} \in \S(m,r)$ and $\P^{(0)} \in 
		\Gr(l,r)$
		\vspace{1mm}}
	
	\Sb{Set $j=j+1$, let $\big(\D^{(j)},\P^{(j)}\big) = \big(\D^{(j-1)},\P^{(j-1)}\big)$, 
		and compute the Riemannian gradient \vspace{0.1mm}
	
		\hspace{20mm} $\G^{(j)} = \H^{(j)} = \operatorname{grad}_{\!J}\big(\D^{(j)},\P^{(j)}\big)$
		\vspace{1mm}}
	
	\Sc{Set $\M^{(j)} = \big(\D^{(j)},\P^{(j)}\big)$ \vspace{1mm}}			
		%	Set $j=j+1$ \vspace{1mm}}
%	\hspace*{2.5mm}   
	\begin{minipage}{0.43\textwidth}
		\begin{enumerate}[(i)]
			\item Update $\M^{(j)} \gets \Gamma_{\M^{(j)},\H^{(j)}}(t^{*})$, 
				where \vspace{2mm}
			
			\hspace{10mm} $t^{*} = \displaystyle \operatorname*{argmax}_{t \in \mathbb{R}}
				J \circ \Gamma_{\M^{(j)},\H^{(j)}}(t)$; \\[0mm]

			\item Compute $\G^{(j+1)} = \operatorname{grad}_{\!J}(\M^{(j)})$; \\[-2mm]

			\item Update $\H^{(j+1)} \!\gets\! \G^{(j+1)} \!+\! 
			\beta~\T_{\M^{(j)},t^{*}\H^{(j)}}(\H^{(j)})$, where $\beta$ is chosen 
			such that $\T_{\M^{(j)},t^{*}\H^{(j)}}(\G^{(j)})$ and $\H^{(j+1)}$ 
			conjugate with respect to the Hessian of $J$ at $\M^{(j)}$. \\[-2mm]
		\end{enumerate}
	\end{minipage}
	
	\Sd{If $\big\|\M^{(j+1)}-\M^{(j)}\big\|$ is small enough, stop. Otherwise, go to Step 2
		\vspace{1mm}}
\end{algorithm}
 \begin{figure}
	\vspace{-8mm}
	\begin{center}
		\begin{tikzpicture}[scale=0.9, inner sep=0pt,dot/.style={fill=black,circle,
			minimum size=7pt},scale=0.3]
		\draw[line width=1pt] (-9,0) .. controls (0,10) and (11,10) .. (22,-2);
		\draw[line width=1pt] (-9,0) .. controls (-3,1) and (1,-3) .. (2,-6);
		\draw[line width=1pt] (2,-6) .. controls (3,-4) and (8,1) .. (22,-2);
		\node[] at (2.2,-4) {$\Mf$};
		%  		\draw[fill=green, opacity=0.2] (-6,-1) -- (-4,5) -- (5,7) -- (3,1) -- cycle;
		%  		\draw[fill=green, opacity=0.2] (8,1) -- (7,7) -- (17,3) -- (18,-3) -- cycle;
		\draw[fill=green, opacity=0.25] (-7,-3) -- (-4,5) -- (5.5,7) -- (3,-1) -- cycle;
		\draw[fill=green, opacity=0.25] (8, -1.5) -- (7,7) -- (16.5, 4) -- (18.5,-3.75) -- cycle;
		% iterate x_{k}
		\node[] at (-2,1.2) {${\M}^{(j)}$};
		\node[] at (1,-0.5) {$T_{{\M}^{(j)}} \Mf$};
		\node[dot,scale=0.5] (xk) at (-2,2) {};
		% Tangent vector {\H}_{k}
		\draw[-{>[scale=2.5, length=2,width=2]},line width=0.7pt] (-2,2) -- (-0,4.5);
		\node[] at (0.5,5) {\footnotesize ${\H}^{(j)}$};
		% retraction
		\draw[-{>[scale=2.5, length=2,width=2]},red,line width=0.7pt] (-2,2) .. controls (1,5) and (6,5.1) .. (10,4);
		\node[] at (5.3,3.4) {\footnotesize 
			% $\mathcal{T}{\H},{\M}^{(j)},{\H}^{(j)},\alpha^{(j)}$
			$\Gamma_{{\M}^{(j)},{\H}^{(j)}}(t^{(j)})$
		%	$\mathcal{R}_{{\M}^{(j)}}({t}^{(j)} {\H}^{(j)})$
		};
		\node[] at (10.2,5.0) {${\M}^{(j+1)}$};
		\node[] at (15.4,-2.25) {$T_{{\M}^{(j+1)}} \Mf$};
		\node[dot,scale=0.5] (xk1) at (10,4) {};
		% transported tangent \mathcal{T}({\H}_{l})
		\draw[-{>[scale=2.5, length=2,width=2]},line width=0.7pt] (10,4) -- (14,2.7);
		\node[] at (18.15, 3.5) {\footnotesize % 18.25, 2.45
			% $\Gamma({\M}^{(j)}, {\H}^{(j)}, \alpha^{(j)})$
			%$\Gamma_{ {\M}_{k},{t}^{(j)}{\H}_{k} }({\H}_{k})$
			$\mathcal{T}_{  {\M}^{(j)},{t}^{(j)} {\H}^{(j)} }({\H}^{(j)})$
		};
		\draw[-{>[scale=2.5, length=2,width=2]},line width=0.7pt] (10,4) -- (15.6, 5.6);
		\node[] at (16.6, 6.6) {\footnotesize $\nabla_{J}({\M}^{(j+1)})$};
		\draw[-{>[scale=2.5, length=2,width=2]},line width=0.7pt] (10,4) -- (11,1.2);
		\node[] at (8.5, 0.4) {\footnotesize $\grad_{J}({\M}^{(j+1)})$ };
		%new conjugate direction {\H}_{k+1}
		\draw[-{>[scale=2.5, length=2,width=2]},line width=0.7pt] (10,4) -- (15,0);
		\draw[dotted] (14,2.7) -- (15,0);
		\draw[dotted] (11,1.2) -- (15,0);
		\node[] at (16.3,-0.3) {\footnotesize ${\H}^{(j+1)}$};
		% new search curve
		\draw[-{>[scale=2.5, length=2,width=2]},red,line width=0.7pt] (10,4) .. controls (11,3) and (12,2) .. (13,-0.5);
		\end{tikzpicture}
	\end{center}
	\vspace{-3mm}
	\caption{ A CG update from the point ${\M}^{(j)}$ to 
		      the point ${\M}^{(j+1)}$ on a manifold $\Mf$.
		%two points ${\M}^{(j)}$ and ${\M}^{(j+1)}$ on a manifold $\Mf$
		%together with some required concepts on $\Mf$.
		%  		how to update a point from ${\M}^{(j)}\in \Mf$ to ${\M}^{(j+1)}\in \Mf$ using
		%  		a Geometric CG Algorithm. 
		%  		$J \colon \Mf \to \mathbb{R}$: a smooth function on differentiable manifold $\Mf$; 
		Tangent space at $\M \in \Mf$: $T_{{\M}} \Mf$;  
		a CG search direction at ${\M}$: ${\H} \in T_{{\M}} \Mf$;
		the Euclidean gradient $\nabla_{J}({\M})$ and its induced Riemannian gradient: $\grad_{J}({\M}) \in T_{{\M}}\Mf$;
		retraction: $\Gamma_{{\M}} \colon T_{{\M}}\Mf \to \Mf$;
		%with a fixed step size ${t}^{(j)}$, e.g.,, ${t}^{(j)} = 1$;
		%  		with $\mathcal{R}_{{\M}}({0}) = {\M}$
		%  		and $\operatorname{D} \mathcal{R}_{{\M}}({0}) = \operatorname{{\color{red} id}}_{T_{{\M}}{\Mf}}$; 
		vector transport: $\mathcal{T}_{{\M},{\H}} \colon T_{{\M}}\Mf \to  T_{\Gamma_{\M,\H}(t)}\Mf $.
		% $\mathcal{T}_{{\M},t{\H}} \colon T_{{\M}}\Mf \to  T_{\mathcal{R}_{{\M}}({t\H})}\Mf $.
		%  		Therein, $\operatorname{{\color{red} id}}_{T_{{\M}}{\Mf}}$ denotes the identity mapping on $T_{{\M}}{\Mf}$.
		%  		The search directions ${\H}^{(j)}$ and ${\H}$.
	}
	\label{fig_3_CGD}
	\vspace{-5mm}
\end{figure}

Classic geometric CG algorithms require the concepts of geodesic and parallel transport, 
which are often more computationally demanding.
In this work, we adopt an alternative approach based on the concept of retraction and its
corresponding vector transport.
A generic framework of our CG-SparLow algorithm is summarized in Algorithm~\ref{algo:sparlow}, 
and further illustrated in Fig.~\ref{fig_3_CGD}. 
In the rest of this section, we explain the key technical details of the CG-SparLow algorithm.

%
%All technical details are presented in Appendix.
%
Firstly, we recall some basic geometry of product manifold $\Mf$. 
%
%Before introducing the CG algorithm on $\Mf$, we first recall some basic geometry of product manifold $\Mf$. 
%
We denote the tangent space of $\Mf$ at $\M := (\D,\P)$ by
\begin{equation}
	T_{\M}\Mf := T_{\D}\S(m,r) \times T_{\P}\Gr(l,r),
\end{equation}
where $T_{\D}\S(m,r)$ and $T_{\P}\Gr(l,r)$ denote the tangent space of the
product of spheres $\S(m,r)$ and the Grassmann manifold $\Gr(l,r)$, respectively.
We endow the manifold $\Mf$ with the Riemannian metric inherited from 
the surrounding Euclidean space, i.e.,
\begin{equation}
\label{eq:riemannianmetric}
	\langle (\D_1, \P_1), (\D_2, \P_2) \rangle_R := \tr (\D_1 \D_2^\top)+
	\tr(\P_1 \P_2^\top),
\end{equation}
with $\D_1 \in \mathbb{R}^{m \times r}$ and $\P_1 \in \mathbb{R}^{r \times r}$.

In each sweep of the CG algorithm (from Step 2 to Step 4 in Algorithm~\ref{algo:sparlow}),
after computing the Euclidean gradient of $J$ as computed in Eq.~\eqref{eq:eu_grad_p} 
and \eqref{eq:eu_grad_d}, we compute the Riemannian gradient of $J$ with respect to 
the Riemannian metric as defined in Eq.~\eqref{eq:riemannianmetric} via the associated
projection on the tangent space $T_{\M}\Mf$. Concretely, we have
\begin{equation} 
	 \grad_{J}(\D,\P) := \big( \grad_{1}\!J(\D,\P), \grad_{2}\!J(\D,\P) \big),
\end{equation}
with $\grad_{1}\!J(\D,\P)$ and $\grad_{2}\!J(\D,\P)$ being the Riemannian gradients of 
$J$ with respect to the first and the second parameter, respectively.
Specifically, we have
\begin{equation}
	\grad_{1}\!J(\D,\P) := \nabla_{J}(\D) - \D \operatorname{ddiag}(\D^{\top} 
	\nabla_{J}(\D))
\end{equation}
with $\operatorname{ddiag}(\cdot)$ putting the diagonal entries of a square matrix into 
a diagonal matrix form, and
\begin{equation}
	\grad_{2}\!J(\D,\P) := \P \nabla_{\!J}(\P) + \nabla_{\!J}(\P) \P - 2 \P 
	\big( \nabla_{J}(\P) \big) \P.
\end{equation}

A retraction $\Gamma_{\M} \colon T_{\M} \Mf \to \Mf$ is a smooth mapping
from the tangent space $T_{\M} \Mf$ to the manifold, such that the evaluation 
$\Gamma_{\M}(0) =  \M$ and the derivative $\operatorname{D} \Gamma_{\M}(0) \colon T_{\M} \Mf \to T_{\M} \Mf$ is the identity mapping. 
For the unit spheres, we restrict ourselves to the following retraction, 
for $\d \in \S(m,1)$ and $\xib \in T_{\d} \S(m,1)$
\begin{equation}
\label{eq:sphere_retraction}
	\gamma_{\d, \xib}(t) := \frac{{\d} + 
	t{\xib}}{\| \d + t{\xib} \|_{2}} \in \S(m,1).
\end{equation}
For constructing a retraction on the Grassmann manifold $\Gr(l,m)$, we need a map 
with $\P \in \Gr(l,m)$ and $\Psib \in T_{\P}\Gr(l,m)$
\begin{equation}
	\zeta_{\P,\Psib}(t) := \big( \I_{r} + t(\Psib \P - \P \Psib) \big)_{Q},
\end{equation}
where $t > 0$ is the step size, and $(\cdot)_{Q}$ is the unique QR decomposition of an 
invertible matrix, i.e., all diagonal entries of the upper triangular part are 
positive.
Then we define the following retraction on $\Gr(l,m)$ as
\begin{equation}
\label{eq:grassman_retraction}
	\gamma_{\P, \Psib}(t) := \zeta_{\P,\Psib}(t) \P (\zeta_{\P,\Psib}(t))^{\top}
	\in \Gr(l,m).
\end{equation}
In concatenation, we construct a retraction on $\Mf := \S(m,r) \times \Gr(l,r)$ with
$\M := (\D,\P) \in \Mf$ and $\H := (\Xib,\Psib) \in T_{\M}\Mf$ 
as
\begin{equation}
	\Gamma_{\M,\H}(t) := \big( [\gamma_{\d_{i},\xib_{i}}(t)]_{i=1,\ldots,k},
	\gamma_{\P, \Psib}(t) \big),
\end{equation}
which is used for implementing a line search algorithm on $\Mf$ 
in Step~3-(i), see~\cite{hawe:tip13}.
Finally, by recalling the vector transport on $\S(m,1)$ with respect
to the retraction $\gamma_{\d, \xib}(t)$ in Eq.~\eqref{eq:sphere_retraction} as
\begin{equation}
	\tau_{\d,t \xib}(\widetilde{\xib}) := \frac{1}{\| {\d} + t{\xib} \|_{2}} 
	\left( \I_{n} + \frac{(\d + t{\xib} )(\d + 
	t{\xib})^{\top}}{\| \d + t{\xib} \|_{2}^{2}} \right) \widetilde{\xib},
\end{equation}
and the vector transport on $\Gr(l,m)$ with respect to the retraction 
$\gamma_{\P, \Psib}(t)$ in Eq.~\eqref{eq:grassman_retraction} as
\begin{equation}
	\tau_{\P,t \Psib}(\widetilde{\Psib}) := 
	\zeta_{\P,\Psib}(t) \widetilde{\Psib} (\zeta_{\P,\Psib}(t))^{\top},
\end{equation}
we define the vector transport of $\widetilde{\H} := (\widetilde{\Xib},
\widetilde{\Psib}) \in T_{(\D,\P)}\Mf$ with respect to the retraction 
$\Gamma_{\M,\H}(t)$ in the direction $t \H = (\Xi,\Psib) \in T_{(\D,\P)}\Mf$, 
denoted by $\T_{\M,\H} \colon T_{\M}\Mf \to T_{\Gamma_{\M,\H}(t)}\Mf$, as
\begin{equation}
	\T_{\M,t\H}(\widetilde{\H}) := \big( [\tau_{\d_{i},t\xib_{i}}
	(\widetilde{\xi}_{i})]_{i=1,\ldots,k}, \tau_{\P,t \Psib}(\widetilde{\Psib}) \big).
\end{equation}
For updating the direction parameter $\beta$ in Step~3-(iii),
we employ a formula proposed in \cite{klei:icassp07}
\begin{equation}
\label{eq:88}
	\beta^{KH} = \frac{\big\langle \G^{(j+1)},~\G^{(j+1)} - 
	\mathcal{T}_{\M^{(j)},t \H^{(j)}} \G^{(j)} \big\rangle_{R}}
	{\big\langle \H^{(j)},~\G^{(j)} \big\rangle_{R}}.
\end{equation}

\begin{figure*}[!ht]
	\centering
	\subfigure[\emph{Unsupervised \emph{SparLow} with or without} $g_{d}$]{
		\label{fig:regu1}
		\includegraphics[width=0.3\textwidth,height=0.2\textwidth]
		{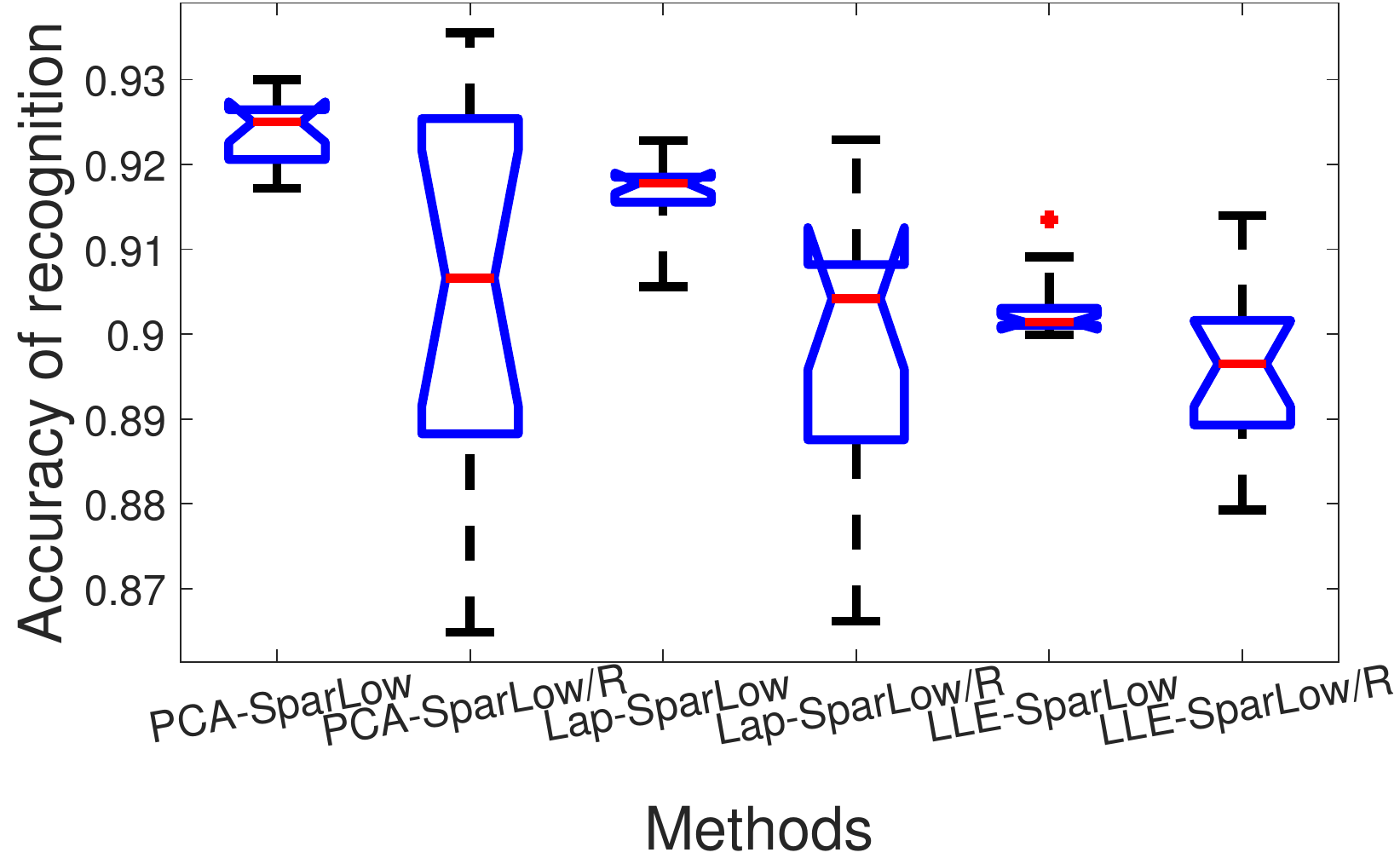}
	}
	\hspace{3mm}
	\subfigure[\emph{PCA-SparLow}]{
		\label{fig:regu2}
		\includegraphics[width=0.3\textwidth]
		{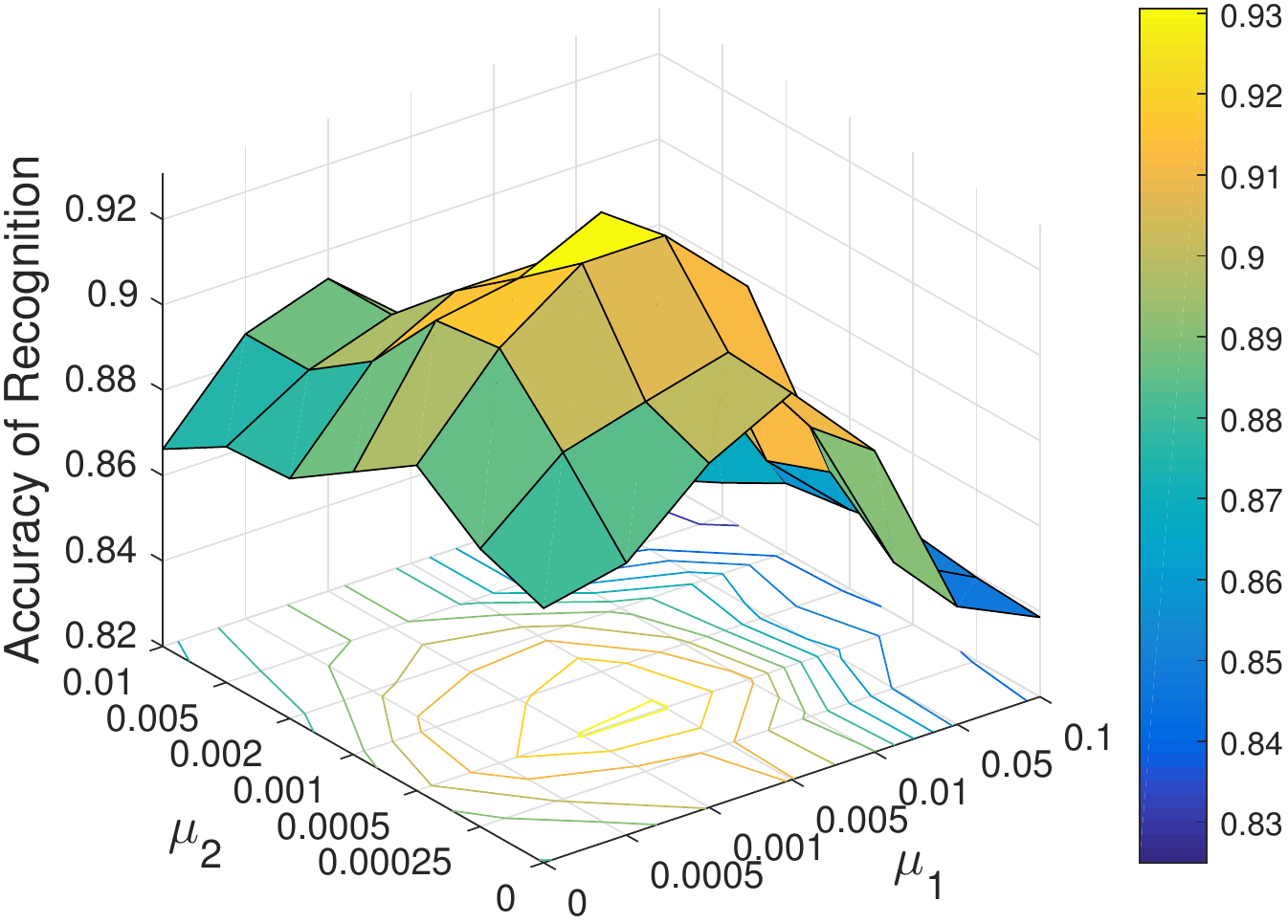}
	}
	\hspace{3mm}
	\subfigure[\emph{LDA-SparLow}]{
		\label{fig:regu3}
		\includegraphics[width=0.3\textwidth]
		{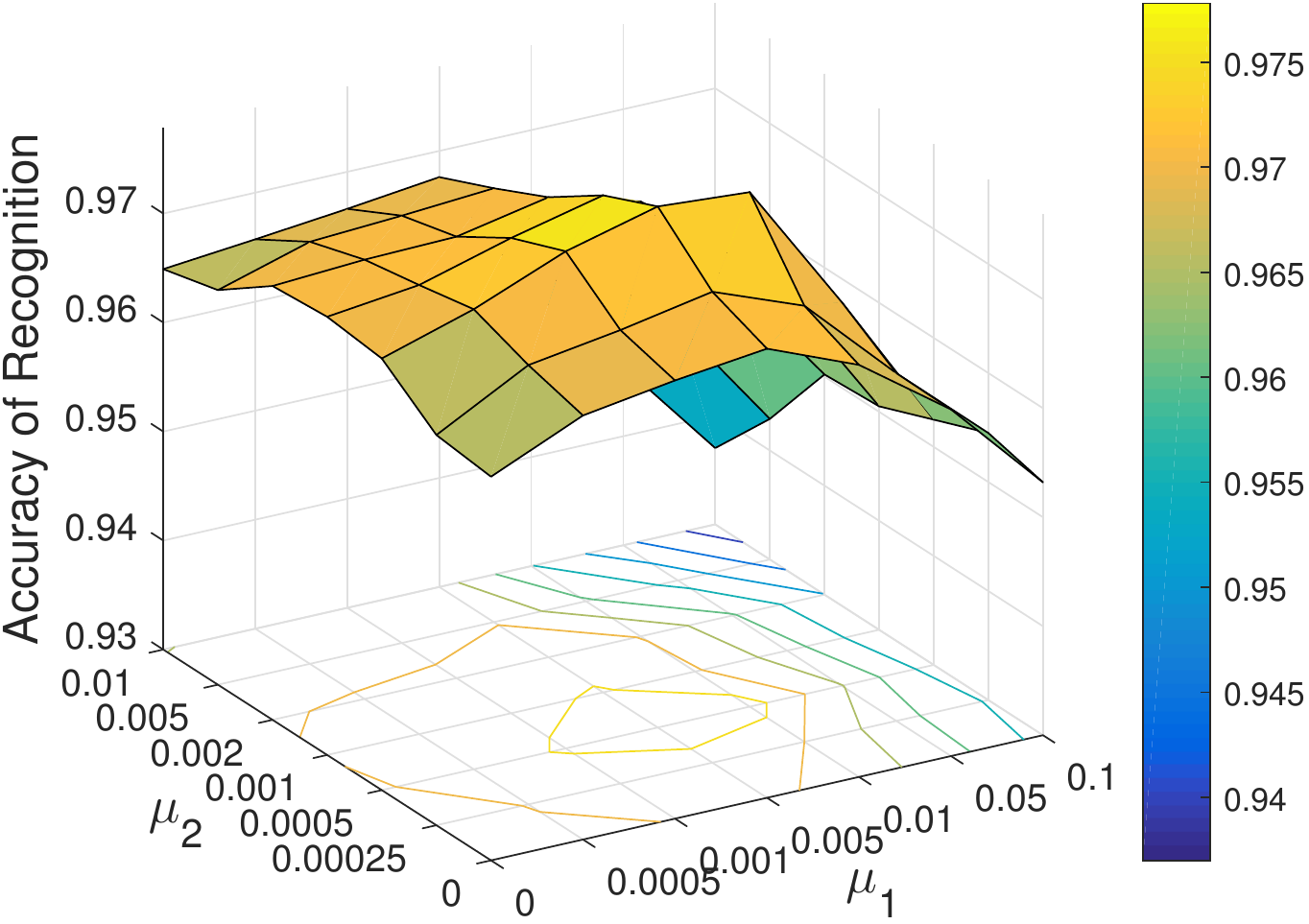}
	}
	\vspace{-1mm}
	\caption{
		\label{fig:regu}
		Impact of the regularizers to the recognition rate on the USPS digits
		(\emph{SparLow/R} refers to \emph{PCA-SparLow} methods without $g_{d}$).
	}
\end{figure*}
\vspace{-1.5mm}

%%%%%%%%%%%%%%%%%%%%%%%%%%%%%%%%%%%%%%%%%%%%%%%%%%%%%%%%%%%%%%%%%%%
%%                            SECTION 5                          %%
%%%%%%%%%%%%%%%%%%%%%%%%%%%%%%%%%%%%%%%%%%%%%%%%%%%%%%%%%%%%%%%%%%% 
\section{Experimental Evaluations}
\label{sec:06}
In this section, we investigate performance of our proposed \textit{SparLow} 
framework in several image processing applications.
For the convenience of referencing, we adopt the following fashion to 
name the algorithms in comparison: for example, the PCA-like \emph{SparLow} 
algorithm described in Section~\ref{sec:3411} is referred to as 
the \emph{PCA-SparLow} and its sequential learning counterpart, which
directly applies a PCA on the corresponding sparse representations, as \emph{SparPCA}.
%\WeiR{over a data-driven dictionary}
%%%=============================================================%%%
%%                           SECTION 5.1                         %%
%%%=============================================================%%% 
\subsection{Experimental Settings}
\label{sec:61}
In unsupervised learning experiments, we employ the K-SVD algorithm 
\cite{ahar:tsp06} to compute an empirically optimal data-driven dictionary,
then initialize \emph{SparLow} algorithms with its column-wise normalized copy,
as required by the regularizer $g_{d}$.
For both supervised and semi-supervised learning, we adopt the same approach to generate
a sub-dictionary for each class, and then concatenate all sub-dictionaries to form 
a common dictionary.
%as initialization. The optimal data-driven dictionary is further
%trained by applying K-SVD algorithm on the whole data.     % \D^{*} data-driven 
%
\begin{figure}[!t]
	\centering
	\includegraphics[width=0.35\textwidth]
	{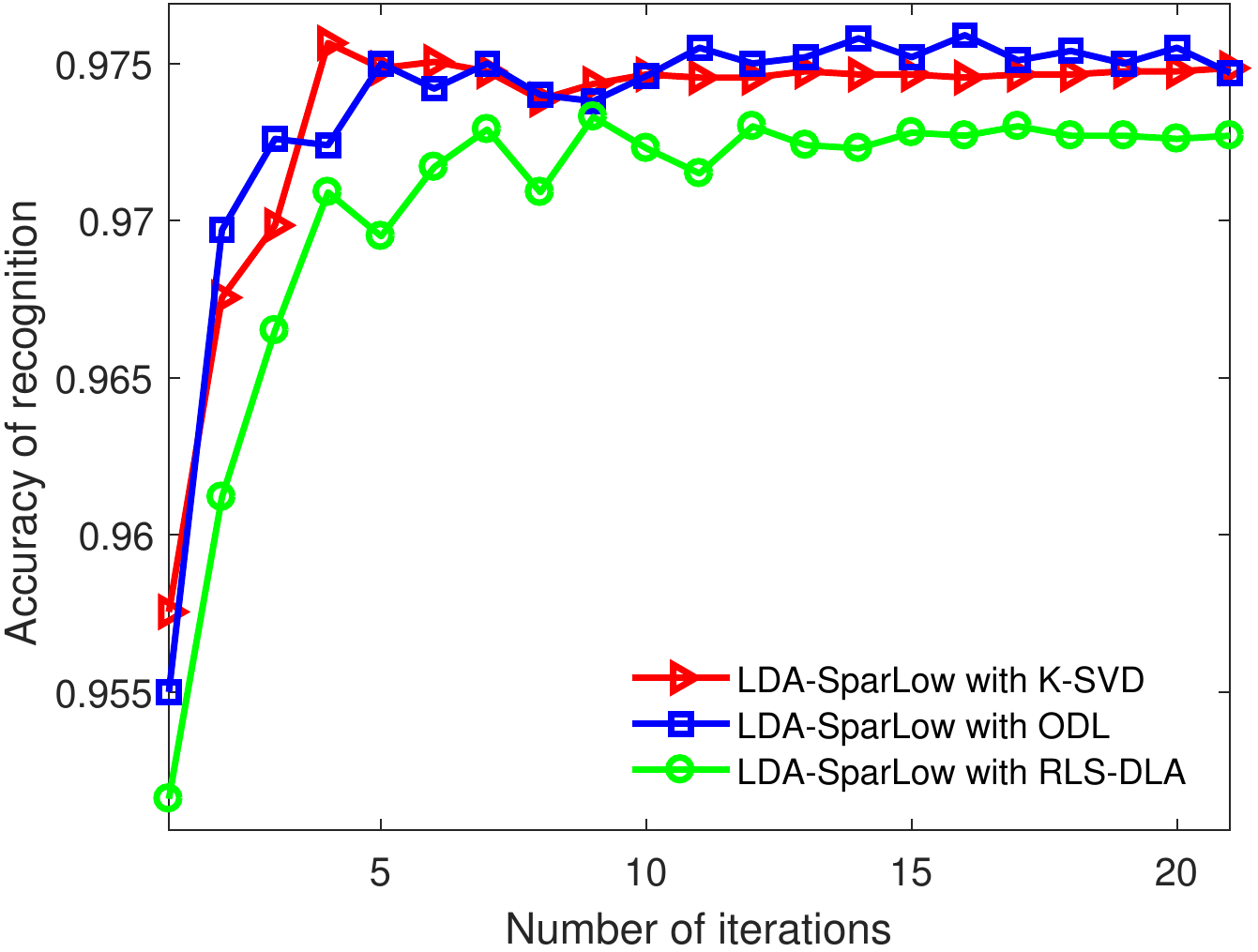}
	\vspace{-2mm}
	\caption{
		\label{fig:diff_init}
	 Trace of performance over optimization process initialized
        with different sparse coding methods on 15-Scenes dataset.
		%Recognition results using proposed \emph{LDA-SparLow} performed on 15-Scenes dataset. 
	}
	\vspace{-3mm}
\end{figure}
 In order to compare the performance of the \emph{SparLow} system with different initial dictionaries,
	Fig.~\ref{fig:diff_init} depicts the optimization process of \emph{LDA-SparLow} performed on the 15-Scenes dataset,
	with the data-driven dictionary being learned by K-SVD, ODL \cite{mair:jmlr10} and RLS-DLR \cite{skre:tsp10}, respectively.
%
% Requiring the dictionary to be ‘close’ to another previously trained dictionary naturally 
% motivates the question as to how different the learned dictionary is from the original dictionary 
% and how sensitive the method is the particular previously learned dictionary. 
%
Furthermore, Fig.~\ref{fig_SupervisedSparLow_convergence} depicts the optimization process of supervised \emph{SparLow} methods
 with the data-driven dictionary being learned by K-SVD.

%\subsection{Optimization Process}
%%
%With the aim of better understanding the proposed \emph{SparLow},
%finally, we show the optimization process of supervised \emph{SparLow} performed on CMU PIE faces, 
%Handwritten {digits} (USPS), and large scale datasets (i.e., Caltech-101, Caltech-256, and Scene-15).
%%For USPS, we use the standard splitting for training and testing subsets.
%%For CMU PIE, Caltech-101, Caltech-256 and Scene-15, $n_{\mathrm{train}}$ is shown in Fig.~\ref{fig_SupervisedSparLow_convergence}.
%%For Caltech-101 and Caltech-256, GSVM is the classifier.
%%For other datasets, we choose $1$NN to classify a test sample.
%All parameter settings are described in previous Sections.
%%{Fig.}~\ref{fig_caltech256_results_supervised}  and \ref{fig_pascal_results_supervised} plot the learning process on Caltech-256 and VOC2007. 
%All sub-figures in Fig.~\ref{fig_SupervisedSparLow_convergence} show that
%the supervised \emph{SparLow} have a good convergence after $10$ or $15$ iterations.
%Note that, the starting points demonstrate the recognition results
%{by} directly applying the LDA, MFA and MVR on sparse representations with respect to initial dictionary $\widehat{\mathbf{D}}$. 

%
With the initial dictionary $\D^{(0)} \in \S(m,r)$ being given, the initial 
orthogonal projection $\P^{(0)} \in \Gr(l,r)$ can be directly obtained by applying 
classic TQ maximization algorithms on the sparse representations of the
samples with respect to $\D^{(0)}$. 
Certainly, when the number of training samples is huge, it is unnecessary to
perform a TQ maximization in order to generate an initialization.
Instead, we employ only a selection of random samples to compute the initial 
orthogonal projection $\P^{(0)}$.

In all experiments, we choose $\sigma = 10^{-3}$ by hand in Eq.~\eqref{eq_main_cost_no_regular}.
 The parameters for $\mu_1, \mu_2, r, l$ in Eq.~\eqref{eq_main_function} and 
$\lambda_{1}, \lambda_{2}$ in Eq.~\eqref{eq:elastic} or $\lambda$ in \textit{KL-divergence} \cite{bagn:nips09}
could be well tuned via performing cross validation.
Images are presented as $m$-dimensional vectors, and normalized to have unit norm.
%
%Let $n$ be the number of all signals which contain $c$ classes, 
%we use $n_{\mathrm{train}}$, $n_{\mathrm{test}}$ to denote the number of total training samples and the number of total testing samples for each class, 
%respectively. Usually, we set $n_i = n_{\mathrm{train}} + n_{\mathrm{test}}$, 
%$n = \sum_{i=1}^{c} n_i$ with $n_i$ being the number of samples from $i^{\mathrm{th}}$ class.
%%
%Additionally, we denote by $n_{\mathrm{train}}^{\mathcal{L}}$ the number of labeled 
%samples from a training set.
%
For datasets without a pre-construction of training set and testing set,
% $n_{\mathrm{train}}$ and $n_{\mathrm{test}}$ samples, 
all experiments are repeated ten times with different randomly constructed training set
and test set, and the average of per-class recognition rates is recorded for each run. 
For most of our experiments, we employ the \textit{elastic net} method 
\cite{zouh:rssb05} to solve the sparse coding problem \eqref{eq:sparse_regression}.
An alternative solution based on \textit{KL-divergence} is also evaluated in the 
application of large scale image processing in Section~\ref{sec:54}.

{Fig.}~\ref{fig_pie_dic_size} plots the recognition 
rates of \emph{LDA-SparLow}, \emph{MFA-SparLow}, SRC, FDDL 
\cite{yang:ijcv14}, and LC-KSVD with varying dictionary sizes (number of atoms). 
In all cases, the proposed methods perform better than SRC and FDDL, 
and {give} significant improvement to LC-KSVD and TDDL. 
This also confirms that an increasing dimension of spare representation
can enhance linear separability for image classification, as
observed in \cite{tehy:jmlr03, ranz:nips06}.
%%%%%%%%%%%%%%%%%%%%%%%%%%%%%%%%%%%%%%%%%%%%%%%%%%
%\begin{figure}[htbp!]
%	\centering
%%	\subfigure[Comparison with different number of training samples]{
%%		\label{fig_pie_recognition_compare_others_SR} %% label for first subfigure
%%		\includegraphics[width=1.9in]{}}
%%	\hspace{0.0001in}
%%	\centering
%	\subfigure[Comparison with different dictionary size]{
%		\label{fig_pie_recognition_different_dicSize} %% label for first subfigure
%		\includegraphics[width=1.9in]{PIE_new/plot_pie_Diff_sizeDic4.pdf}}
%	\caption{ \textcolor{red}{Comparison on recognition results with different dictionary sizes for PIE faces. 
%		%	Comparison on recognition results with different number of training samples and different dictionary size for PIE faces. 
%		The classifier is 1NN.}
%		\label{fig_pie_lda_compare_with_others} %% label for second subfigure
%	}
%\end{figure}

%
\begin{figure}[!t]
	\centering
	\includegraphics[width=0.35\textwidth]
	{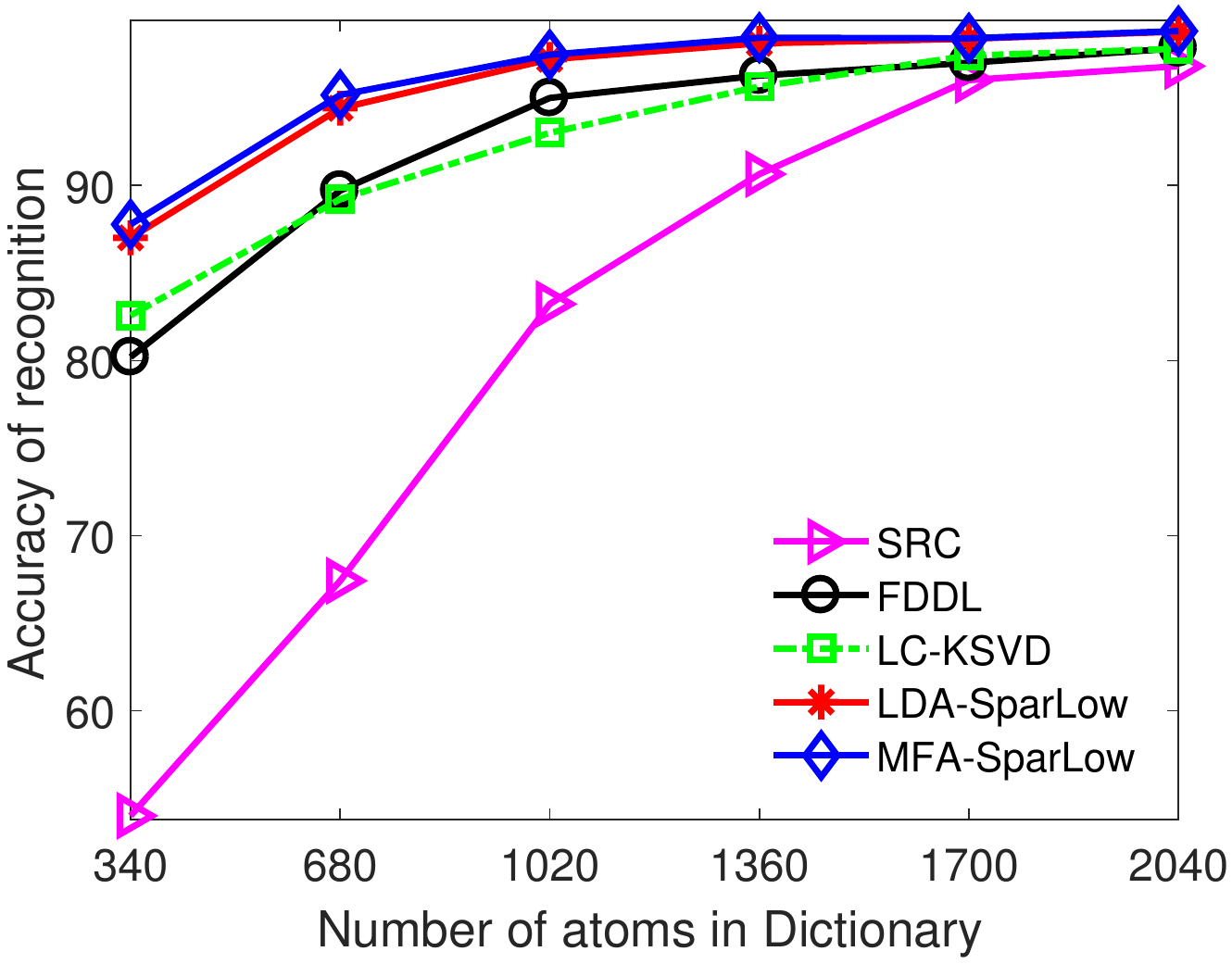}
	\vspace{-2mm}
	\caption{
		\label{fig_pie_dic_size}
		Comparison on recognition results with different dictionary sizes for PIE faces. 
			The classifier is 1NN.
		%Recognition results using proposed \emph{LDA-SparLow} performed on 15-Scenes dataset. 
	}
	\vspace{-3mm}
\end{figure}
%
%%%%%%%%%%%%%%%%%%%%%%%%%%%%%%%%%%%%%%%%%%%%%%%%%%%%%%
%%%=============================================================%%%
%%                           SECTION 5.2                         %%
%%%=============================================================%%% 
\subsection{Tuning of Parameters}
\label{sec:62}
Training a \emph{SparLow} model can be computationally expensive.
We firstly investigate the impact of various factors of the learning model.
All experiments in this subsection were conducted on the USPS dataset \cite{hull:pami94},
which contains $7291$ training images and $2007$ testing images.
After applying \textit{SparLow} models on the images to produce the corresponding
low dimensional representations, we employ the one-nearest neighbor ($1$NN) 
method to test the performance of the \textit{SparLow} in terms of classification.

%%%~~~~~~~~~~~~~~~~~~~~~~~~~~~~~~~~~~~~~~~~~~~~~~~~~~~~~~~~~~~~~%%%
%%                          SECTION 5.2.1                        %%
%%%~~~~~~~~~~~~~~~~~~~~~~~~~~~~~~~~~~~~~~~~~~~~~~~~~~~~~~~~~~~~~%%%
\subsubsection{Weighing the Regularisers}
\label{sec:521}
Here, we investigate the impact of the two regularizers $g_{c}$ and $g_{d}$ 
on the performance of the \emph{SparLow}, i.e., the inference of weighing parameters
$\mu_{1}$ and $\mu_{2}$ in Eq.~\eqref{eq_main_function}.
Firstly, we test a special case that $\mu_{2} = 0$, i.e., without the \emph{data
regularizer}.
Fig.~\ref{fig:regu1} shows the box plot of results of applying the $1$NN classification 
ten times on the USPS database with random initializations.
As usual, the recognition accuracy is chosen as 
	the lowest one of recognition results after the 
	algorithm running $20$ iterations of each run, i.e., the converged value of recognition in this work.
The results suggest that the regularizer $g_{d}$ has the capability of ensuring 
good reconstruction, and achieving stable discriminations.
	Fig.~\ref{fig_SupervisedSparLow_convergence} depicts the trace of performance over optimization process 
	of supervised \emph{SparLow} on CMU PIE faces with $n_{\mathrm{train}} = 120$ and USPS, respectively.
	All sub-figures in Fig.~\ref{fig_SupervisedSparLow_convergence} and Fig.~\ref{fig:regu1} show that
	%the supervised \emph{SparLow} achieve convergence after $20$ iterations for PIE and USPS.
	%%
	%The optimization processes depicted in Fig.~\ref{fig_USPS_results_supervised1} and \ref{fig_USPS_results_supervised1}
	the regularizers $g_{c}$ and $g_{d}$ can highly improve the stability of recognition accuracy
	after convergence.
\begin{figure*}[!ht]
	\centering
	\subfigure[\emph{Unsupervised \emph{SparLow}}]{
		\label{fig:redu1}
		\includegraphics[width=0.3\textwidth]
		{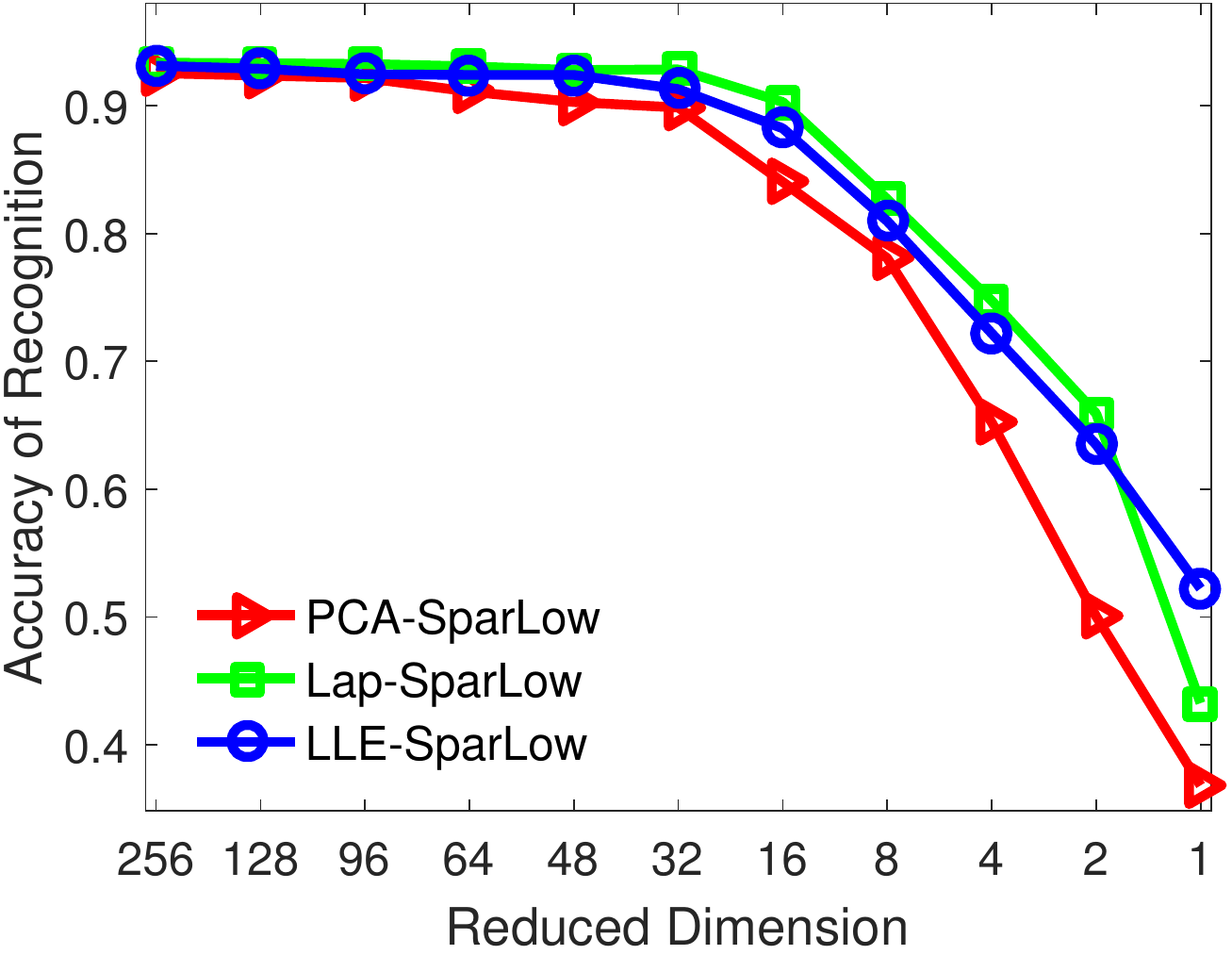}
	}
	\hspace{3mm}
	\subfigure[\emph{Supervised \emph{SparLow}}]{
		\label{fig:redu2}
		\includegraphics[width=0.3\textwidth]
		{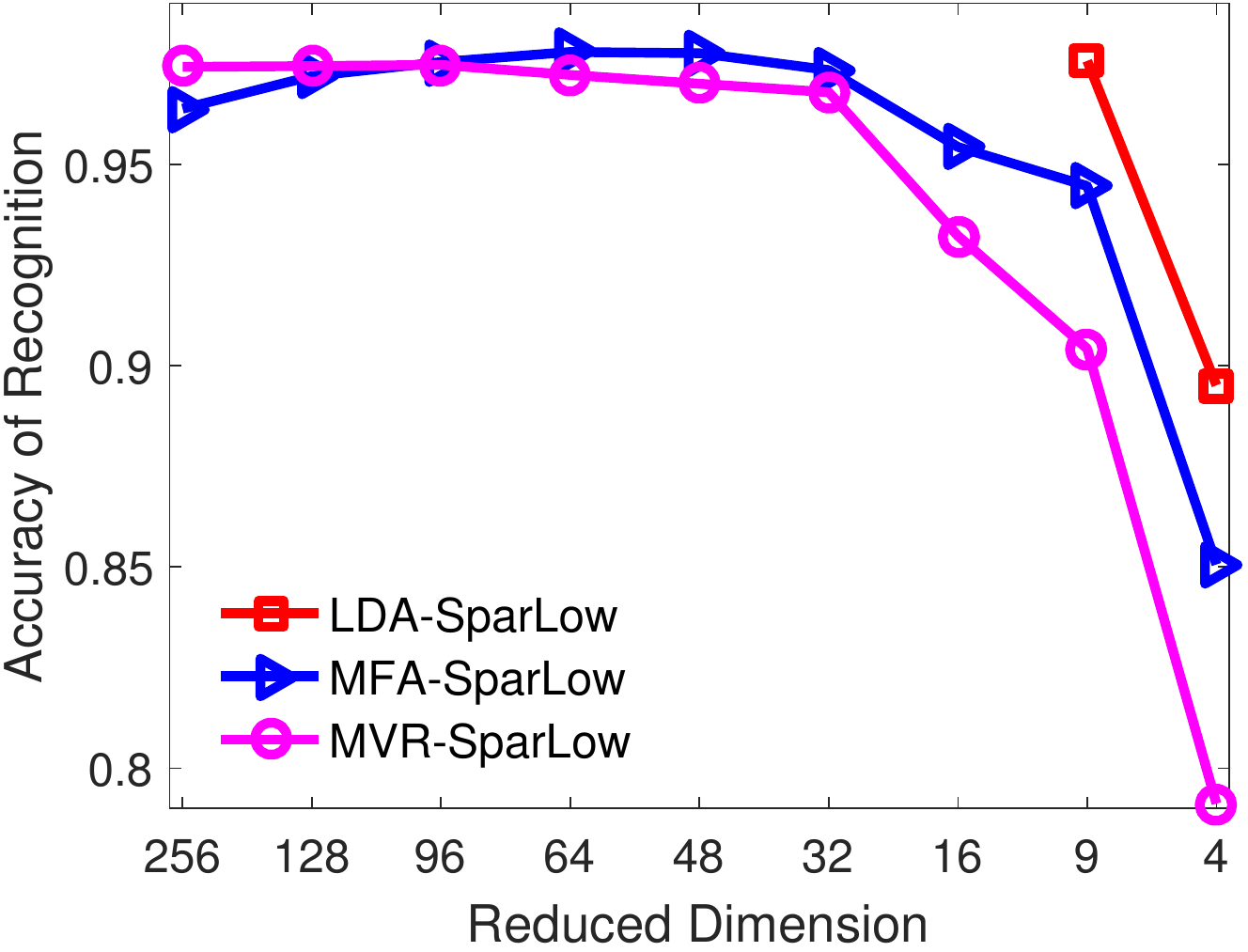}
	}
	\hspace{3mm}
	\subfigure[\emph{Semi-supervised \emph{SparLow}}]{
		\label{fig:semi3}
		\includegraphics[width=0.288\textwidth]
		{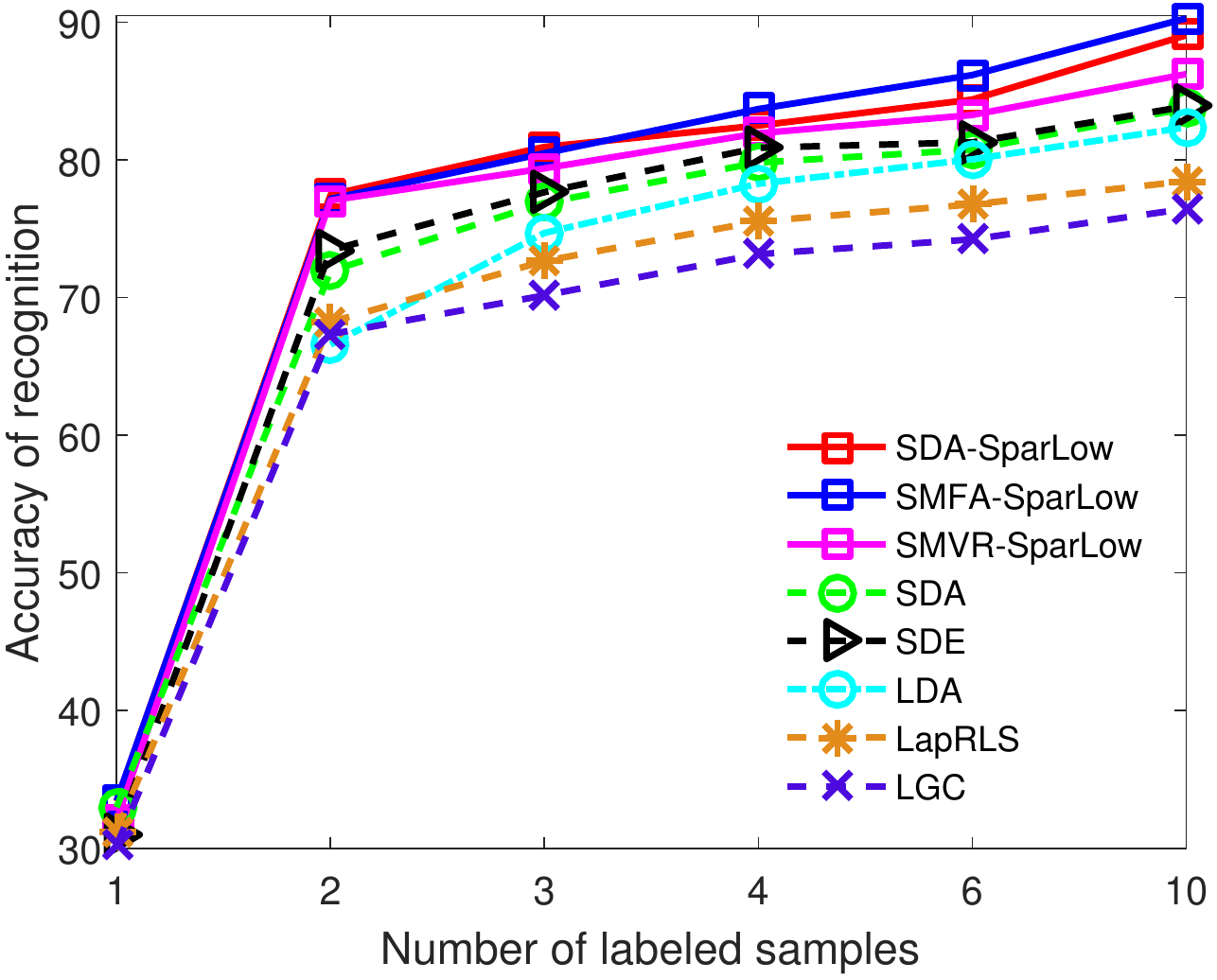}
	}
	\vspace{-1mm}
	\caption{
		\label{fig:redu_semi}
		Impact of targeted low dimensionality and number of labelled samples 
		to the recognition rate of $1$NN classification on the USPS digits.
	}
\end{figure*}
%%%%%%%%%%%%%%%%%%%%%%%%%%%%%%%%%%%%%%%%%%%%%%%%%%
\begin{figure*}[htb!]
	% [H]
	\centering
	\subfigure[PIE: with $g_{c}$ and $g_{d}$ ]{
			\label{fig_pie_results1} %% label for first subfigure
		\includegraphics[width=1.65in]{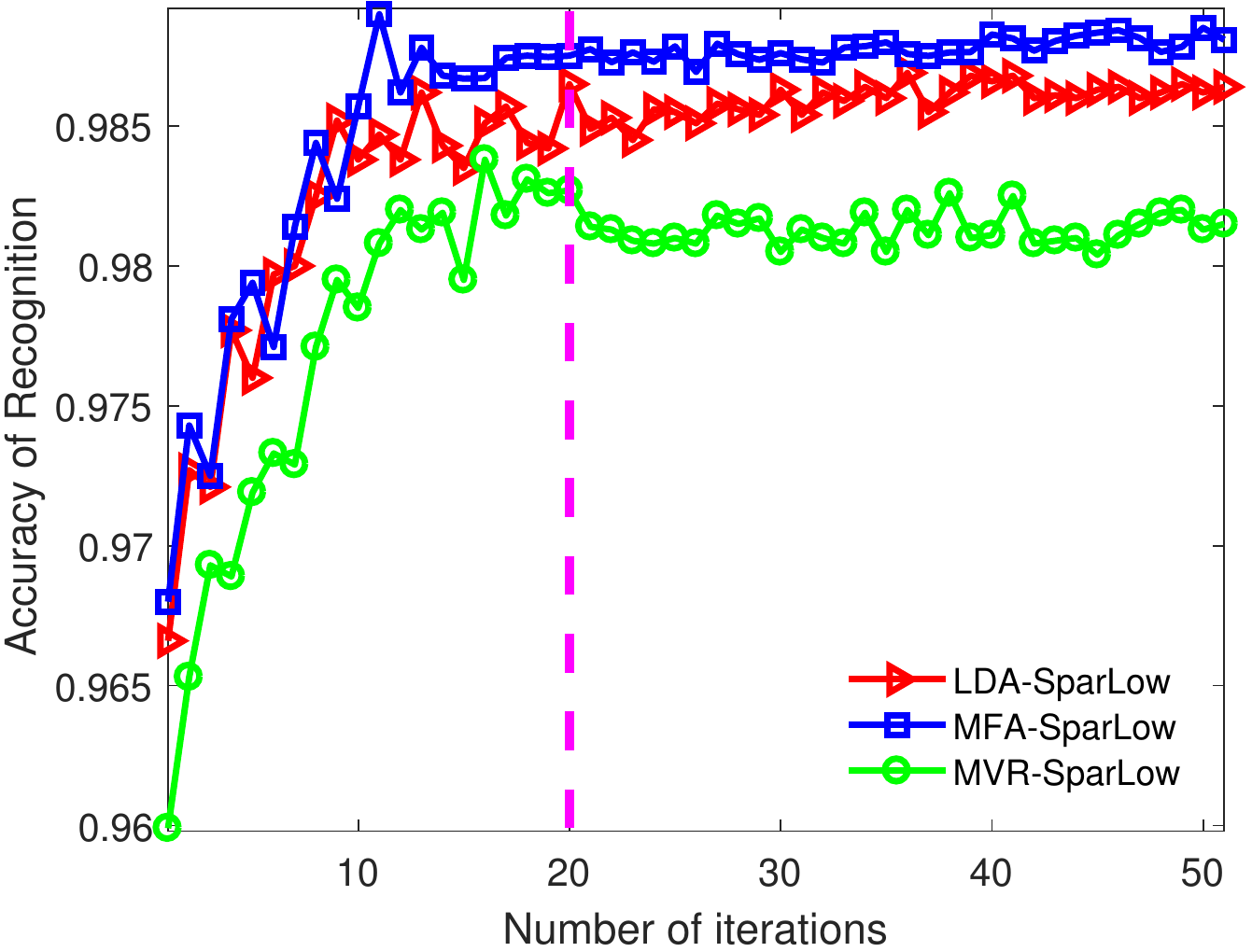}} % plot_pie_convergence2
	\subfigure[PIE: without $g_{c}$ and $g_{d}$]{
		\label{fig_pie_results2} %% label for first subfigure
		\includegraphics[width=1.65in]{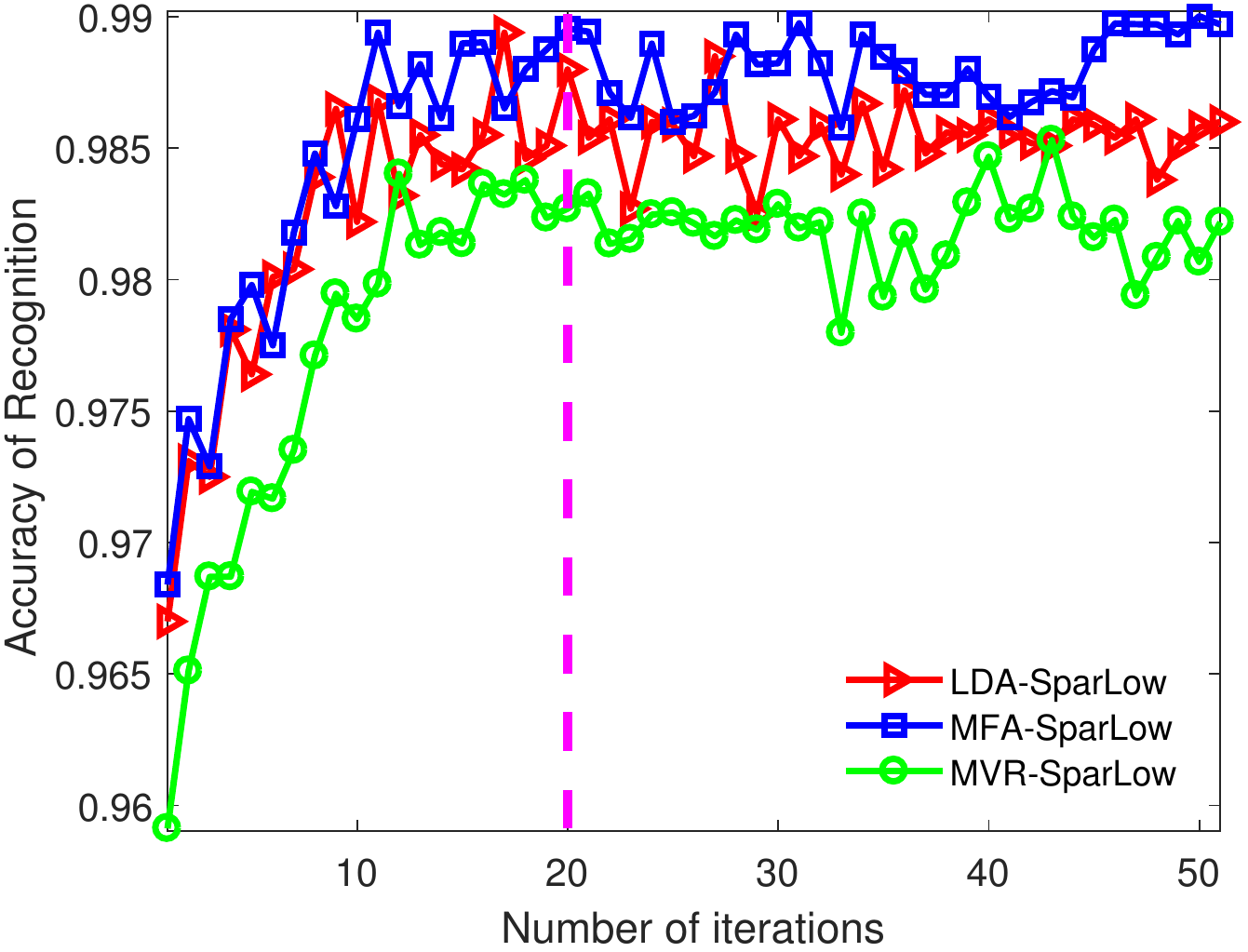}} %Plot_USPSRecognition
	\hspace{0.001in}
	%
	%	\subfigure[MNIST]{
	%		\label{fig_MNIST_results} %% label for first subfigure
	%		\includegraphics[width=1.65in]{}}
	%	%
	\subfigure[USPS: with $g_{c}$ and $g_{d}$] % Caltech-101, $n_{\mathrm{train}} = 30$
	{
		\label{fig_USPS_results_supervised1}
		\includegraphics[width=1.65in]{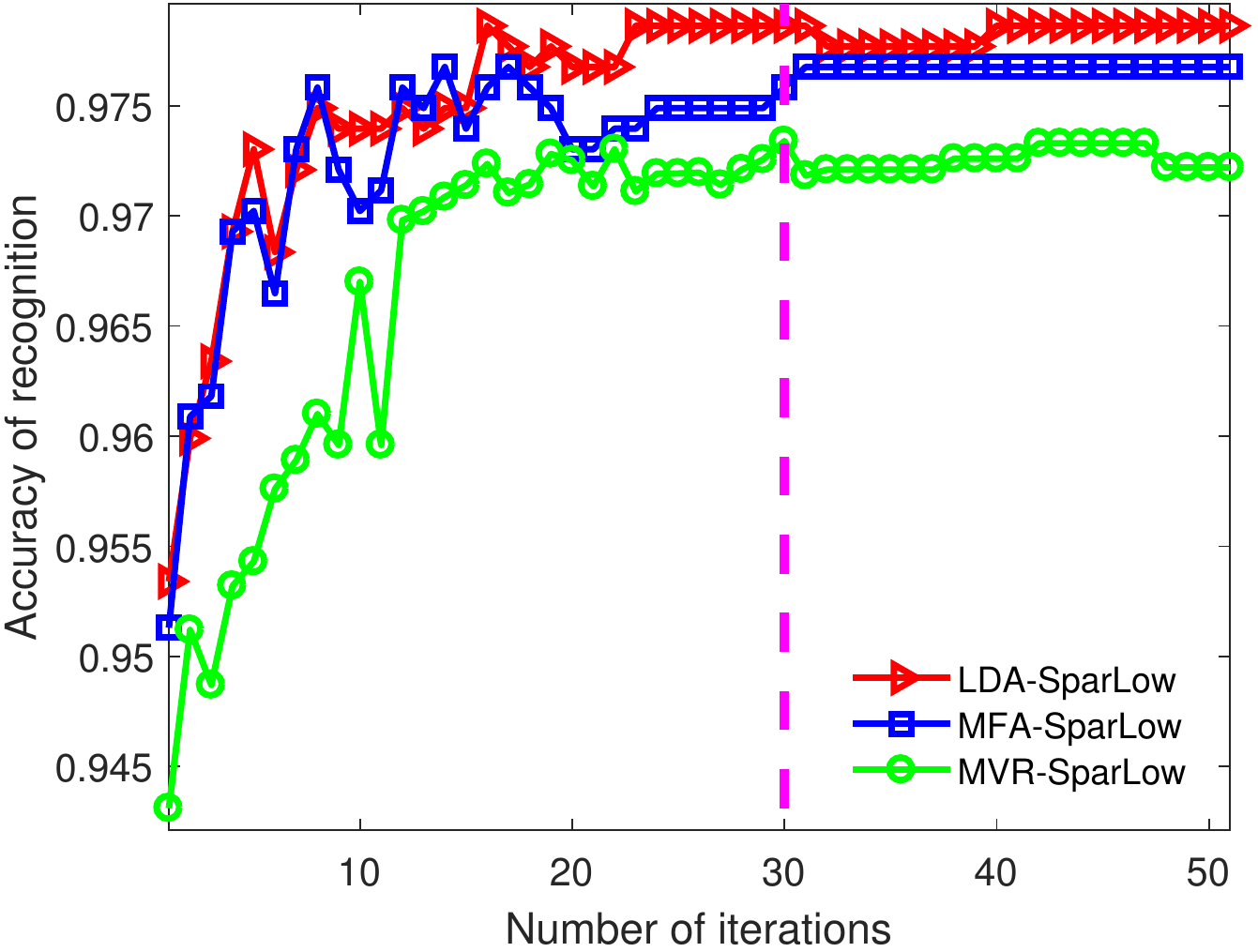}  % caltech101recognition_supervisedLearning
	} 
	\hspace{0.001in} 
	%	%
	%	\subfigure[Caltech-256, $n_{\mathrm{train}} = 60$]
	%	{
	%		\label{fig_caltech256_results_supervised}
	%		\includegraphics[width=1.65in]{} 
	%	}  
	%	%
	\subfigure[USPS: without $g_{c}$ and $g_{d}$]
	{
		\label{fig_USPS_results_supervised2}
		\includegraphics[width=1.65in]{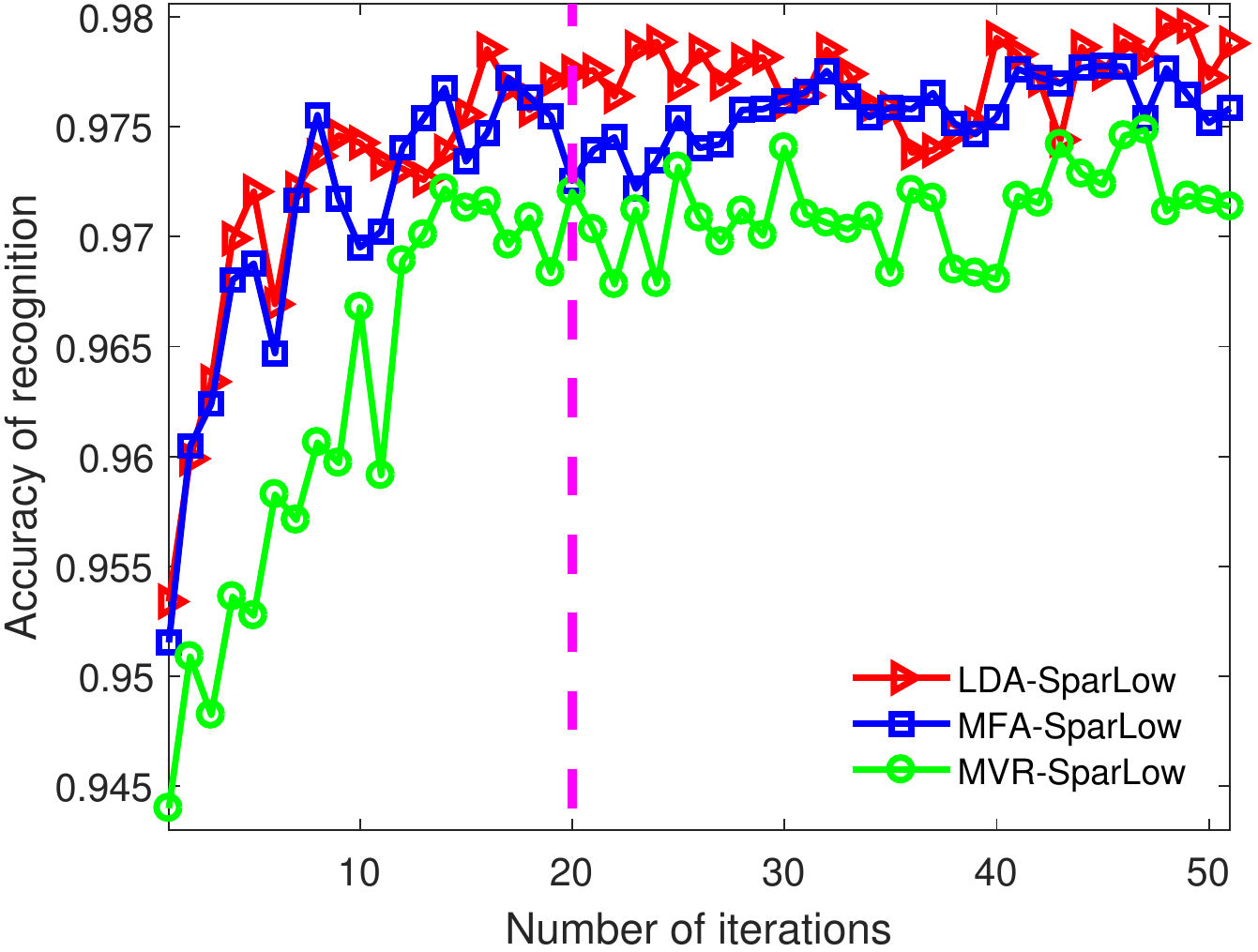} %fifteenscene_recognition_supervisedLearning
	}
	\caption{ Trace of performance over optimization process of supervised \emph{SparLow} with or without regularizers. 
		% different image datasets. 
		%Recognition results using \emph{SparLow} on COIL100 database, and the DR methods using \emph{LDA-SparLow} and \emph{MFA-SparLow} with $1-$NN classifier. 
		\label{fig_SupervisedSparLow_convergence} %% label for second subfigure
	}
\end{figure*}
%%%%%%%%%%%%%%%%%%%%%%%%%%%%%%%%%%%%%%%%%%%%%%%%%%%%%%

We further investigate the influence of different weighing factors $\mu_{1}$ and $\mu_{2}$
for both the \emph{PCA-SparLow} and \emph{LDA-SparLow} on the 1NN classification problem.
The experiments are performed with $r = 1000$, and $l = 50$ for \emph{PCA-SparLow}, $l = 9$ for \emph{LDA-SparLow}.
 Fig.~\ref{fig:regu2} and Fig.~\ref{fig:regu3} depict the
 1NN classification results of \emph{PCA-SparLow} and \emph{LDA-SparLow}
 with respect to different weighing factors $\mu_{1}$ and $\mu_{2}$.
It is clear from Fig.~\ref{fig:regu2} and Fig.~\ref{fig:regu3} that for both \emph{PCA-SparLow} and \emph{LDA-SparLow}, 
suitable choices of $\mu_{1}$ and $\mu_{2}$ can improve the performance of the 
\emph{SparLow} system.
%

%%%~~~~~~~~~~~~~~~~~~~~~~~~~~~~~~~~~~~~~~~~~~~~~~~~~~~~~~~~~~~~~%%%
%%                         SECTION 5.2.2                         %%
%%%~~~~~~~~~~~~~~~~~~~~~~~~~~~~~~~~~~~~~~~~~~~~~~~~~~~~~~~~~~~~~%%%
\subsubsection{Targeted Low Dimensionality}
\label{sec:522}
The main goal of this work is to learn appropriate low dimensional representations of
images.
In this experiment, we investigate the impact of the choice of the targeted low 
dimensionality $l$ to the performance of \emph{SparLow}.
As depicted in Fig.~\ref{fig:redu1}, \emph{SparLow} models of three classic 
unsupervised learning algorithms are examined in terms of recognition accuracy 
with respect to different targeted low dimensionality $l$.
It is clear that, when $l\ge 32$, all three tested unsupervised \emph{SparLow} methods 
perform almost equally well for this specific task.
Similar trends are also observed in the supervised \emph{SparLow}'s as in 
Fig.~\ref{fig:redu2}.
Hence, we conclude that, after bypassing a threshold of the targeted 
low dimensionality, the performance of the corresponding \emph{SparLow} method
is stable and reliable.

%%%~~~~~~~~~~~~~~~~~~~~~~~~~~~~~~~~~~~~~~~~~~~~~~~~~~~~~~~~~~~~~%%%
%%                          SECTION 5.2.3                        %%
%%%~~~~~~~~~~~~~~~~~~~~~~~~~~~~~~~~~~~~~~~~~~~~~~~~~~~~~~~~~~~~~%%%
\subsubsection{Number of Labelled Samples} 
\label{sec:523}
For semi-supervised learning, one important aspect is surely the number of labelled 
samples.
We compare the \emph{SparLow} counterparts of three state of the art semi-supervised 
learning algorithms, i.e., \emph{SDA-SparLow} versus SDA \cite{caid:iccv07},
%and TR-FSDA \cite{huang2012semi_nnls},  
\emph{SLap-SparLow} versus SDE \cite{yugx:pr12},
\emph{SMVR-SparLow} versus label propagation methods, e.g., LapRLS \cite{belk:jmlr06}
and LGC \cite{zhou:nips04}.

More specifically, the dictionary $\widehat{\D}$ is initialized by Laplacian 
\emph{SparLow}.
For label propagation methods, e.g., LapRLS and LGC, we use the same settings as 
in \cite{belk:jmlr06, zhou:nips04} for classification.
We choose $\mu_1 = 2.5 \times 10^{-4}, \mu_2 = 5 \times 10^{-3}$, 
$\lambda_1 = 0.2, \lambda_2 = 10^{-3}$ for all tests.
Moreover, we set $\alpha = 0.1$ for \emph{SDA-SparLow} and \emph{SMVR-SparLow}, 
$\alpha_1 = 0.1, \alpha_2 = 0.01$ for \emph{SMFA-SparLow}.
The neighborhood size is set to $20$. 
Our results in Fig.~\ref{fig:semi3} show that the three semi-supervised
\emph{SparLow} methods consistently outperform all other state of the art methods.
It is also worth noticing that with an increasing number of labeled samples,
semi-supervised \emph{SparLow} methods demonstrate greater advantages over
their conventional counterparts. 

%%%=============================================================%%%
%%                           SECTION 5.3                         %%
%%%=============================================================%%% 
\subsection{Evaluation of Disentanglability}
\label{sec:53}
In this subsection, we investigate the disentanglability of the proposed \emph{SparLow}
framework in the setting of unsupervised learning,
% \textcolor{red}{in the settings of supervised learning and unsupervised learning}, especially, 
%the latter of 
which is arguably to be a challenging
scenario to test the ultimate goal of representation learning, i.e., 
to automatically disentangle underlying discriminant factors within the data.
%
%\textcolor{red}{We then evaluate the supervised \emph{SparLow} on disentangling the class information}
%
%a disentangling factor of variation corresponds to an image attribute that
%can be discerned consistently across a set of images, such as
%the class information, illumination, the pose or color of faces, etc.
%Typical example like human action videos,
%
%the disentangling factors that discovering the variation of image data could be\
%
	    Hereby, the disentangling factors of variation could be
		discerned consistently across a set of images,   
	%	to describe the discrimination of images,
		such as the class information, various levels of illuminations,
		resolutions, sharpness, camera orientations, the expressions and the poses of faces, etc.
%
%		Typical example like human action videos, such factors could be arbitrary illuminations, 
%		poses, viewpoints, background clutters, and occlusion.
%
    In the following, we take the disentangling factors of the class information, 
    the illuminations, the expressions and the poses of faces
      as examples to evaluate the disentanglability of the proposed \emph{SparLow} framework.
%	A factor of variation corresponds to an image attribute that
%	can be discerned consistently across a set of images, such
%	as the pose or color of objects. 
%%
%The obtained features should represent the \emph{disentangling factors of variation}, 
%i.e., explanatory factors of the data that tend to change independently 
%of each other in the input distribution \cite{beng:pami13}.
%
%	The layer of sparse representation is used to identify 
%underlying self-explanatory factors of data, while the layer of low dimensional projection
%is used for disentangling underlying discriminative factors in sparse representation. 
%%	measuring the discriminative variations of data, i.e.,
%Thereby, both the sparse codes and the 
%projection are learned jointly in one single optimization process
%to find the discrimination underneath the raw data.

	Our first experiments are performed on the CMU-PIE face dataset \cite{caid:iccv07},
	which consists of $68$ human subjects with $41,368$ face images in total.
 	In our experiments, we set  
	$\lambda_1 = 10^{-2}, \lambda_2 = 10^{-5},$ $\mu_1 = 2.5 \times 10^{-4}, \mu_2 = 5 \times 10^{-3}$. 
	It is often considered to possess strong class information, compared to other classic
	recognition benchmark datasets.
	We follow the same experimental setting as described in \cite{caid:iccv07} and  % zhang2011close_iccv
	choose the frontal pose and use all the images under different illuminations, 
	so that we get $64$ images for each subject with the resized scale $32 \times 32$.
	We compare three classic unsupervised learning methods, namely, PCA, ONPP, and OLPP, with
	its sequential sparsity based methods, and our proposed \emph{SparLow} methods.
	Fig.~\ref{fig_pie_compare} reports the performance of the three families of methods,
	applied to $1$NN classification problems on the CMU-PIE dataset.
	It is clear that the \emph{SparLow} methods outperform by far the state of 
	the art algorithms consistently.
\begin{figure}[h]
	\centering
	\includegraphics[width=0.42\textwidth]{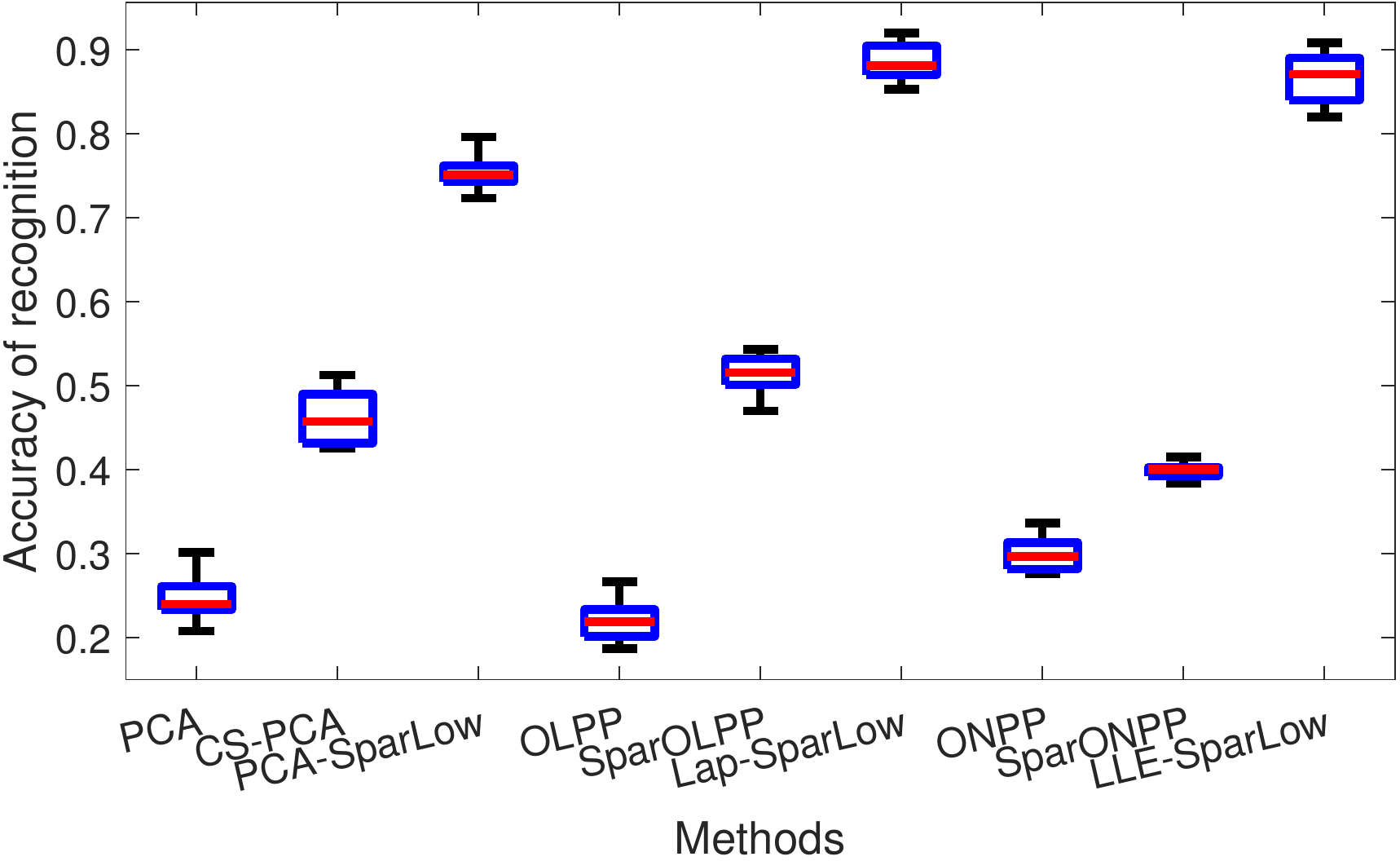} 
	\vspace{-3mm}
	\caption{\label{fig_pie_compare}
		Face recognition on $68$ class PIE faces. The classifier is $1$NN. 
		Randomly choose $8160$ training samples and $3394$ testing samples.
	}
	\vspace*{-3mm}
\end{figure}

	A popular intuitive approach to evaluate disentanglability of representation learning 
	methods is via a visualization of the extracted representations.
	Similar to the concepts of \emph{Fisherfaces} in \cite{belh:pami97}, 
	\emph{eigenfaces} in \cite{belh:pami97}, \emph{laplacianfaces} in \cite{hexf:pami05}, 
	\emph{orthogonal laplacianfaces} in \cite{caid:tip06}, and \emph{orthogonal LLEfaces} 
	in \cite{rowe:science00}, 
%
%\WeiC{we construct the $j^{\mathrm{th}}$ \emph{SparLow} 
%face/factor $\bm{\upsilon}_{j}$ as
%%
%\begin{equation}
%%\label{eq_data_features}
%	\bm{\upsilon}_j = {\D} {\mathbf{u}}_j \in \mathbb{R}^{m},
%\end{equation}
%%
%with $\mathbf{u}_j$ being the $j^{\mathrm{th}}$ column vector of projection matrix ${\U}$. }
%
 we construct the $j^{\mathrm{th}}$ \emph{SparLow} 
	facial disentanglement $\bm{\upsilon}_{j}$ as
	\begin{equation}
	\label{eq_data_features}
	\bm{\upsilon}_j = {\D} {\mathbf{u}}_j \in \mathbb{R}^{m},
	\end{equation}
	with $\mathbf{u}_j$ being the $j^{\mathrm{th}}$ column vector of projection matrix ${\U}$.
	Fig.~\ref{fig_pie_3d_facial_features:a} shows the first ten \emph{eigenfaces}, 
	\emph{laplacianfaces}, and \emph{LLEfaces} from top to bottom,
	while Fig.~\ref{fig_pie_3d_facial_features:b} depicts the first ten basis vectors 
	of learned disentangling factors of variation for \textit{PCA-SparLow}, \textit{Lap-SparLow} and 
	\textit{LLE-SparLow}, accordingly. % features
 Clearly, our learned facial features in Fig.~\ref{fig_pie_3d_facial_features:b}
	capture  more factors of variation in faces, such as varying poses and expressions (e.g., smile), 
	than the state of the arts in Fig.~\ref{fig_pie_3d_facial_features:a}.
	
%	It shows that (i) our learned facial features
%	are more prominent, especially for laplacianfaces and LLE
%	faces, (ii) our learned facial features captures richer information, such as varying pose and expression (e.g., smile)
	
%
\begin{figure*}
	\centering
	%\vspace{-1mm}
	\subfigure[Features extracted from original data]{
		\begin{minipage}[t]{0.455\textwidth}
			\includegraphics[width=\textwidth]
			{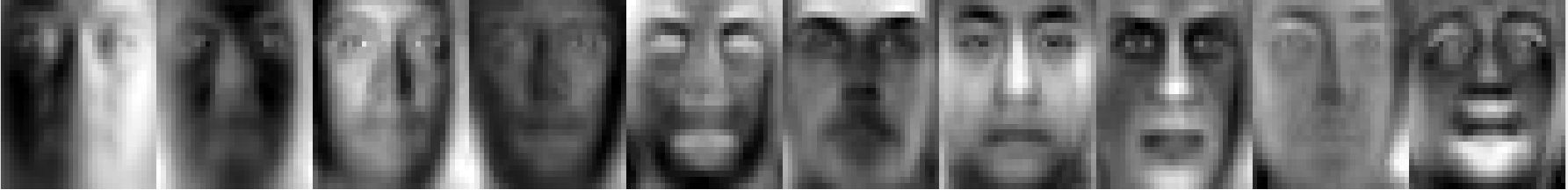}
			\includegraphics[width=\textwidth]
			{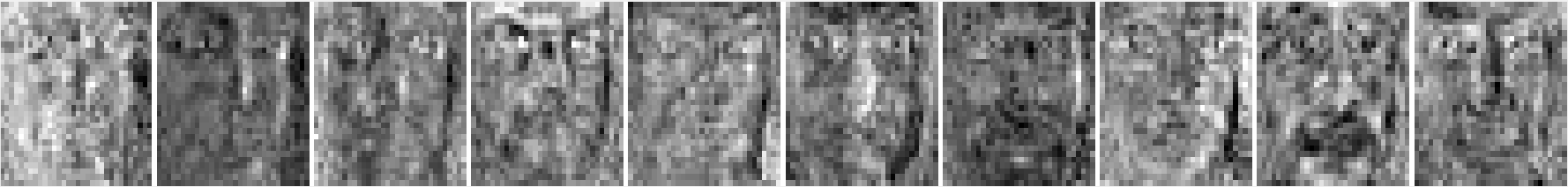}
			\includegraphics[width=\textwidth]
			{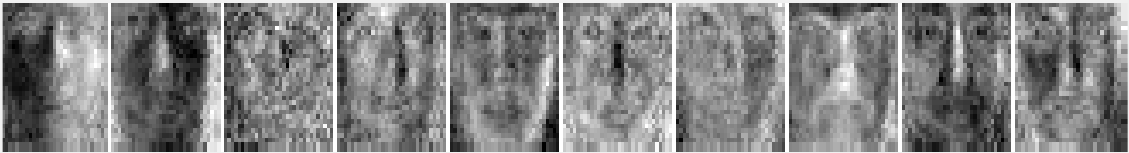}
			\vspace{-1mm}
		\end{minipage}
		\label{fig_pie_3d_facial_features:a}
	}
	%
	%\hspace{5mm}
	\begin{minipage}[t]{0.05\textwidth}
		\centering
		\vspace{-5.5mm}
		(1) \\[5.5mm]		% PCA eigen face
		(2) \\[5.5mm]		% 
%		(3) \\[5.5mm]
		(3) 
	\end{minipage}
	\subfigure[Features extracted from sparse representations]{
		\begin{minipage}[t]{0.455\textwidth}
			\includegraphics[width=\textwidth]
			{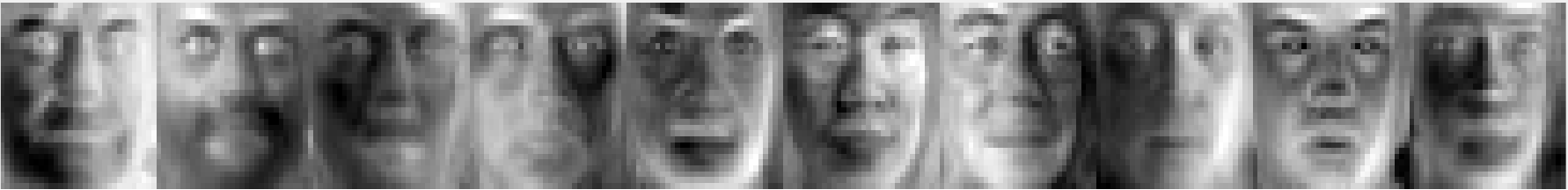}
			\includegraphics[width=\textwidth]
			{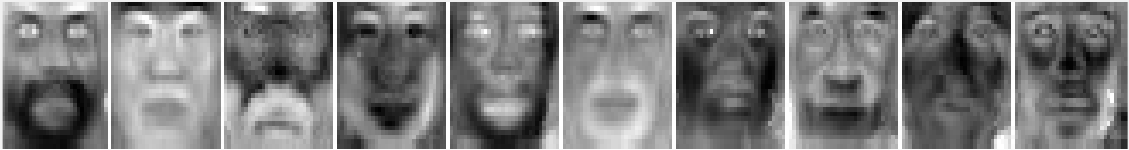}
			\includegraphics[width=\textwidth]
			{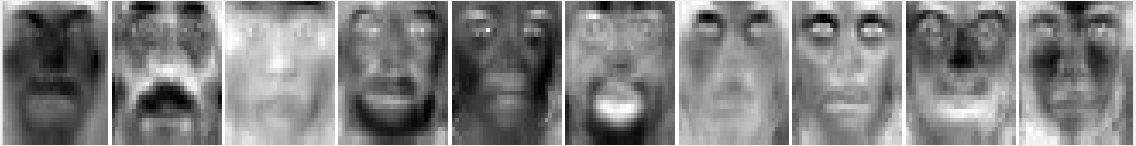}
			\vspace{-1mm}
		\end{minipage}
		\label{fig_pie_3d_facial_features:b}
	}
	\vspace{-3mm}
	\caption{Visualisation of facial features on PIE faces \cite{caid:iccv07}. 
		The presented features are generated via Eq.~\eqref{eq_data_features}.
		From top to bottom: (1) PCA eigenfaces; (2) Laplacianfaces; (3) LLEfaces.
		% ; (4) Fisherfaces. 
	%	It needs to draw clear expression, such as smile and pose. 
		\label{fig_pie_3d_facial_features}
	}
	\vspace{-3mm}
\end{figure*}
\begin{figure*}[!t]
	\centering
	\subfigure[\emph{OLPP Family}]{
		\begin{minipage}[t]{0.27\textwidth}
			\includegraphics[width=1.04\textwidth]{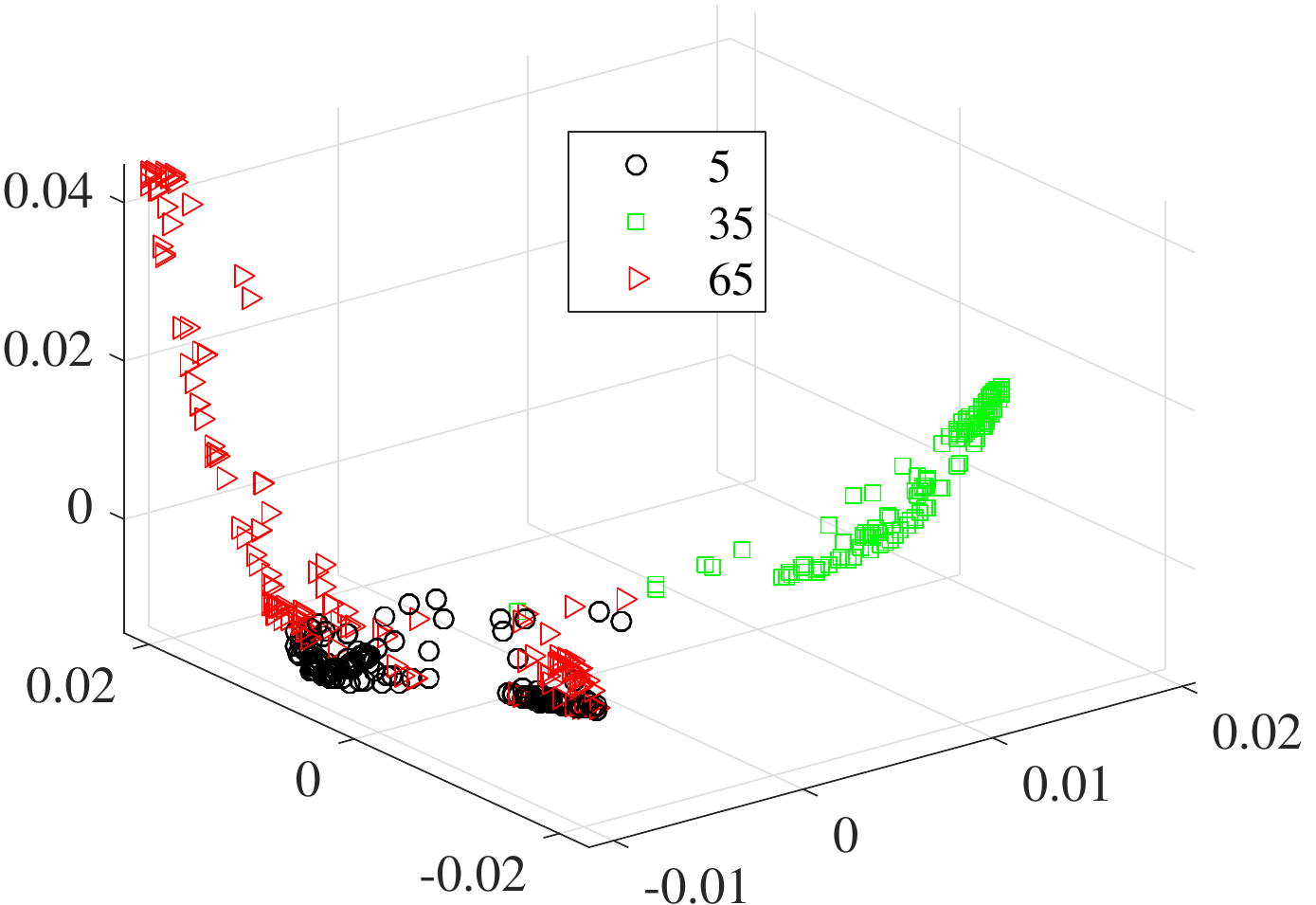}
			\includegraphics[width=1.0\textwidth]{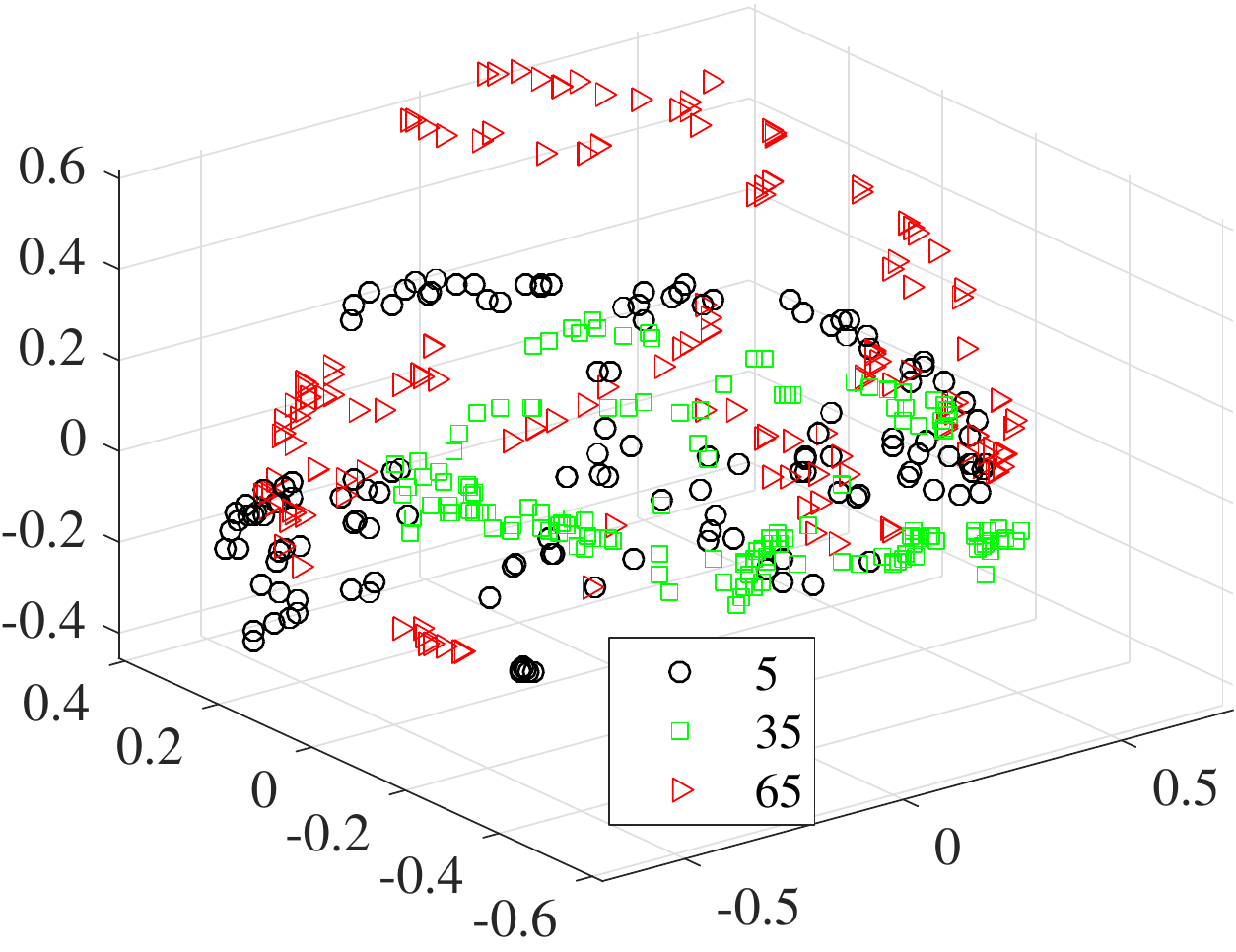}
			\includegraphics[width=1.04\textwidth]{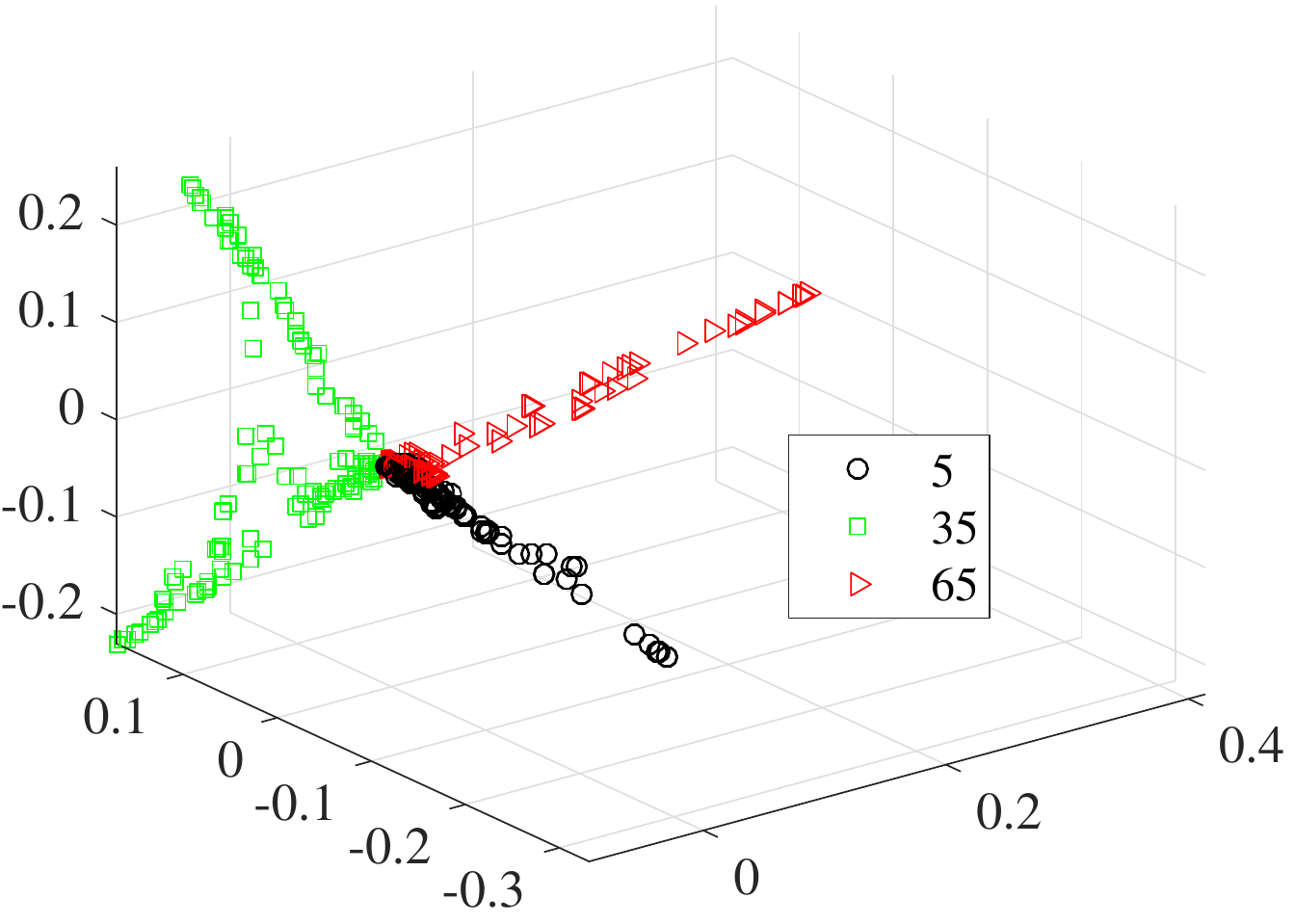}
			\vspace{0.2mm}
		\end{minipage}
	}
	\hspace{5mm}
	\subfigure[\emph{PCA Family}]{
		\begin{minipage}[t]{0.27\textwidth}
			\includegraphics[width=1.05\textwidth]{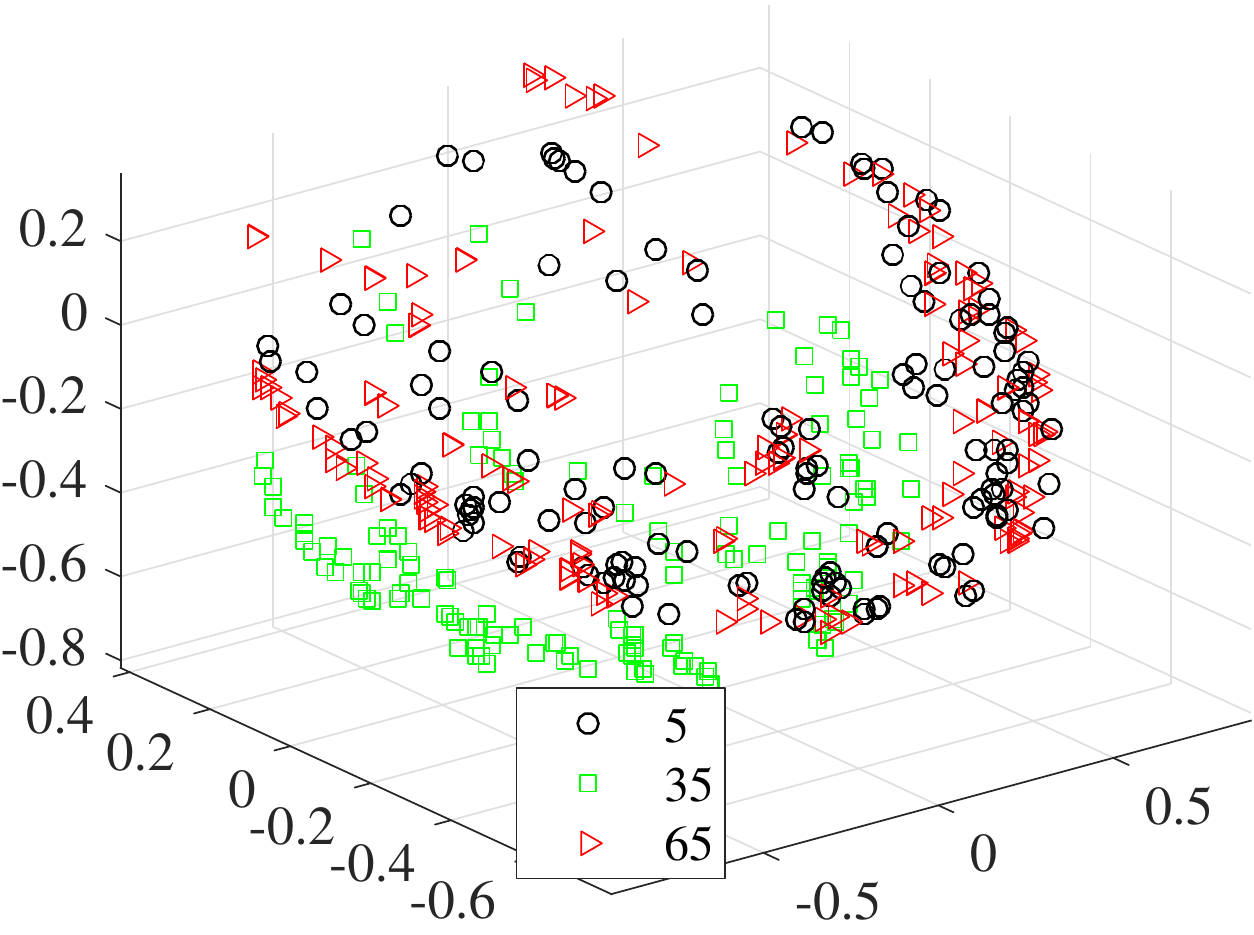}
			\includegraphics[width=1.05\textwidth]{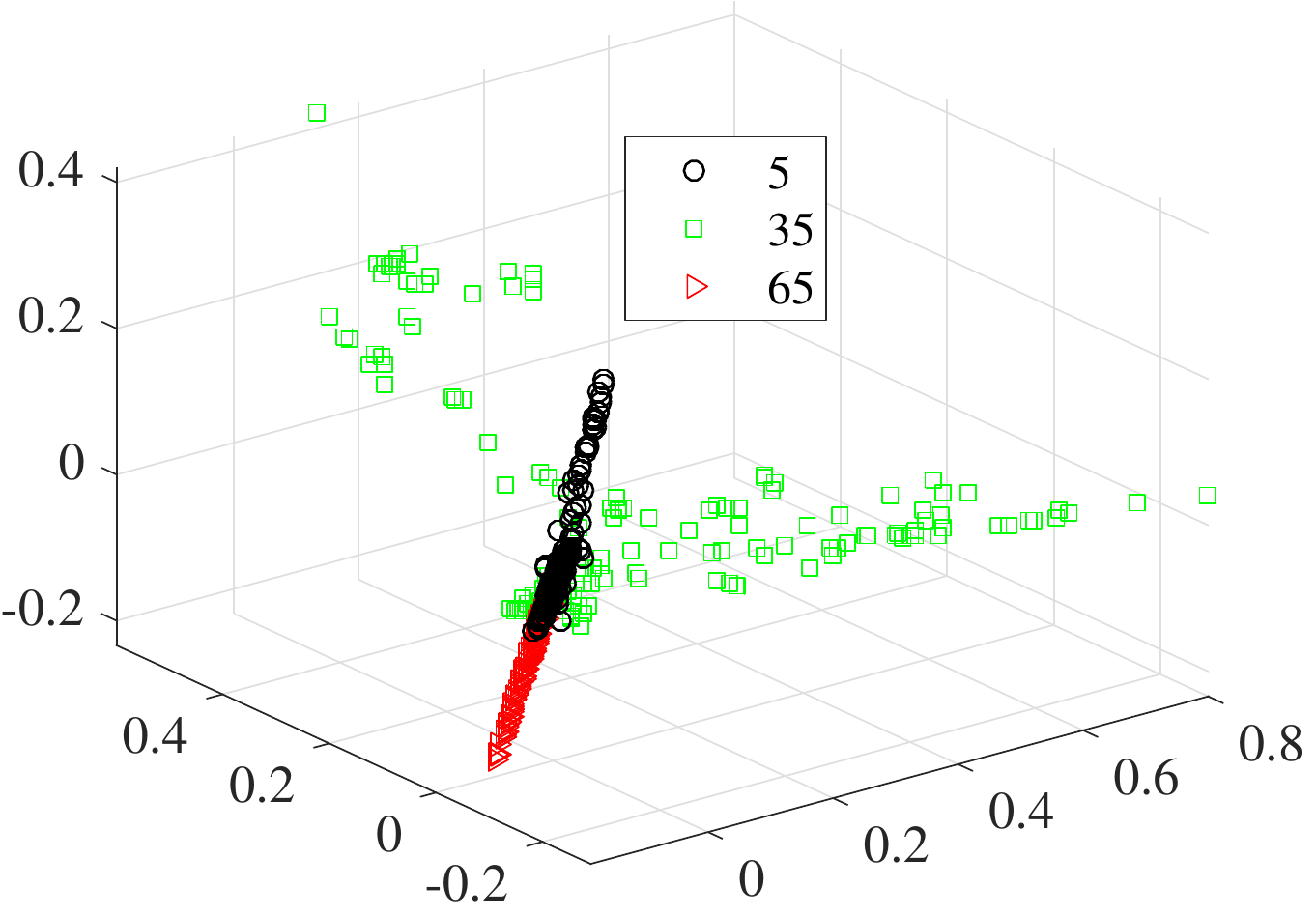}
			\includegraphics[width=1.05\textwidth]{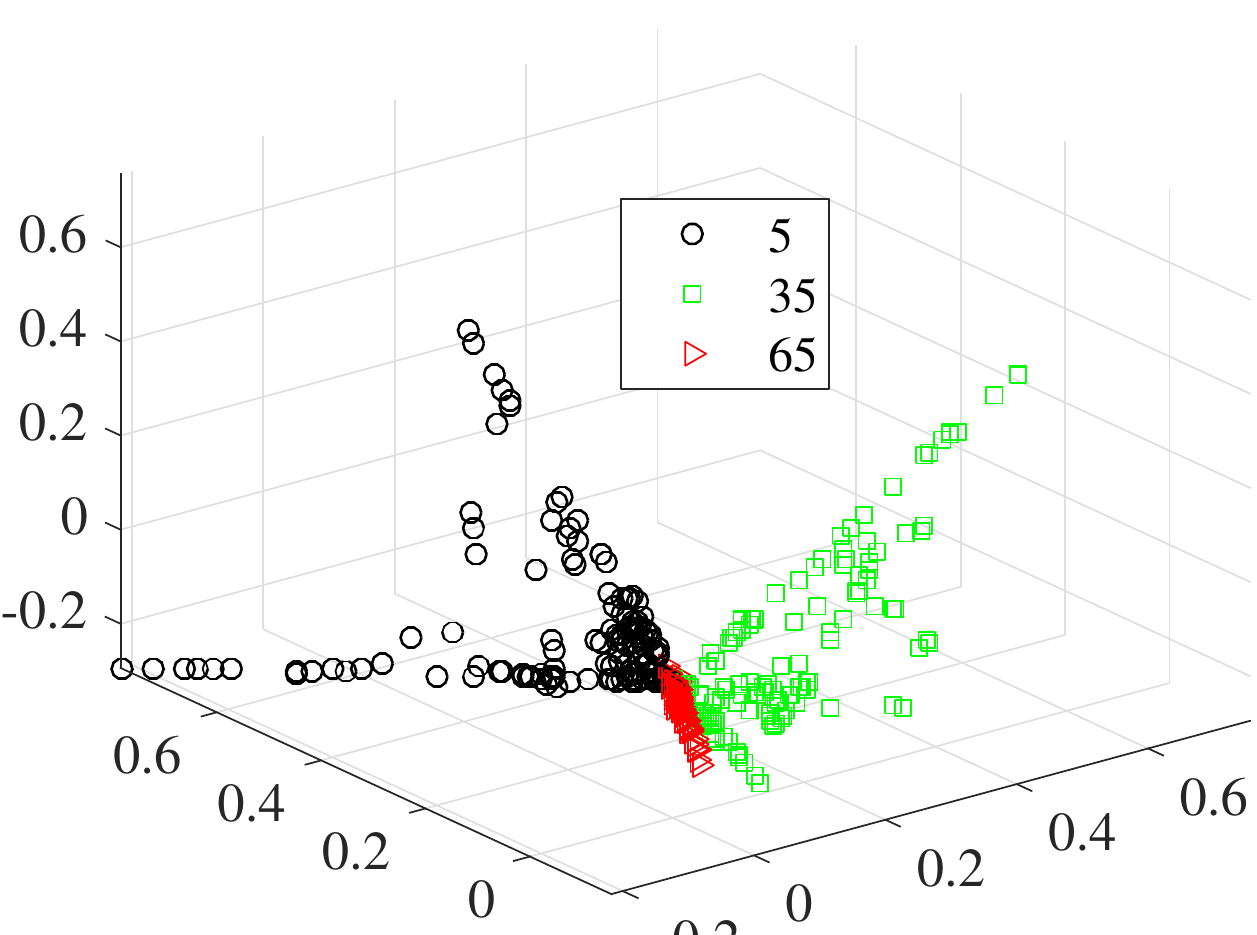}
			\vspace{0.2mm}
		\end{minipage}
	}
	\hspace{5mm}
	\subfigure[\emph{ONPP Family}]{
		\begin{minipage}[t]{0.27\textwidth}
			\includegraphics[width=1.0\textwidth]{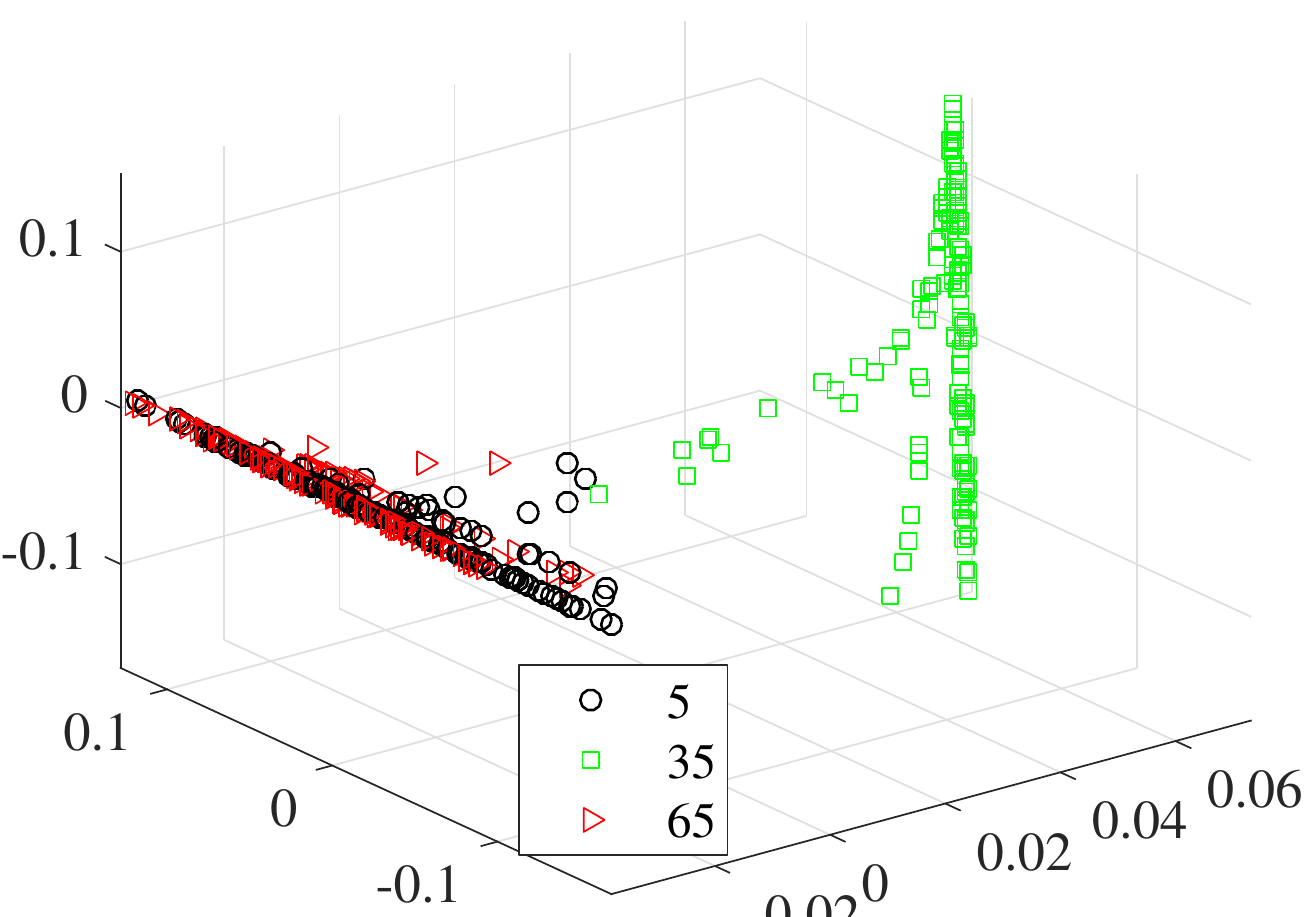}
			\includegraphics[width=1.0\textwidth]{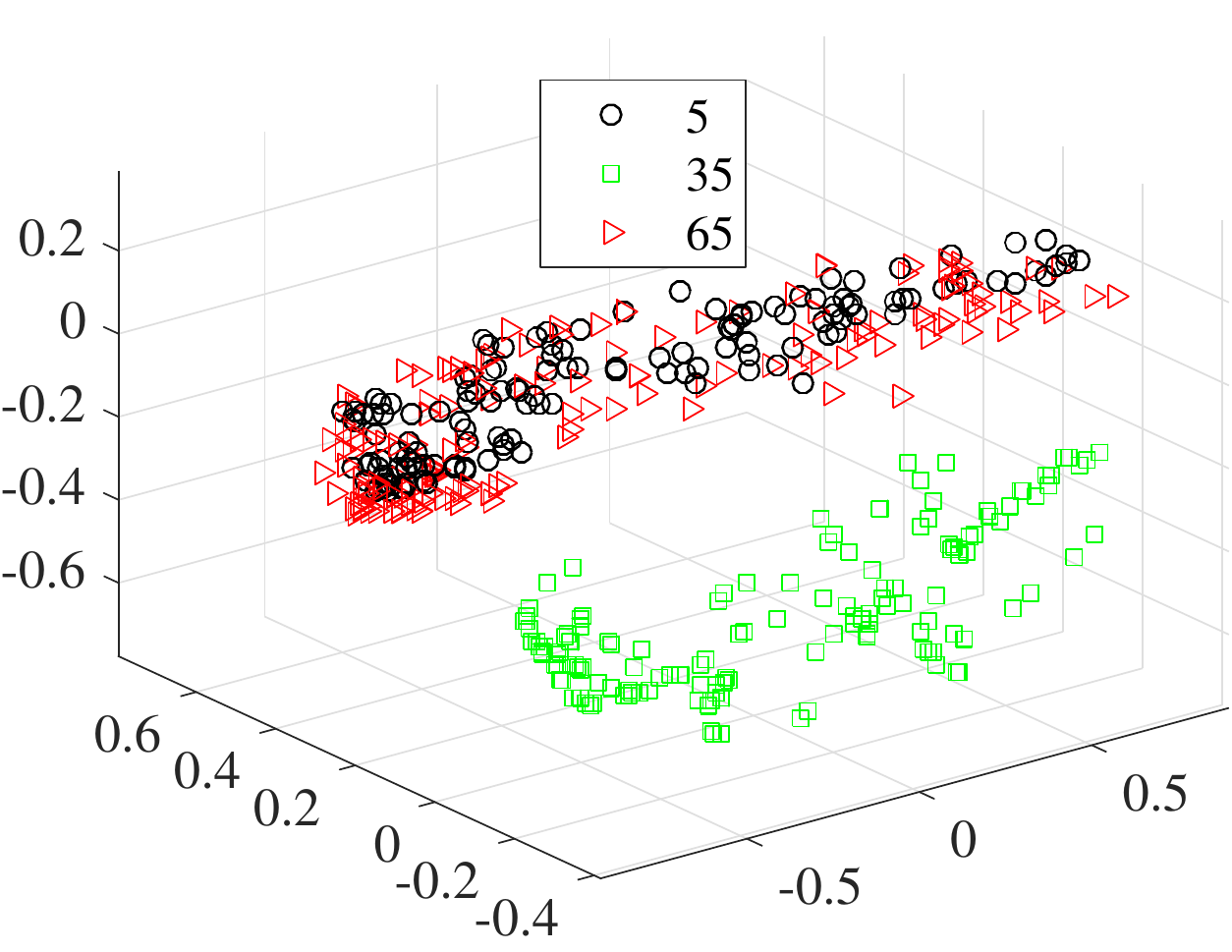}
			\includegraphics[width=1.0\textwidth]{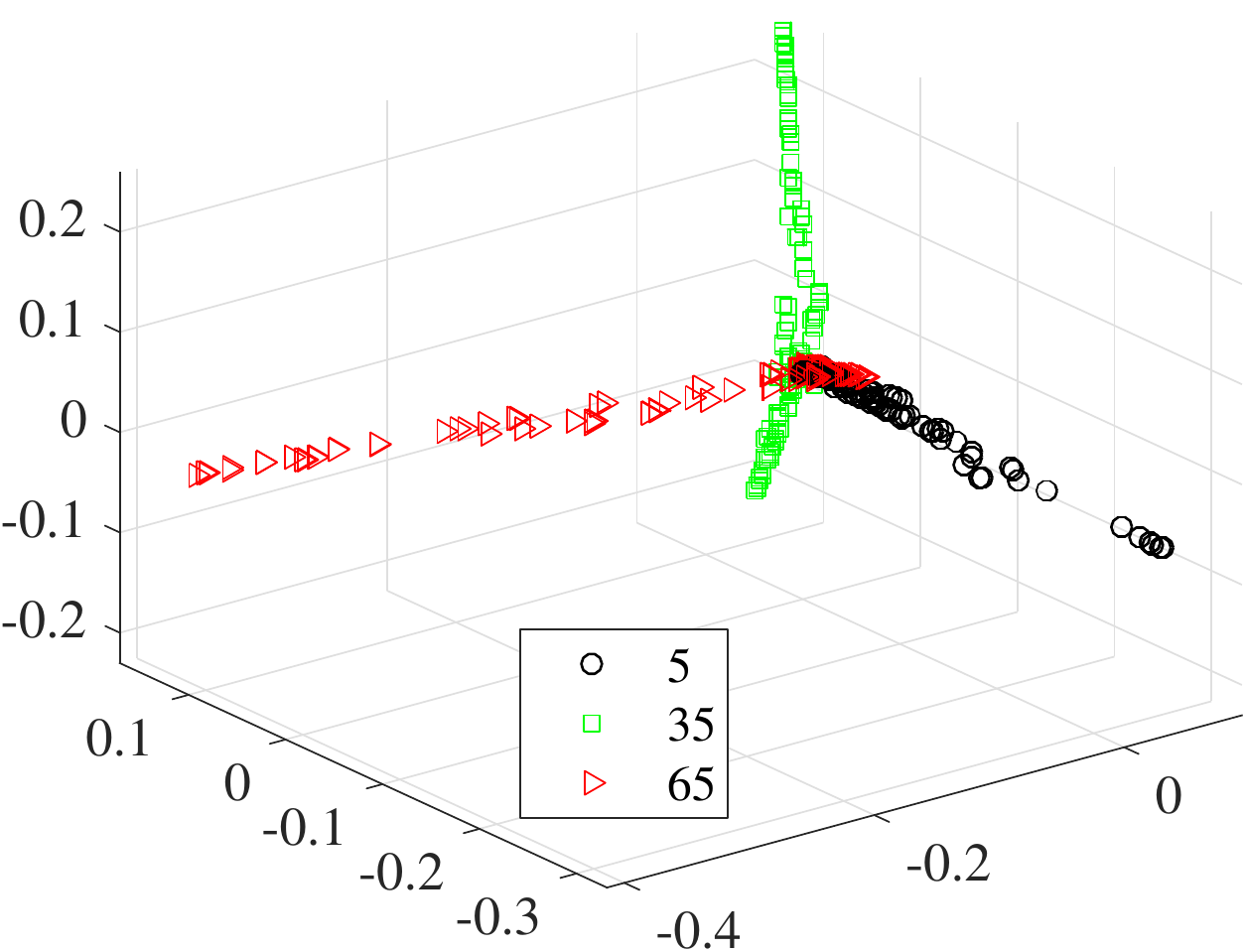}
			\vspace{0.02mm}
		\end{minipage}
	}
	\vspace{-4mm}
	\caption{
		\label{fig_pie_3d_onpp_pca_olpp}
		3D visualisation of PIE faces (class 5, 35, 65).  
		From top to bottom: Applying 
		OLPP/PCA/ONPP in original space, in sparse space with respect to initial 
		dictionary $\widehat\D$, and in sparse space with respect to learned dictionary 
		via \textit{SparLow}, respectively.
	}
	\vspace*{-4mm}
\end{figure*}

Furthermore, we aim to intuitively illustrate the  disentangled factor by 
visualizing the low dimensional image representations, i.e., $3D$.
As suggested in \cite{caid:iccv07}, a subset containing $11,554$ PIE faces with five 
near frontal poses (C05, C07, C09, C27, C29) and different illuminations are chosen, 
thus we nearly get $170$ images for each individual.
Our experiments of 3D visualization were conducted on PIE faces (class 5, 35, 65), 
compared to their classic counterparts.
As depicted in Fig.~\ref{fig_pie_3d_onpp_pca_olpp}, the $3D$ representations 
captured in the original data space, shown in the first row in 
Fig.~\ref{fig_pie_3d_onpp_pca_olpp}, are hardly possible to cluster or group. 
In particular, the boundary between each pair of faces are completely entangled. 
It is evidential that visualization powered by the \emph{SparLow},
i.e., the third row in {Fig.}~\ref{fig_pie_3d_onpp_pca_olpp}, leads to direct 
clustering of the faces.
In short, the class information is clearly disentangled, while the other approaches
fail.

Then we perform 3D visualization on PIE faces without class information. 
We choose $70$ faces from the class $5$ with $2$ factors of variations, i.e., poses and illuminations.
%and with/without glasses. 
As can be seen from the {Fig.}~\ref{fig_pie_class5_2d_onpp_pca_olpp}, the information referred to poses and illuminations could be
clearly disentangled.
%, but the information related to glasses is roughly entangled.
In {Fig.}~\ref{fig_pie_class5_2d_onpp_pca_olpp_a}, from left to right, the illumination become stronger.
From top to bottom, the poses of faces change from left to right.
The similar results are also shown in {Fig.}~\ref{fig_pie_class5_2d_onpp_pca_olpp_b} and {Fig.}~\ref{fig_pie_class5_2d_onpp_pca_olpp_c}.
%various levels of brightness
%%%%%%%%%%%%%%%%%%%%%%%%%%%%%%%%%%%%%%%%%%%%%%%%%%
\begin{figure*}
	%[htb!]
	\centering
	%	\subfigure[PCA.] {
	%		\label{fig_UCI_CS_rec} 
	%		\includegraphics[width=1.65in]{fig/PIE3D/picture_odl_pie/PIE_PCA2_5_68digit.pdf}}
	%	\hspace{0.001in}
	\subfigure[\emph{PCA-SparLow}.]{
		\label{fig_pie_class5_2d_onpp_pca_olpp_a} %% label for first subfigure
		\includegraphics[width=2.3in]{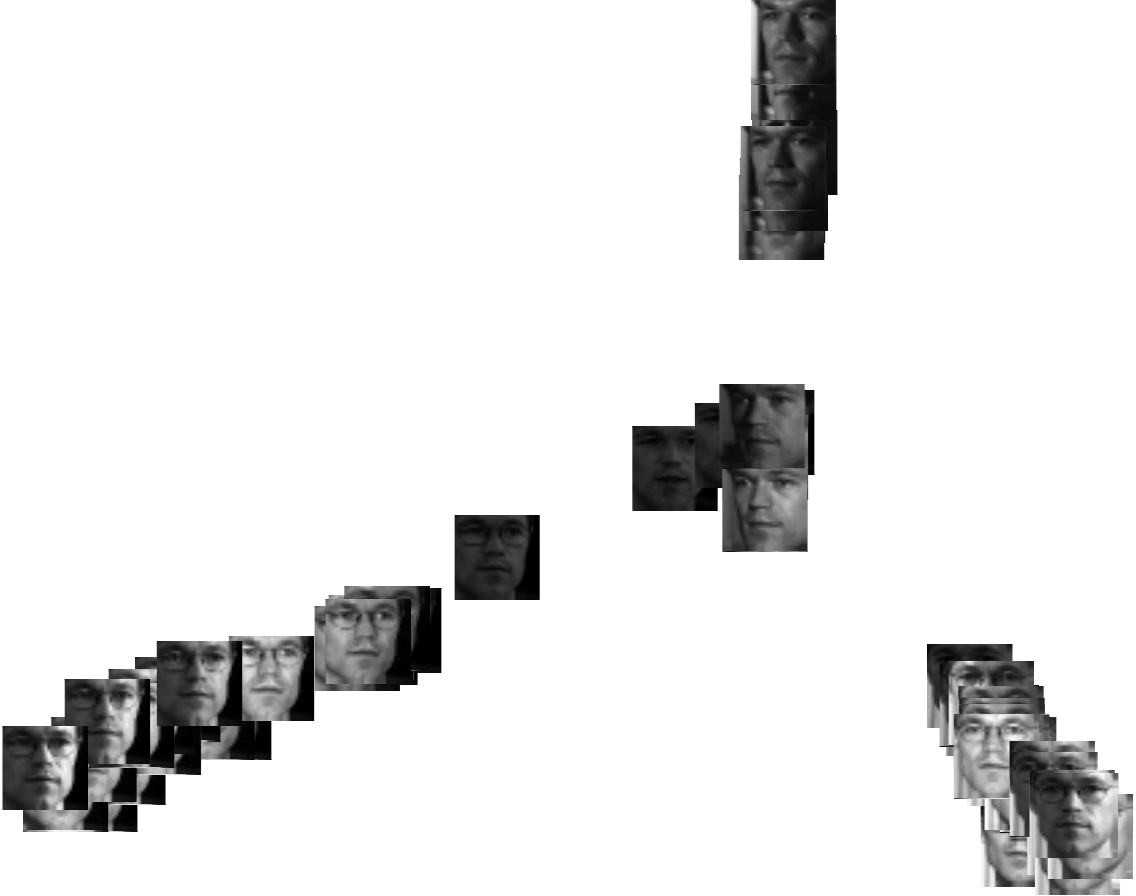}}
	\hspace{0.001in}
	%	\centering
	%	\subfigure[InnerCDL.] {
	%		\label{fig_UCI_CS_drcdl} 
	%		\includegraphics[width=1.65in]{fig/PIE3D/picture_odl_pie/PIE_onpp_5_68digit.pdf}}
	\centering
	\subfigure[\emph{OLPP-SparLow}] {
		\label{fig_pie_class5_2d_onpp_pca_olpp_b} 
		\includegraphics[width=2.3in]{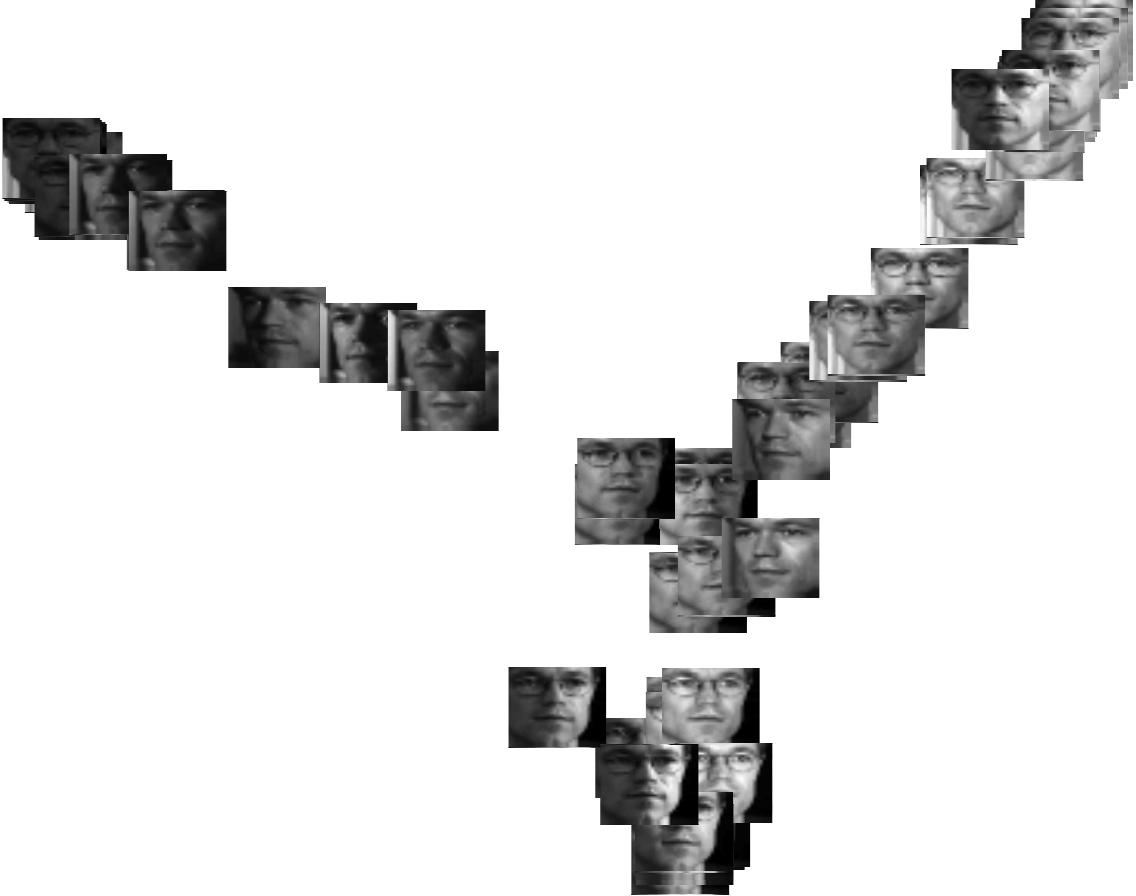}}
	\hspace{0.001in}
	\centering
	\subfigure[\emph{ONPP-SparLow}] {
		\label{fig_pie_class5_2d_onpp_pca_olpp_c} 
		\includegraphics[width=2.3in]{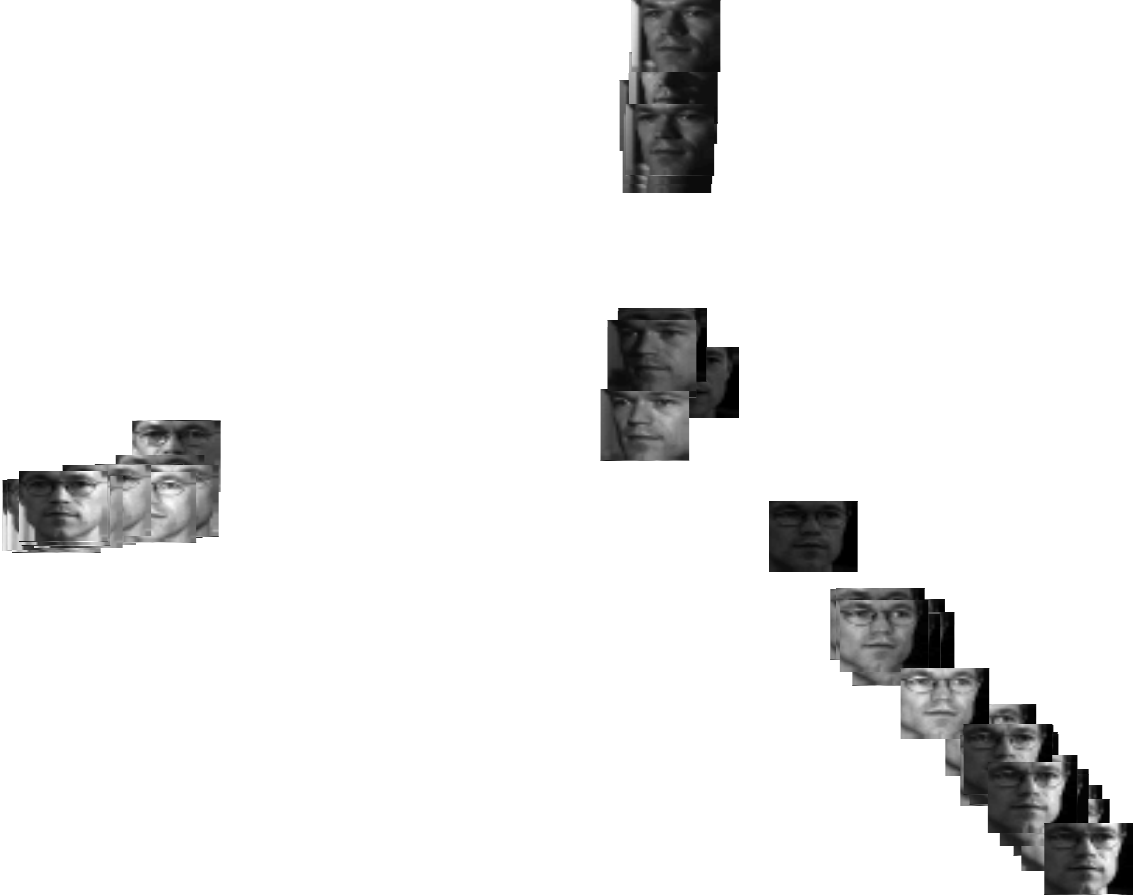}}
	\hspace{0.01in}
	\centering
	\caption{
		2D visualization of PIE faces (class 5).  
		%Applying OLPP/PCA/ONPP in sparse space with respect to learned dictionary by \emph{SparLow}, respectively.
		%From (a) to (j), employing DRCDL on original MNIST data and reduced MNIST data, respectively. 
		\label{fig_pie_class5_2d_onpp_pca_olpp} %% 
	}
\end{figure*}

Our last experiments in this subsection are performed on handwritten digits, i.e., 
the MNIST dataset \footnote[1]{\url{http://yann.lecun.com/exdb/mnist/}} and the
USPS dataset.
The MNIST dataset consists of $60000$ handwritten digits images for training 
and $10000$ digits images for testing. 
 The parameters for elastic net are set to be 
	$\lambda_1 = 0.2$, $\lambda_2 = 2\times 10^{-5}$, 
	and $\mu_1 = 5 \times 10^{-3}$, $\mu_2 = 4 \times 10^{-4}$, for both experiments on the MNIST and USPS datasets. 
We compare the \emph{SparLow} methods to several state of the art methods, on 
the task of $1$NN and Gaussian SVM (GSVM) classification.
For PCA, {KPCA} and \textit{PCA-SparLow}, we set $l = 50$, for other methods, 
we set $l = 20$.
For USPS, we use the full training and testing dataset.
For MNIST, we randomly choose $30000$ images for training, and use standard $10000$ testing dataset.
Our results in Table~\ref{table_PR_digital} suggest that the \emph{SparLow} 
methods consistently outperform the state of the arts.
\begin{table}[!t]
	\centering
%	\vspace{-2mm} 
	\begin{tabu}{|c !{\vrule width 2pt} c|c !{\vrule width 2pt} c|c|}
		\hline  
		{{Methods}}	&   \tabincell{c}{USPS \\ (1NN)} &   \tabincell{c}{USPS \\ (GSVM)}    &  \tabincell{c}{MNIST \\ (1NN)} &   \tabincell{c}{MNIST \\ (GSVM)}  \\ 
		\hline  
		PCA  \cite{gaoj:prl12} & \tabincell{c}{$86.40\% $, \\ $l = 50$} & \tabincell{c}{ $92.43\%$, \\ $l = 50$ }  & \tabincell{c}{$84.62\% $, \\ $l = 50$} & \tabincell{c}{$94.63\% $, \\ $l = 50$} \\ 
	    \hline  
		Sparse PCA  \cite{zouh:cgs06} & \tabincell{c}{$87.66\% $} & \tabincell{c}{ $88.91\%$ }  & \tabincell{c}{$86.10\% $} & \tabincell{c}{$90.88\% $} \\ 
		\hline  
		OLPP \cite{caid:tip06}						& $84.11 \% $		                   	 & $91.48 \% $				 & $83.12\%$  	& $94.76\%$ \\ 
		\hline  
		ONPP \cite{koki:nlaa10}				& $87.39\%$ 	  	                     & $92.73\%$ 		& $85.01\%$  	&  $95.21\%$  \\ 
		\hline  
		KPCA \cite{koki:nlaa10}					&\tabincell{c}{$89.19\%$, \\ $l = 50$}  &\tabincell{c}{$93.27\%$, \\ $l = 50$} 	& $-$ 	& $-$ \\ 
		\hline  
		LLE    \cite{rowe:science00} 					& $68.81\%$                             & $90.43\%$ 			& $66.09\%$ 	&    $93.11\%$ \\ 
		\hline  
		LE 	\cite{koki:nlaa10}						& $71.85\%$ 	                        & $91.93\%$ 		& $68.16\%$   	&   $93.90\%$ \\ 
		\hline  
		ISOMAP   \cite{koki:nlaa10}     			& $64.80\%$ 	& $90.13\%$	   	        & $60.51\%$ 	          &  $91.67\%$ \\ 
		\hline 
		CS-PCA \cite{gaoj:prl12}                    & $87.84\%$ 	      & $94.22\%$				& $87.65\%$  & $96.04\%$\\
		\hline  
		\textit{PCA-SparLow} 	&\tabincell{c}{{$92.18\%$}, \\ $l = 50$}  &\tabincell{c}{{$96.82\%$}, \\ $l = 50$}	   	& \tabincell{c}{{$91.23\%$}, \\ $l = 50$}  	& $97.12\%$ \\ 
		\hline 
		\textit{Lap-SparLow} 							        & {$91.83\%$}                 & {$96.26\%$}  	          &  $89.32 \%$ &   $96.91\%$ \\ 
		\hline 
		\textit{LLE-SparLow} 							        & {$90.78\%$}                 & {$96.16\%$}	             &  $89.10 \%$ &   $96.93\%$ \\ 
		\hline 
	\end{tabu} 
	\vspace{2mm}
	\caption{
		\label{table_PR_digital}
		Classification Performance (Accuracy (\%)) for the MNIST \& USPS datasets 
		of the Proposed \textit{SparLow} methods, with comparisons to some classical 
		unsupervised approaches. \vspace{-8mm}
	}
\end{table}

\subsection{Performance in Large Scale Image Processing}
\label{sec:54}
Finally, we investigate performance of the \emph{SparLow} model in 
large scale image processing applications, specifically, the problem of 
object categorization on large scale image dataset with complex 
backgrounds, such as the images from Caltech-101 \cite{liff:cviu07}, Caltech-256 
\cite{grif:cit07} and 15-Scenes \cite{laze:cvpr06}.
\begin{table*}[htb!]
	\centering
	\caption{\label{table_PR_Caltech} Classification Performance (Average accuracy (\%)) on Caltech-101 \& Caltech-256 datasets.
	}
	%  \hspace{0.75in}
	\vspace{-2mm}
	\centering
	\begin{tabu}{|c|c|c|c|c|c|c|c !{\vrule width 3pt} c|c|c|c|c|}
		\hline  %\tabucline[1pt]{-}
		\tabincell{c}{(a) Caltech-101}     & 1   & 5  & 10 & 15 & 20 & 25 &  30   
		&  \tabincell{c}{(b) Caltech-256}  & 15 &  30 & 45 & 60 \\ 
		\hline  
		KSPM \cite{laze:cvpr06} & $-$ & $-$  & $-$ & $56.40$   & $-$ & $-$ & {$64.40$ } 
		& KSPM  \cite{laze:cvpr06}  & $-$ & $34.10$ &$-$  & $-$  \\ 
		%\tabucline[1pt]{-}
		%\hline
		ScSPM+SVM \cite{yang:cvpr09}& $-$ & $-$ & $-$ & {$67.0$ }   & $-$ & $-$ &${73.2}$ 
		& ScSPM+SVM \cite{yang:cvpr09}  &$27.73$ & $34.02$  & $37.46$ & $40.14$ \\ 
		%\hline
		LLC+SVM \cite{wang:cvpr10} & $-$ & $51.15$ & $59.77$ & $65.43$    & $67.74$ &  $70.16$ & ${73.44}$ 
		& LLC+SVM \cite{wang:cvpr10}  & $34.36$& $41.19$  & $45.31$ & $47.68$ \\
		%\hline
		%LDP+SVM \cite{caih:pami11}	& $-$ & $-$ & $-$ & $64.9$    & $-$   & $-$ &  $-$ 
		%																				& KSPM  \cite{laze:cvpr06} & $23.34 \pm 0.42$ & \\
		%\hline
		Griffin \cite{grif:cit07} & $-$ & $44.2$ & $54.5$ & $59.0$    & $63.3$ &  $65.8$ & $67.60$ 
		& Griffin \cite{grif:cit07}	 & $28.30$ & $34.10$ &  $-$ & $-$ \\
		%%\hline
		%DCNN\cite{ranzato2007unsupervised} & $-$ & $-$ & $-$ & $-$    & $-$ &  $-$ & $54.50$ 
		%																				& KSPM  \cite{laze:cvpr06} & $23.34 \pm 0.42$ & \\ 
		%\hline
		SRC \cite{wrig:pami09, jian:pami13} & $-$ & $48.8$ & $60.1$ & {$64.9$}    & $67.5$ &  $69.2$ & {$70.7$}
		& SRC \cite{wrig:pami09, jian:pami13} & $27.86$	& $33.33$ &  $-$ &   $-$  \\
		%\hline
		D-K-SVD \cite{zhan:cvpr10, jian:pami13} & $-$ & $49.6$ & $59.5$ & {$65.1$}    & $68.6$ &  $71.1$ & {${73.0}$}
		& D-K-SVD \cite{zhan:cvpr10, jian:pami13} &   $-$ & $33.72$  &  $-$ & $48.88$ \\
	 BMDDL \cite{Zhou:tip17}  & $31.88$ & $56.10$ & $66.01$ & {$69.56$}    & $71.32$ &  $72.28$ & {${75.54}$}
        & BMDDL\cite{Zhou:tip17}  &  $35.41$ & $41.56$  &  $46.90$ & $52.32$ \\
		%
%		{\color{red} \cite{lobe:pami15}}   & $-$ & $-$ & $-$ & {$-$}    & $-$ &  $-$ & $75.4\pm 0.6$
%		&{\color{red} \cite{lobe:pami15}}   &   $-$ & $-$  &  $-$ & $-$ \\
		%\hline
		LC-K-SVD \cite{jian:pami13} & $28.9$ & $54.0$ & $63.1$ & {$67.7$}    & $70.5$ &  $72.3$ & {${73.6}$}
		&  LC-K-SVD \cite{jian:pami13}  		 & $28.9$	& $34.32$  & $-$ &  $-$ \\
		FDDL \cite{yang:ijcv14}	& $-$ & $51.1$ & $60.6$ & $65.6$    & $68.5$   & $70.4$ &  $71.0$ 
		&  FDDL \cite{yang:ijcv14}  & $-$ & $-$ & $-$ & $-$ \\
		%\hline
		SSPIC  \cite{srin:tip15} & $-$ & \tabincell{c}{$55.1$  }  & \tabincell{c}{$62.1$ } & \tabincell{c}{$65.0$  } 
		& $67.5$ &  \tabincell{c}{$68.9$  } & \tabincell{c}{$71.5$  }
		& LSc 	\cite{gaos:pami13}  & $29.99$	& $35.74$ & $38.47$  & $40.32$  \\
		%\\ $\pm 0.2$
		%\hline
		LDA+GSVM  & $-$ & $50.49$ & $60.39$ & {$64.28$}    & $67.10$ &  $71.24$ & {$72.40$}
		& \emph{TDDL} \cite{mair:pami12}  & $33.14$ & $39.05$ & $44.16$ & $49.05$ \\
		%\hline
		\Xhline{2\arrayrulewidth}
		\emph{SparLDA}      & $-$    & $54.60$  &  $65.26$ & $70.05$	& $72.12$  & $73.2$ & $75.82$ 
		& \emph{SparLDA}   & $35.62$      &  $39.53$ & $47.08$ & $51.90$ \\
		%\hline
		\emph{LDA-SparLow}  & $31.23$ & $56.44$  &  $67.12$ & $73.82$	& $74.70$  & $76.20$ & $76.86$ 
		& \emph{LDA-SparLow}   & $38.05$  & {$43.26$} & {$50.32$} & {$55.74$} \\
		%\hline
		\emph{SDA-SparLow}  & $46.12$ & $67.42$  &  $72.01$ & $76.12$	& $76.64$  & $77.40$ & $78.25$ 
		& \emph{SparMFA}       & $36.42$  & $40.89$ & $47.62$ & $51.76$ \\
		%%\hline
		%\emph{SparMVR}  & $28.90$ & $54.49$  &  $65.33$ & $70.26$	& $73.03$  & $75.11$ & $75.90$ 
		%																				& \emph{MFA-SparLow}   & $-$ & & \\
		%\hline
		\emph{MFA-SparLow}  & $32.43$ & $57.52$  &  $68.44$ & $73.95$	& $75.56$  & $76.63$ & $77.32$
		& \emph{MFA-SparLow}  & $38.82$ & $44.05$ & $51.34$ & $56.82$ \\
		%\hline
		\emph{SMFA-SparLow}  & $46.02$ & $68.66$  &  $72.92$ & $76.02$	& $77.24$  & $77.80$ & $78.42$
		& \emph{SparMVR}      & $32.90$ & $37.91$ & $45.03$ & $50.64$ \\
		%\hline
		\emph{MVR-SparLow}  & $29.30$ & $54.79$  &  $65.63$ & $70.56$	& $73.33$  & $75.41$ & $76.15$
		& \emph{MVR-SparLow}  & $36.29$ & $40.66$ & $47.92$ & $52.56$ \\
		%\hline
		\emph{SMVR-SparLow}  & $44.82$ & $66.43$  &  $70.80$ & $75.48$	& $76.32$  & $77.14$ & $77.76$ 
		&    & $-$ & $-$ & $-$ & $-$ \\
		\hline
	\end{tabu} 
\end{table*}

We adopt a popular approach of object categorization to firstly detect certain local image 
features, such as dense SIFT or dense DHOG (a fast SIFT implementation) \cite{lowe:ijcv04},  then to quantize them into discrete ``visual words'' over a codebook, and finally
to compute a fixed-length Spatial Pyramid Pooling (SPP) vector of acquired 
``visual words'' \cite{yang:cvpr09, wang:cvpr10, laze:cvpr06}.
We refer to such an approach as the \emph{SIFT/DHOG-SPP} representation.
In our experiments, the local descriptor is extracted from $s \times s$ pixel patches 
densely sampled from each image, specifically, we choose 
$s = 16$ for SIFT and $s = 16, 25, 31$ for DHOG. 
The dimension of each SIFT/DHOG descriptor is $128$.
A codebook with the size of $k = 1024$ or $k = 2048$, is learned for coding 
SIFT/DHOG descriptors.
%, see~\cite{wang:cvpr10}. or $4096$
We then divide the image into $ 4\times 4$, $3\times 3$ and $1 \times 1$ 
subregions, i.e., $21$ bins. 
The spatial pooling procedure for each spatial sub-region is applied via the max pooling function associated with an ``$\ell_2$ normalisation'', 
%which has been verified as an efficient way to improve the classification model
e.g.,~\cite{yang:cvpr09, wang:cvpr10, jian:pami13, yang:cvpr10}.
The final SPP representations are computed with the size $m = 21504$ or $m= 43008$, and 
hence are reduced into a low-dimensional PCA-projected subspace. % or $96016$,
In what follows, we denote by $m, m_{\mathrm{PCA}}, r, l$ the dimension of SPP 
representation, PCA projected subspace, sparse codes, 
and learned low dimensional representation, respectively.
%

%\vspace*{-1mm}
%%%~~~~~~~~~~~~~~~~~~~~~~~~~~~~~~~~~~~~~~~~~~~~~~~~~~~~~~~~~~~~~%%%
%%                          SECTION 5.4.1                        %%
%%%~~~~~~~~~~~~~~~~~~~~~~~~~~~~~~~~~~~~~~~~~~~~~~~~~~~~~~~~~~~~~%%%
\subsubsection{Number of Reduced Features}
\label{sec:541}
It is known that the computational complexity of sparse coding mainly 
depends on the choice of dictionary size \cite{weix:icpr16}.
It is hence necessary to investigate the impact of the number of features
to the performance of \emph{SparLow}.
We employ a popular approach to firstly apply a classic PCA transformation on the
SPP features, and then learn a dictionary on reduced features~\cite{wrig:pami09}.
In this experiment, we deploy the Caltech-101 dataset \cite{liff:cviu07}, which
contains $9144$ images from $102$ classes.
Most images are {in} medium resolution (about $300\times 300$ pixels). 
Fig.~\ref{fig_101_lda_mfa_mlr_convergence} show the recognition results of several
supervised \emph{SparLow} and semi-supervised \emph{SparLow} methods with PCA projected
SPP features.
It is obvious that after reaching a certain number of reduced features, i.e., $l > 1024$,
all methods in test show no significant improvement.
Moreover, it is worth knowing that the performance of two semi-supervised \emph{SparLow} 
methods consistently outperform other methods. 
\begin{figure}[!t]
	\centering
	\includegraphics[width=0.4\textwidth]	{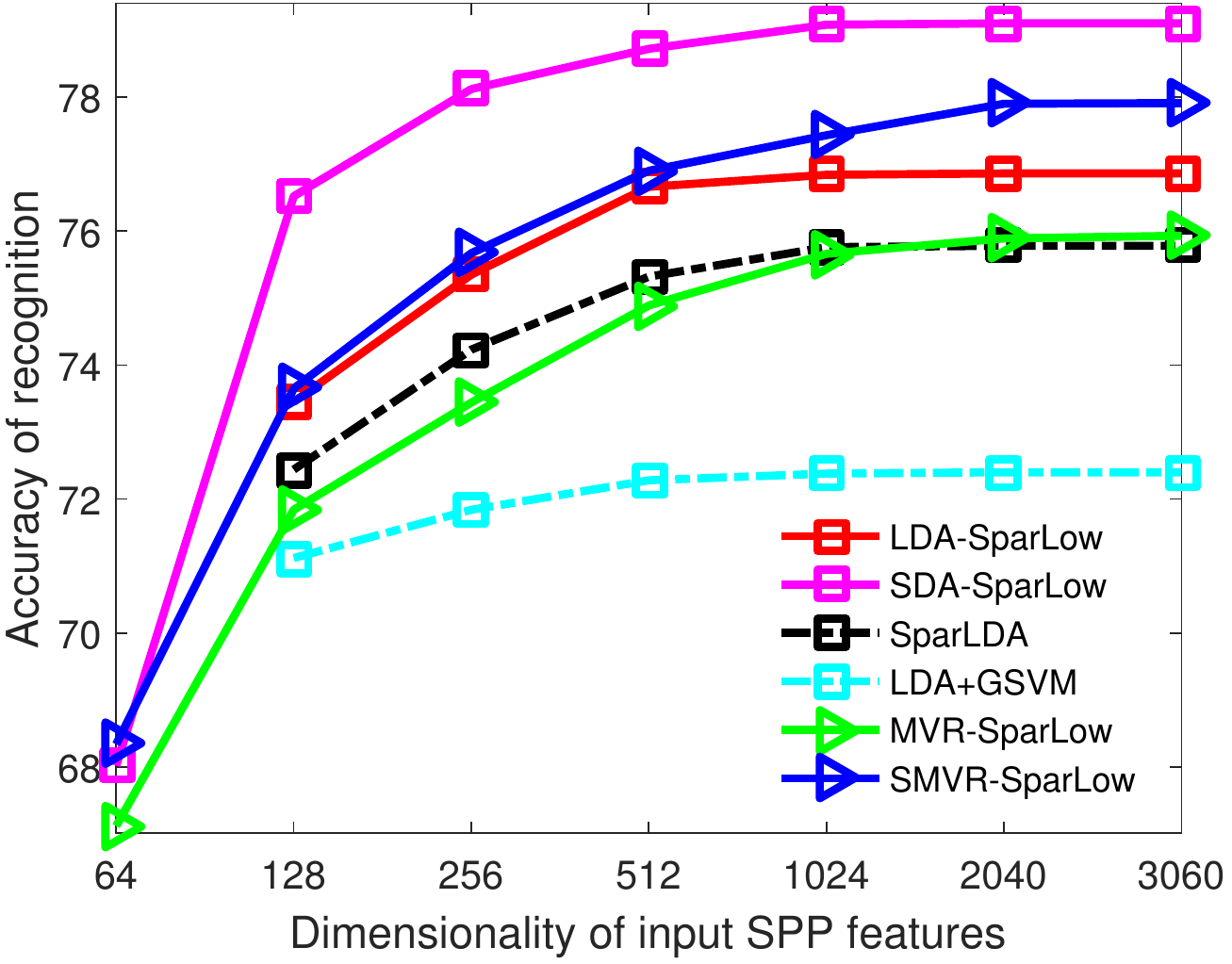}
	\vspace{-2mm}
	\caption{
		\label{fig_101_lda_mfa_mlr_convergence}
		Recognition results using proposed \emph{MFA-SparLow} in PCA projected 
		subspace on Caltech-101 dataset. $n_{\mathrm{train}} = 30$, $r = 3060$.
	}
	\vspace{-3mm}
\end{figure}

% %%%%%%%%%%%%%%%%%%%%%%%%%%%%%%%%%%%%%%%%%%%%%%%%%%
%\begin{figure}[htb!]
%	% [H]
%	\centering
%	%
%	\subfigure[Caltech-101, $n_{\mathrm{train}} = 30$]
%	{
%		\label{fig_caltech101_results_supervised}
%		\includegraphics[width=2.1in]{} 
%	} 
%	\hspace{0.001in} 
%	%
%	\subfigure[Caltech-256, $n_{\mathrm{train}} = 60$]
%	{
%		\label{fig_caltech256_results_supervised}
%		\includegraphics[width=2.1in]{caltech256recognition_supervisedLearning.pdf} 
%	}  
%	%
%	\subfigure[15-scenes, $n_{\mathrm{train}} = 100$]
%	{
%		\label{fig_15scenes_results_supervised}
%		\includegraphics[width=2.1in]{} 
%	}
%	%
%	\caption{ This picture depicts the optimization process of supervised \emph{SparLow}. 
%		\label{fig_SupervisedSparLow_convergence} %% 
%	}
%\end{figure}
%%%%%%%%%%%%%%%%%%%%%%%%%%%%%%%%%%%%%%%%%%%%%%%%%%%%%%%

%%%%~~~~~~~~~~~~~~~~~~~~~~~~~~~~~~~~~~~~~~~~~~~~~~~~~~~~~~~~~~~~~%%%
%%%                          SECTION 5.4.1                        %%
%%%%~~~~~~~~~~~~~~~~~~~~~~~~~~~~~~~~~~~~~~~~~~~~~~~~~~~~~~~~~~~~~%%%
%\subsubsection{Comparison of Different Initial Dictionaries}
%\label{sec:5412}
%%

%%%~~~~~~~~~~~~~~~~~~~~~~~~~~~~~~~~~~~~~~~~~~~~~~~~~~~~~~~~~~~~~%%%
%%                          SECTION 5.4.1                        %%
%%%~~~~~~~~~~~~~~~~~~~~~~~~~~~~~~~~~~~~~~~~~~~~~~~~~~~~~~~~~~~~~%%% 
\subsubsection{Caltech-101 dataset}
\label{sec:542}
Learning on Caltech-101 dataset is often considered to be hard, 
since the number of images per category varies significantly from $31$ to $800$. 
In our experiments, we set $\lambda_1 = 5\times 10^{-2}, \lambda_2 = 10^{-5}$, 
$k = 1024$, $r = 1020$, and choose $l \in \{101, 287, 512\}$ are set for LDA, MFA, MVR 
related methods, respectively.
We confine ourselves to the same experimental settings as used in 
\cite{yang:cvpr09, wang:cvpr10, laze:cvpr06}.
We randomly select $1$, $5$, $10$, $15$, $20$ and $30$ labeled images per category for training and the rest images for testing. 
For semi-supervised \textit{SparLow}, the training set includes all labeled and unlabelled images.
%Fig.~\ref{algo:sparlow} plots
Table \ref{table_PR_Caltech} (a) gives a comparison of \emph{LDA-SparLow}, \emph{SLDA-SparLow} with approaches from the literatures.
Note that, for the number of labeled samples $n_{l} = 1$, LDA and \emph{LDA-SparLow} 
are not applicable.
It shows that our proposed approaches consistently outperform all the competing 
approaches. 
Especially, the semi-supervised \emph{SparLow} could significantly improve the 
recognition accuracy when the labeled training samples are limit.
The possible reason is that some categories have large samples, e.g., the category 
\textit{airplanes} has $800$ samples, the $n_{l} \leq 30$ is too much limit for 
training such a category.
Semi-supervised \emph{SparLow}'s take advantage of all the data for training sparsifying dictionary,
which is the key factor to promote the discrimination of sparse representations.

\vspace*{-2mm}
%%%~~~~~~~~~~~~~~~~~~~~~~~~~~~~~~~~~~~~~~~~~~~~~~~~~~~~~~~~~~~~~%%%
%%                          SECTION 5.4.3                        %%
%%%~~~~~~~~~~~~~~~~~~~~~~~~~~~~~~~~~~~~~~~~~~~~~~~~~~~~~~~~~~~~~%%% 
\subsubsection{Caltech-256 dataset}
\label{sec:543}
The Caltech-256 dataset consists of $30607$ images from $256$ 
categories with various resolutions from $113 \times 150$ to $960 \times 1280$. 
Each category has at least $80$ images. 
Unlike the Caltech-$101$ dataset, this dataset contains multiple objects in various 
poses at different locations within the images.
Existence of background clutter and occlusion result in higher intraclass diversity, 
which makes the categorization task even harder.

%For Caltech-256, 
We apply our \emph{DHOG-SPP} \textit{SparLow} on randomly selected $15$, $30$, $45$, $60$ training images per category, respectively.
We set $\lambda = 0.1$, $m_{\mathrm{PCA}} = 2560$, $k = 2048$, $r = 3128$.
For LDA-like \textit{SparLow}, MFA-like \textit{SparLow} and MVR-like \textit{SparLow}, we set 
$l = 255$, $361$ and $387$, respectively. 
%The codebook size for coding DHOG descriptors is set to be $128\times 4096$.
% For the cases of $n_{\mathrm{train}} = 30, 60$, we set $ r = 2048, 2560$, respectively.
Finally, we use GSVM for classifying the low dimensional representations.
In this experiment, we use the sparse coding formulation associated with KL-divergence.
Table \ref{table_PR_Caltech} (b) shows that our results outperform the state of the art 
methods under all the cases.
Moreover, we also implemented \emph{TDDL} for comparison, and the each sub-dictionary size of \emph{TDDL} 
is fixed as $r = 200$.
It shows that \emph{TDDL}'s perform worse than the \emph{SparLow}'s. 
The possible reason is that \emph{TDDL}'s associated with a binary classifier may suffer the huge number of classes.

%%%~~~~~~~~~~~~~~~~~~~~~~~~~~~~~~~~~~~~~~~~~~~~~~~~~~~~~~~~~~~~~%%%
%%                          SECTION 5.4.4                        %%
%%%~~~~~~~~~~~~~~~~~~~~~~~~~~~~~~~~~~~~~~~~~~~~~~~~~~~~~~~~~~~~~%%% 
\subsubsection{15-Scenes dataset}
\label{sec:544}
We finally evaluate the \emph{SparLow} framework on the 15-Scenes dataset \cite{laze:cvpr06}. This dataset contains totally $4485$ images falling into $15$ categories, 
with the number of images {in } each category ranging from $200$ to $400$ and image size around $300 \times 250$ pixels. 
The image content is diverse, containing not only indoor scenes, such as bedroom, kitchen, but also outdoor scenes, such as building and country views, etc.

Following the common experimental settings, we use \emph{SIFT-SPP} as input with $k = 1024$ and $m_{\mathrm{PCA}} = 2000$.
For MFA, we set $k_1 = 70, k_2 = 100$, and $l = 50$. For \emph{SparMFA} and \emph{MFA-SparLow}, we set $k_1 = 30, k_2 = 100$ and $l = 60$.
For all supervised and semi-supervised \emph{SparLow} methods, the dictionary size $r = 750$.
Table \ref{table_scene_15} compares our results with several sparse coding methods in 
\cite{yang:cvpr09, wrig:pami09, wang:cvpr10, gaos:pami13, zhan:cvpr10, jian:pami13, Zhou:tip17, lobe:pami15}, GSVM,
and the method in \cite{laze:cvpr06}, which are all using SPP features as input data.
% caih:pami11, 
As shown in Table \ref{table_scene_15}, our approaches significantly outperform all state of the art approaches.
Note that, the bottom three lines are all semi-supervised methods.
\begin{table}[htb!]
	\centering
	%  \hspace{0.75in}
	\centering
	\begin{tabu}{|c|c !{\vrule width 2pt} c|c |}
		\hline  %\tabucline[1pt]{-}
		Methods     						  			& Accuracy  		    &  Methods  							        &  Accuracy  \\
		\hline  
	%	LDP \cite{caih:pami11}  				& $81.40$	   	
    BMDDL \cite{Zhou:tip17}  & $96.9$    			&   Lobel \cite{lobe:pami15}                         	   	&  $86.3\pm 0.5$  \\
		GSVM   	                               & $72.2$              & LDA        									& {$91.69$ }  \\
		KSPM \cite{laze:cvpr06} 			&  $83.50$ 				& \emph{SparLDA}                                		& {$95.89$ } \\
		ScSPM+GSVM \cite{yang:cvpr09} 			& {$80.28$ } 			& \emph{LDA-SparLow}                                	& {$97.47$ } \\
		% % % % %
		LLC+GSVM \cite{wang:cvpr10} 			& $89.2$    			& MFA            								& {$92.82$ } \\
		SRC \cite{wrig:pami09, jian:pami13} 	& $91.8$				& \emph{SparMFA}                           			& {$96.65$ } \\

		LSc 	\cite{gaos:pami13}    		&  $89.7$ 				& \emph{MFA-SparLow}                           		& {$98.46$ } \\
		K-SVD\cite{ahar:tsp06} + LDA            & $92.6$ 				& MVR ($l = 512$)                           	& {$93.10$ } \\
		D-K-SVD \cite{zhan:cvpr10}     & $89.01$        		& \emph{SparMVR}                           		    & {$96.32$ } \\
		LC-K-SVD \cite{jian:pami13}                  & $92.9$    			& \emph{MVR-SparLow}                           		& {$97.55$ } \\
	    FDDL \cite{yang:ijcv14} 	& $90.2$ & $-$ & $-$ \\
 %   BMDDL \cite{Zhou:tip17}  & $96.9$    			&                            	   	&  \\
  		%
%		{\color{red} \cite{lobe:pami15}}        & $86.3\pm 0.5$    			&                            	   	&  \\
		\hline
		%% !{\hrule width 2pt}
		\Xhline{2\arrayrulewidth}
		SDA \cite{caid:iccv07}     				& $97.28$    			& \emph{SDA-SparLow}                           		& {$99.18$ } \\
		SDE \cite{yugx:pr12}        				& $97.66$    			& \emph{SLap-SparLow}                           		& {$99.25$ } \\
		LapRLS \cite{belk:jmlr06}         & $94.86$    			& \emph{SMVR-SparLow}                           		& {$99.12$ } \\
		\hline
	\end{tabu} 
	\vspace{2mm}
	\caption{\label{table_scene_15}
		Averaged classification Rate (\%) comparison on 15-Scenes dataset. 
		The classifier is $1$NN for the third column if not specified.
		\vspace{-8mm}
	}
\end{table}

\vspace*{-1mm}
%%%%%%%%%%%%%%%%%%%%%%%%%%%%%%%%%%%%%%%%%%%%%%%%%%%%%%%%%%%%%%%%%%%
%%                            SECTION 5                          %%
%%%%%%%%%%%%%%%%%%%%%%%%%%%%%%%%%%%%%%%%%%%%%%%%%%%%%%%%%%%%%%%%%%% 
\section{Conclusion}
\label{sec:07}
In this work, we present a low dimensional representation learning approach, 
coined as \emph{SparLow}, which leverages both sparse representation and the trace 
quotient criterion.
It can be considered as a two-layer disentangling mechanism, which applies the trace quotient
criterion on the sparse representations.
Our proposed generic cost function is defined on a sparsifying dictionary and
an orthogonal transformation, which form a product Riemannian manifold.
A geometric CG algorithm is developed for optimizing the \emph{SparLow} function.
Our experimental results depict that in comparison with the state of the art unsupervised, supervised and semi-supervised 
representation learnings methods, our proposed \emph{SparLow} framework delivers promising 
performance in data visualization and classification. 
Moreover, the proposed \emph{SparLow} is flexible and can be extended to more general cases of 
low dimensional representation learning models with orthogonal constraints.

\vspace*{-10mm}
\begin{IEEEbiography}[{\includegraphics[width=1in,height=1.25in,clip,keepaspectratio]{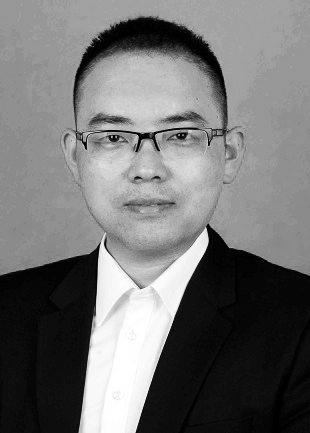}}] 
	{Xian Wei} (S'14-M'18) received the Ph.D. degree in Engineering from the Technical University of Munich, Munich, Germany, in 2017.
In July 2017, he joined Fujian Institute of Research on the 
Structure of Matter, Chinese Academy of Sciences, China,
as a leading Researcher of Machine Vision and Pattern Recognition Lab.
His research interests focus on sparse coding, deep learning and geometric optimization. 
The applications include robotic vision, videos or images modeling, synthesis, recognition and semantics. 
\end{IEEEbiography}

\vspace*{-10mm}
\begin{IEEEbiography}[{\includegraphics[width=1in,height=1.25in,clip,keepaspectratio]{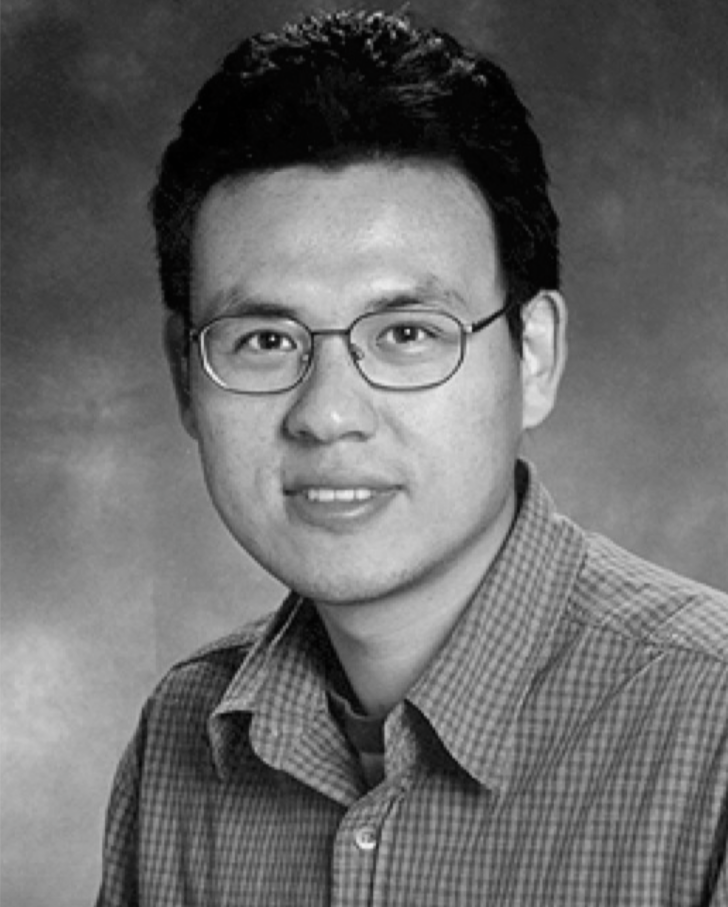}}]{Hao Shen}(S'04-M'08)
%received his Bachelor degree in Mechanical Engineering and Applied
%Mathematics from Xi'an Jiaotong University, China, in 2000, and his Masters
%degrees in Computer Studies and Computer Science from the University of
%Wollongong, Australia, in 2002 and 2004, respectively; and 
received his PhD in Engineering from the Australian National University, Australia, in 2008.
From December 2008 to September 2017, he was a post-doctoral researcher at the 
Institute for Data Processing, Technische Universit{\"a}t M{\"u}nchen, Germany.
In October 2017, he joined fortiss, 
%Landesforschungsinstitut des Freistaats Bayern, Germany,
State Research institute of Bavaria, Germany,
as the leader of Machine Learning Lab.
His research interests focus on machine learning for signal processing, 
e.g., deep representation learning, and reinforcement learning.
%in particular, geometric optimization, blind signal separation, sparse representation,
%reinforcement learning, and deep representation learning. 
\end{IEEEbiography}

\vspace*{-10mm}
\begin{IEEEbiography}[{\includegraphics[width=1in,height=1.25in,clip,keepaspectratio]{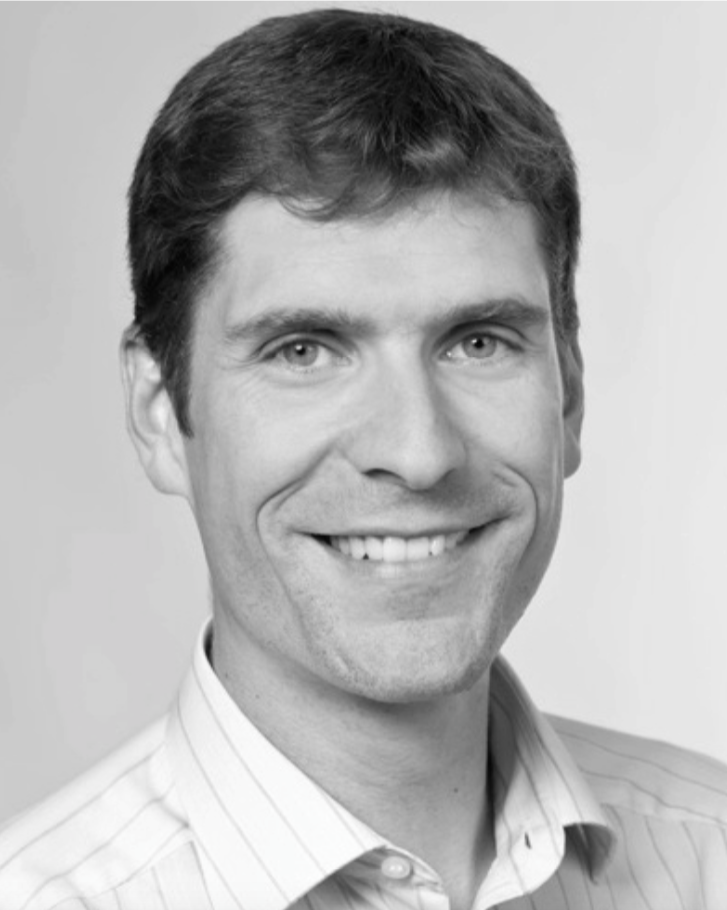}}]{Martin Kleinsteuber}
received his Ph.D. in Mathematics from the University of W\"urzburg, Germany, in 2006. After post-doc positions at National ICT Australia Ltd., the Australian National University, Canberra, Australia, and the University of W\"urzburg, he has been appointed assistant professor for geometric optimization and machine learning at the Department of Electrical and Computer Engineering, TU M\"unchen, Germany, in 2009. 
%He won the SIAM student paper prize in 2004 and the Robert-Sauer-Award of the Bavarian Academy of Science in 2008 for his works on Jacobi-type methods on Lie algebras. 
Since 2016, he is leading the Data Science Group at Mercateo AG, Munich. 
\end{IEEEbiography}

%\end{nolinenumbers}
%
%% if you will not have a photo at all:
%\begin{IEEEbiographynophoto}{John Doe}
%Biography text here.
%\end{IEEEbiographynophoto}
%
%% insert where needed to balance the two columns on the last page with
%% biographies
%%\newpage
%
%\begin{IEEEbiographynophoto}{Jane Doe}
%Biography text here.
%\end{IEEEbiographynophoto}
%
%% You can push biographies down or up by placing
%% a \vfill before or after them. The appropriate
%% use of \vfill depends on what kind of text is
%% on the last page and whether or not the columns
%% are being equalized.

%\vfill

% Can be used to pull up biographies so that the bottom of the last one
% is flush with the other column.
%\enlargethispage{-5in}

\appendix
%%%%%%%%%%%%%%%%%%%%%%%%%%%%%%%%%%%%%%%%%%%%%%%%%%%%%%%%%%%%%%%%%%%
%%                           APPENDIX A                          %%
%%%%%%%%%%%%%%%%%%%%%%%%%%%%%%%%%%%%%%%%%%%%%%%%%%%%%%%%%%%%%%%%%%% 
\section{Proof of Proposition~1}
\label{app:01}
Note, that the numbering of equations in Appendix continues from 
the numeration in the manuscript.
The proof of Proposition~1 requires the following two lemmas.
\setcounter{equation}{62}

\begin{lemma}
	\label{lem:01}
	Let $g \colon \mathbb{R}^{r} \to \mathbb{R}$ satisfy
	Assumption~1.
	Then, for a given $\x \in \mathbb{R}^{m}$ and a dictionary 
	$\D \in \mathbb{R}^{m \times r}$, a vector $\phib^{*} := 
	[\varphi_{1}^{*}, \ldots, \varphi_{r}^{*}]^{\top} \in \mathbb{R}^{r}$ is
	the unique solution to the sparse regression problem as in 
	Eq.~(5), if and only if, the following conditions
	hold
	\begin{equation}
	\left\{\!\!
	\begin{array}{ll}
	\d_{i}^{\top} (\x - \D \phib^{*}) = \nabla g_{i}(\varphi_{i}^{*}),
	& \quad~\text{for}~\varphi_{i} \neq 0, \\[0.5mm]
	\d_{i}^{\top} (\x - \D \phib^{*}) \in \partial g_{i}(0),
	& \quad~\text{for}~\varphi_{i} = 0.
	\end{array}
	\right.
	\end{equation}
	Here $\partial g_{i}(0) = [-b_{1},b_{2}]$ with $b_{1},b_{2} > 0$ is 
	the subgradient of element-wise regulariser $g_{i}$ at $0$.
\end{lemma}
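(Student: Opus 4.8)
The plan is to read Lemma~\ref{lem:01} as nothing more than the first-order (Fermat) optimality condition for the convex program in Eq.~(5), written out coordinate by coordinate. First I would record the structural facts: $f_{\x}(\phib) = \tfrac12\|\x - \D\phib\|_2^2 + g(\phib)$ is the sum of a finite, differentiable, convex quadratic and the convex regulariser $g$, hence proper, lower semicontinuous and convex; and since $g$ is \emph{strictly} convex by Assumption~\ref{ass:01}, $f_{\x}$ is strictly convex, so it possesses at most one minimiser. This already isolates the ``unique solution'' part of the claim, independently of the characterising conditions.

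For the characterisation itself, the key fact is that a point minimises a convex function globally exactly when $0$ lies in its subdifferential, so $\phib^{*}$ solves Eq.~(5) iff $0 \in \partial f_{\x}(\phib^{*})$. Because the quadratic term is finite everywhere (indeed smooth), the Moreau--Rockafellar sum rule applies with no extra constraint qualification and gives
\begin{equation*}
	\partial f_{\x}(\phib^{*}) = -\D^{\top}(\x - \D\phib^{*}) + \partial g(\phib^{*}),
\end{equation*}
so the optimality condition reads $\D^{\top}(\x - \D\phib^{*}) \in \partial g(\phib^{*})$. Separability of $g$, i.e.\ $g(\phib) = \sum_{i} g_{i}(\varphi_{i})$, makes this subdifferential a Cartesian product, $\partial g(\phib^{*}) = \partial g_{1}(\varphi_{1}^{*}) \times \cdots \times \partial g_{r}(\varphi_{r}^{*})$, so the vector inclusion decouples into the $r$ scalar inclusions $\d_{i}^{\top}(\x - \D\phib^{*}) \in \partial g_{i}(\varphi_{i}^{*})$. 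A two-way case split then settles both directions at once: if $\varphi_{i}^{*} \neq 0$, Assumption~\ref{ass:01} guarantees $g_{i}$ is differentiable there, so $\partial g_{i}(\varphi_{i}^{*}) = \{\nabla g_{i}(\varphi_{i}^{*})\}$ and the inclusion becomes exactly the stated equality; if $\varphi_{i}^{*} = 0$ the inclusion is exactly $\d_{i}^{\top}(\x - \D\phib^{*}) \in \partial g_{i}(0)$.

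What is left is to justify the shape $\partial g_{i}(0) = [-b_{1}, b_{2}]$ with $b_{1}, b_{2} > 0$. Here I would appeal to one-dimensional convex analysis: for a convex $g_{i} \colon \mathbb{R} \to [0, +\infty)$ the subdifferential at $0$ is the closed interval $[\,g'_{i}(0^{-}),\, g'_{i}(0^{+})\,]$ between the one-sided derivatives; it contains $0$ because $g_{i} \ge 0 = g_{i}(0)$ makes the origin a minimiser of $g_{i}$, and it is nondegenerate because $g_{i}$ is \emph{not} differentiable at the origin (Assumption~\ref{ass:01} asserts differentiability only away from $0$); monotonicity of the derivative together with $g'_{i}(x) \neq 0$ for $x \neq 0$ fixes the strict signs, and for the symmetric regularisers used in practice one in fact obtains $b_{1} = b_{2} > 0$. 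I expect the least-squares bookkeeping to cause no difficulty whatsoever; the only place that genuinely needs care is the behaviour at the origin --- getting the sign of $\D^{\top}(\x - \D\phib^{*})$ right in the sum rule, and pinning $\partial g_{i}(0)$ down tightly enough to assert $b_{1}, b_{2} > 0$ rather than merely $b_{1}, b_{2} \ge 0$, which is precisely where the non-differentiability clause of Assumption~\ref{ass:01} is used.
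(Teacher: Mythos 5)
Your proposal is correct and follows essentially the same route as the paper's proof: the subdifferential optimality condition $0\in\partial f_{\x}(\phib^{*})$, decoupled coordinate-wise via separability of $g$, with $\partial g_{i}(\varphi_{i}^{*})$ collapsing to the singleton $\{\nabla g_{i}(\varphi_{i}^{*})\}$ off the origin. You are merely more explicit than the paper about the Moreau--Rockafellar sum rule and about why $\partial g_{i}(0)$ is a nondegenerate interval containing $0$ (a point the paper asserts without argument), which is a welcome tightening rather than a departure.
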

\begin{proof}
	Since the cost function Eq.~(5) is strictly convex as a result of $g$
	being strictly convex, the
	unique solution $\phib^{*}$ satisfies the subdifferential optimality condition, i.e.,
	\begin{equation}
	0 \in \partial f_{s}(\phib^{*}),
	\end{equation}
	where $\partial f_{s}(\phib^{*})$ is the subdifferential of $f_{s}$ at $\phib^{*}$.
	By the construction that the cost function is separable, the 
	subdifferential of the cost function is computed as
	\begin{equation}
	\d_{i}^{\top} (\x - \D \phib^{*}) \in \partial g_{i}(\varphi_{i}).
	\end{equation}
	By the fact that $\partial g_{i}(\varphi_{i}) = \nabla_{\!g_{i}\!}(\varphi_{i})$ for
	all $\varphi_{i}^{*} \neq 0$, the result follows.
\end{proof}

\begin{lemma}
	\label{lem:02}
	Let $g \colon \mathbb{R}^{r} \to \mathbb{R}$ satisfy
	Assumption~1, and Assumption~2 hold true.
	Then, for a given $\x \in \mathbb{R}^{m}$ and a dictionary 
	$\D \in \mathbb{R}^{m \times r}$, the unique solution $\phib^{*} := 
	[\varphi_{1}^{*}, \ldots, \varphi_{r}^{*}]^{\top} \in \mathbb{R}^{r}$ 
	to the sparse regression problem as in 
	Eq.~(5) is smooth in an open 
	neighborhood around $\D^{*}$.
\end{lemma}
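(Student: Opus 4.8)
The plan is to reduce the sparse regression problem, on its active set, to a smooth finite-dimensional stationarity equation and then invoke the implicit function theorem. Fix the given dictionary, written $\D^{*}$ in the statement, let $\phib^{*}$ be the (unique, by strict convexity) minimiser, and let $\L := \L(\x,\D^{*})$ be its support with $k := |\L|$. By Lemma~\ref{lem:01}, $\phib^{*}$ is completely characterised by
\begin{equation*}
\d_{i}^{\top}(\x - \D^{*}\phib^{*}) = g_{i}'(\varphi_{i}^{*})~\text{for}~i \in \L, \qquad \varphi_{i}^{*} = 0~\text{and}~\d_{i}^{\top}(\x - \D^{*}\phib^{*}) \in \partial g_{i}(0)~\text{for}~i \notin \L .
\end{equation*}
First I would freeze the inactive block at zero and treat the reduced unknown $\phib_{\L} \in \mathbb{R}^{k}$.

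Next I would consider the map $F(\D,\phib_{\L}) := \D_{\L}^{\top}(\x - \D_{\L}\phib_{\L}) - \nabla g_{\L}(\phib_{\L})$, which is defined and $C^{1}$ (indeed as smooth as $g$ away from the origin) on a neighbourhood of $(\D^{*},\phib_{\L}^{*})$, precisely because $\varphi_{i}^{*}\neq 0$ for $i \in \L$ keeps us off the kinks of $g$. Its partial differential with respect to $\phib_{\L}$ is $-K$, where $K := \D_{\L}^{\top}\D_{\L} + \mathsf{H}g_{\L}(\phib_{\L})$ and $\mathsf{H}g_{\L}$ denotes the diagonal Hessian of the separable regulariser restricted to the support; note $K$ is exactly the Hessian at $\phib_{\L}^{*}$ of the strictly convex reduced objective $\phib_{\L}\mapsto \tfrac12\|\x - \D_{\L}\phib_{\L}\|_{2}^{2} + g_{\L}(\phib_{\L})$. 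Once $K$ is shown invertible, the implicit function theorem yields an open neighbourhood of $\D^{*}$ and a unique smooth map $\D \mapsto \phib_{\L}(\D)$ with $\phib_{\L}(\D^{*}) = \phib_{\L}^{*}$ and $F(\D,\phib_{\L}(\D)) \equiv 0$.

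It then remains to verify that this smooth branch really is the minimiser $\phib_{\D}(\x)$ for $\D$ near $\D^{*}$, i.e.\ that the active set does not change. Let $\widetilde{\phib}(\D)$ equal $\phib_{\L}(\D)$ on $\L$ and $0$ off $\L$: the Lemma~\ref{lem:01} conditions on $\L$ hold by construction, and $\varphi_{i}(\D)\neq 0$ for $i\in\L$ by continuity; for $i \notin \L$ one needs $\d_{i}^{\top}(\x - \D\widetilde{\phib}(\D)) \in \partial g_{i}(0) = [-b_{1},b_{2}]$, which follows from continuity provided the corresponding inclusion is strict at $\D^{*}$. Since the minimiser is unique, $\phib_{\D}(\x) = \widetilde{\phib}(\D)$, so $\phib_{\D}(\x)$ is smooth in an open neighbourhood of $\D^{*}$; this is the local statement from which Proposition~\ref{prop:01} is assembled (Assumption~\ref{ass:02} serving to ensure every atom lies in some support, so that the resulting derivative is informative).

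The two delicate points — and hence where I expect the work to concentrate — are: (i) stability of the active set under perturbations of $\D$, which hinges on strict complementarity of the inactive constraints at $\D^{*}$; and (ii) invertibility of $K = \D_{\L}^{\top}\D_{\L} + \mathsf{H}g_{\L}$. For (ii) strict convexity of $g$ alone only gives $K \succeq 0$, and one must additionally exploit either the curvature hypothesis $g_{i}''>0$ off the origin used later in Proposition~\ref{prop:02} (which makes $\mathsf{H}g_{\L}\succ 0$ outright) or full column rank of $\D_{\L}$ (which the coherence regulariser $g_{c}$ promotes) to conclude $K \succ 0$. Everything else is routine bookkeeping with the implicit function theorem.
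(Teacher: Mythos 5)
Your proposal is correct and follows essentially the same route as the paper: apply the implicit function theorem to the first-order stationarity system on the active set (the paper augments it to a full $r$-dimensional map $\eta$ with an identity block on the inactive coordinates rather than restricting to $\mathbb{R}^{k}$, but that is cosmetic), then argue that the support is locally stable so the smooth branch coincides with the unique minimiser. The two delicate points you flag --- strict complementarity of the inactive constraints and invertibility of $K=\D_{\L}^{\top}\D_{\L}+\mathsf{H}g_{\L}(\phib_{\L})$ --- are precisely the ones the paper handles via the $\epsilon<\min\{b_{1},b_{2}\}$ gap and by explicitly invoking either a spark condition on $\D$ or non-degeneracy of $\mathsf{H}g_{\L}$ to make the Jacobian full rank.
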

\begin{proof}
	Recall the definition of support $\mathfrak{P}(\x,\D^{*})$ as in Eq.~(42),
	we define its complement set as 
	\begin{equation}
	\mathfrak{Q}(\x,\D^{*}) := \{i\in\{1,\ldots,r\} |
	\varphi^{*}_{i} = 0\}.
	\end{equation}
	Let $k = |\mathfrak{P}(\x,\D)|$, $\phib_{\L} \in \mathbb{R}^{k}$ 
	and $\D_{\L} \in \mathbb{R}^{m \times k}$ being the subset of 
	$\D \in \mathbb{R}^{m \times r}$, in which 
	the indices of columns fall into the support $\L$.
	We construct a function as
	\begin{equation}
	\begin{split}
	\eta& \colon \mathbb{R}^{m \times r} \times \mathbb{R}^{r} \to \mathbb{R}^{r}, 
	\\[-1mm]
	\eta&(\D,\phib) :=  \D_{[\L,\J]}^{\top} (\x - \D_{[\L,\J]} \phib) - 
	\begin{bmatrix} 
	\nabla g_{\L}(\phib_{\L}) \\ \phib_{\J} 
	\end{bmatrix}, \\[-1mm]
	\end{split}
	\end{equation}
	with $\phib = [\phib_{\L}^{\top}~\phib_{\J}^{\top}]^{\top}$,
	$\phib_{\L} \in \mathbb{R}^{k}$, and
	$\phib_{\J} \in \mathbb{R}^{r-k}$.
	The function $\eta$ is smooth in both $\D$ and $\phib$, and
	$\eta(\D^{*},\phib^{*}) = 0$.
	Taking the directional derivative of $\eta$ with respect to $\D$ and $\phib$ leads to
	\begin{equation}
	\begin{split}
	& \Dr \eta(\D, \phib)(\H_{\D},\mathbf{h}_{\phib}) \\[-1mm]
	= & \underbrace{
		\begin{bmatrix}
		\chi_{1}(\D_{[\L,\J]},\phib)
		&\!\!
		\chi_{2}(\D_{[\L,\J]},\phib)
		\end{bmatrix}
	}_{=: J_{\eta}(\D, \phib) \in \mathbb{R}^{r \times (r+m\cdot r)}}
	\cdot
	\begin{bmatrix}
	\mathbf{h}_{\phi} \\
	\operatorname{vec}(\mathbf{H}_{\D})
	\end{bmatrix}, \\[-1mm]
	\end{split}
	\end{equation}
	where \vspace{-1mm}
	\begin{equation}
	\chi_{1}(\D_{[\L,\J]},\phib) \!:=\! 
	\begin{bmatrix}
	\mathsf{H} g_{\L}(\phib_{\L}) \!&\! \mathbf{0}_{k,r-k} \\
	\mathbf{0}_{r-k,k} \!&\! \I_{r-k}
	\end{bmatrix}
	\!+\! \D_{[\L,\J]}^{\top} \D_{[\L,\J]},
	\end{equation}
	and $\chi_{2}(\D_{[\L,\J]},\phib) \in \mathbb{R}^{r \times (m \cdot r)}$ is a 
	tedious term without clear knowledge on its rank.
	Here, $\mathsf{H} g_{\L}(\phib_{\L}) \in \mathbb{R}^{k \times k}$ is the Hessian 
	matrix of the separable sparsifying function restricted on the support $\L$, i.e., 
	$g_{\L} := \{g_{i}\} \in \mathbb{R}^{k}$ with $i \in \L$.
	The matrix $J_{\eta}(\D, \phib)$ is known as the \emph{Jacobian matrix} of $\eta$.
	Our aim is to ensure the full rankness of the Jacobian matrix to apply 
	the implicit function theorem \cite{aman:book08}.
	Often, it is assumed that $m \le r$.
	Let $\kappa(\D)$ be the \emph{spark} of $\D$, i.e., the smallest number of 
	columns from $\D$ that are linearly dependent \cite{elad:book10}.
	If $k < \kappa(\D)$, then the Jacobian matrix $J_{\eta}(\D, \phib)$ is of full rank.
	Unfortunately, the spark is often very difficult to control during an optimization 
	procedure.
	Thus, a simple but general fix to make $J_{\eta}(\D, \phib)$ have full rank 
	is to have Hessian $\mathsf{H} g_{\L}(\phib_{\L})$ non-degenerate.
	In other words, by Assumption~2 and the implicit function theorem, 
	there exist two open neighborhood containing $\D^{*}$ and $\phib^{*}$, i.e.,
	$\D^{*} \in \mathfrak{U}$ and $\phib^{*} \in \mathfrak{W}$, and
	a unique continuously differentiable function $\widehat{\phib} \colon 
	\mathfrak{U} \to \mathfrak{W}$, so that $\widehat{\phib}(\D^{*}) = \phib^{*}$ and 
	$\eta(\D,\widehat{\phib}(\D)) = 0$ for all $(\D,\phib) \in \mathfrak{U}
	\times \mathfrak{W}$.
	
	Let us denote $\widehat{\phib}(\D) := [\widehat{\varphi}_{1}(\D), \ldots, 
	\widehat{\varphi}_{r}(\D)]^{\top} \in \mathbb{R}^{r}$.
	Since $\widehat{\phib}$ is continuously differentiable in $\D$ and 
	$\widehat{\phib}(\D^{*}) = \phib^{*}$, there exists an open subset 
	$\mathfrak{U}_{\epsilon} \subset \mathfrak{U}$, so that the following
	holds true with a gap $\epsilon > 0$ and $\epsilon < \min \{b_{1}, b_{2}\}$
	\vspace{-1mm}
	\begin{equation}
	\left\{\!\!
	\begin{array}{ll}
	|\widehat{\varphi}_{i}(\D) - \varphi_{i}^{*}| < \epsilon, & 
	\text{~for~}i \in \mathfrak{P};
	\\[0.5mm]
	|\widehat{\varphi}_{i}(\D)| < \epsilon, & \text{~for~}i \notin \mathfrak{P}.
	\end{array}
	\right.
	\end{equation}
	We then construct the following projection
	\begin{equation}
	\Pi_{\L} \colon \mathbb{R}^{r} \to \mathbb{R}^{r},
	\qquad
	\phib \mapsto \widetilde{\phib} := [\widetilde{\varphi}_{1}, \ldots, 
	\widetilde{\varphi}_{r}]^{\top}, \vspace{-1mm}
	\end{equation}
	where
	\begin{equation}
	\widetilde{\varphi}_{i} := \left\{
	\begin{array}{ll}
	\varphi_{i}, \qquad & \text{~for~}\varphi_{i} \notin \partial g_{i}(0);
	\\[0.5mm]
	0, & \text{~for~}\varphi_{i} \in \partial g_{i}(0).
	\end{array}
	\right.
	\end{equation}
	It is clear that the projection $\Pi_{\L}$ does not change support, and 
	is hence smooth in $\phib$.
	Consequently, the composition $\Pi_{\L}(\widehat{\phib}(\D))$ is a smooth function in $\D$,
	and is the unique solution of the sparse regression problem for given $(\D, \x)$ 
	with $\D \in \mathfrak{U}_{\epsilon}$ by Lemma~\ref{lem:01}, 
	i.e., $\Pi_{\L}(\widehat{\phib}(\D)) = \phib^{*}(\D)$. 
	Thus, the result follows.
\end{proof}

\noindent Finally, the proof of Proposition~1 is straightforward.
\begin{proof}
	For each sample $\x_{i}$ with $i = 1, \ldots, n$, let us denote by $\phib_{i}^{*}$
	the sparse representation with respect to a common dictionary $\D^{*}$.
	We further denote by $\mathfrak{U}_{\epsilon_{i}}$
	an open neighborhood containing $\D^{*}$, so that the unique
	solution $\phib_{i}^{*}(\D)$ is smooth.
	Then a finite number of unions of sets $\mathfrak{U}_{\epsilon_{i}}$ is a non-empty 
	set as \vspace{-1mm}
	\begin{equation}
	\mathfrak{U}_{\epsilon^{*}} := \bigcap_{i=1}^{n} \mathfrak{U}_{\epsilon_{i}}
	\end{equation}
	with $\epsilon^{*} := \min_{i} \epsilon_{i}$.
	It is straightforward to conclude that the complete collection of 
	$\Phib^{*} = [\phib_{1}^{*},\ldots,\phib_{n}^{*}]$ is smooth in 
	$\mathfrak{U}_{\epsilon^{*}}$.
\end{proof}

In the rest of this appendix, we show a lemma, which enables development of
gradient based algorithms.
\begin{customprop}{2}
	\label{prop:02}
	Let $g \colon \mathbb{R}^{r} \to \mathbb{R}$ satisfy
	Assumption~1, and 
	for a given $\x \in \mathbb{R}^{m}$ and a dictionary 
	$\D^{*} \in \mathbb{R}^{m \times r}$, let $\phib^{*} \in \mathbb{R}^{r}$ be the 
	unique solution to the sparse regression problem as in Eq.~(5). 
	Then, the first derivative of $\phib_{\L}^{*}$ has a 
	close form expression as
	\begin{equation}
	\label{eq:derivative_sparse}
	\operatorname{D}\!\phi_{\L}^{*}(\D_{\L})\H =\!
	\big( K(\D_{\L}) \big)^{\!\!\!^{-1}} \!\! \Big(\! \H^{\top}\! \x \!-\! 
	\big( \H^{\!\top}\! \D_{\L} 
	\!+\! \D_{\L}^{\top} \H \big) \phib_{\L}^{*} \!\Big).\!
	\end{equation}
\end{customprop}
\begin{proof}
	Recall the results from Lemma~\ref{lem:02}, the unique solution 
	$\phib^{*} = [\varphi_{1}^{*}, \ldots, \varphi_{r}^{*}]^{\top} \in 
	\mathbb{R}^{r}$ is smooth in an open neighborhood around $\D^{*}$. 
	Since the support $\L(\x_{i},\D^{*})$ stays unchanged, the vector $\phib_{\L}$
	is smooth in an appropriate open neighborhood containing $\D_{\L}$,
	i.e., we can compute
	\begin{equation}
	\label{eq:critical}
	\nabla_{g_{\L}}(\phib_{\L}) = \D_{\L}^{\top} 
	(\x - \D_{\L} \phib_{\L}).
	\end{equation}
	We then take the derivative on the both sides of Eq.~\eqref{eq:critical} with
	respect to $\D_{\L}$ in direction $\H \in T_{\D_{\L}}\S(m, k)$ as
	\begin{equation}
	\label{eq:derivative2}
	\begin{split}
	\!\!\!\operatorname{D}\! \left(\nabla_{\!g_{\L}}(\phib_{\L}(\D_{\L})) \right)\!\H =\, &
	\H^{\top}\! (\x - \D_{\L} \phib_{\L})\!-\!\D_{\L}^{\top} \H \phib_{\L}\\
	&  \!-\! 
	\D_{\L}^{\top} \D_{\L} \operatorname{D}\phib_{\L}(\D_{\L})\H.
	\end{split}
	\end{equation}
	The left hand side in Eq.~\eqref{eq:derivative2} can be computed by
	\begin{equation}
	\label{eq:hessian}
	\operatorname{D}\! \left(\nabla_{\!g_{\L}}(\phib(\D)) \right)\!\H
	= \mathsf{H} g_{\L}(\phib_{\L})
	\cdot \operatorname{D}\phib_{\L}(\D_{\L}) \H,
	\end{equation}
	where $\mathsf{H} g_{\L} (\phi_{\L}) \in \mathbb{R}^{k \times k}$ is the 
	Hessian matrix of function $g_{\L}$.
	It is positive definite by Assumption~1.
	Substituting Eq.~\eqref{eq:hessian} into Eq.~\eqref{eq:derivative2} leads to
	a linear equation in $\operatorname{D}\phi_{\L}(\D_{\L})\H$ as
	\begin{equation}
	\label{eq:derivative_sparse_imp}
	K(\D_{\L}) \cdot \operatorname{D}\phib_{\L}(\D_{\L}) \H 
	\!=\! \H^{\top} \x - \big( \H^{\top} \D_{\L} + \D_{\L}^{\top} \H \big)
	\phib_{\L}.
	\end{equation}
	where $K(\D_{\L}) := \mathsf{H} g_{\L}(\phib_{\L}) + \D_{\L}^{\top} \D_{\L}$ 
	is positive definite.
	%
	%If the choice of regulariser $g$ ensures the invertibility of $K(D_{\L})$, then 
	Thus, the closed form expression as in Eq.~\eqref{eq:derivative_sparse} follows.
\end{proof}

% Generated by IEEEtran.bst, version: 1.13 (2008/09/30)

% that's all folks
\end{document}